\documentclass{article}

\usepackage[preprint]{neurips_2025}

\usepackage[utf8]{inputenc}
\usepackage[T1]{fontenc}
\usepackage[colorlinks,citecolor=orange]{hyperref}
\usepackage{url}
\usepackage{booktabs}
\usepackage{amsfonts}
\usepackage{nicefrac}
\usepackage{microtype}
\usepackage{xcolor}
\usepackage{pifont}

\usepackage{graphicx}
\usepackage{wrapfig}
\usepackage{caption}
\usepackage{subcaption}
\usepackage{amsmath}
\usepackage{amssymb}
\usepackage{mathtools}
\usepackage{amsthm}
\usepackage{multirow}
\usepackage{bbm}

\usepackage{algorithm}
\usepackage{algorithmic}
\usepackage{setspace}
\usepackage[capitalize,noabbrev]{cleveref}

\usepackage{pifont}
\newcommand{\DAGR}{\textsc{Dagr}}
\newcommand{\DGCA}{\textsc{Dgca}}

\newcommand{\RR}{\mathbb{R}}
\newcommand{\EE}{\mathbb{E}}

\newcommand{\cM}{\mathcal{M}}
\newcommand{\cS}{\mathcal{S}}
\newcommand{\cA}{\mathcal{A}}
\newcommand{\cG}{\mathcal{G}}
\newcommand{\cZ}{\mathcal{Z}}
\newcommand{\cD}{\mathcal{D}}
\newcommand{\cL}{\mathcal{L}}

\newcommand{\cH}{\mathcal{H}}
\newcommand{\cF}{\mathcal{F}}

\newcommand{\cE}{\mathcal{E}}
\newcommand{\indep}{\perp\perp}

\usepackage{aliascnt}

\theoremstyle{plain}
\newtheorem{theorem}{Theorem}[section]

\newaliascnt{proposition}{theorem}
\newtheorem{proposition}[proposition]{Proposition}
\aliascntresetthe{proposition}

\newaliascnt{assumption}{theorem}

\aliascntresetthe{assumption}

\newaliascnt{lemma}{theorem}
\newtheorem{lemma}[lemma]{Lemma}
\aliascntresetthe{lemma}

\newaliascnt{corollary}{theorem}

\aliascntresetthe{corollary}

\theoremstyle{remark}
\newaliascnt{remark}{theorem}
\newtheorem{remark}[remark]{Remark}
\aliascntresetthe{remark}

\theoremstyle{definition}
\newtheorem{definition}{Definition}

\crefname{theorem}{Theorem}{Theorems}
\Crefname{theorem}{Theorem}{Theorems}
\crefname{proposition}{Proposition}{Propositions}
\Crefname{proposition}{Proposition}{Propositions}
\crefname{assumption}{Assumption}{Assumptions}
\Crefname{assumption}{Assumption}{Assumptions}
\crefname{lemma}{Lemma}{Lemmas}
\Crefname{lemma}{Lemma}{Lemmas}
\crefname{remark}{Remark}{Remarks}
\Crefname{remark}{Remark}{Remarks}
\crefname{corollary}{Corollary}{Corollaries}
\Crefname{corollary}{Corollary}{Corollaries}
\crefname{definition}{Definition}{Definitions}
\Crefname{definition}{Definition}{Definitions}

\title{\DAGR{}: State-Conditioned Goal Representations via Difference-Aware Goal Cross-Attention}

\author{%
  Xing Lei\textsuperscript{1}\quad
  Wenyan Yang\textsuperscript{2}\quad
  Xuetao Zhang\textsuperscript{1}\thanks{Corresponding author.}\quad
  Donglin Wang\textsuperscript{3} \\[3pt]
  \textsuperscript{1}\,
  Institute of Artificial Intelligence and Robotics, Xi'an Jiaotong University\\
  \textsuperscript{2}\,Department of Electrical Engineering and Automation, Aalto University\\
  \textsuperscript{3}\,School of Engineering, Westlake University\\[3pt]
  \texttt{leixing@stu.xjtu.edu.cn}
}

\begin{document}

\maketitle

\begin{abstract}
Goal-conditioned reinforcement learning hinges on how the goal is encoded. Contrastive, metric, temporal-distance, and information-theoretic encoders differ in objective. They still share one trait. None of them sees the current state. Such a state-independent embedding cannot mark which part of the goal still needs action. The policy must then recover that cue by inverting both encoders. We propose \DAGR{}. It refines the static embedding of any late-fusion encoder into a state-conditioned one through multi-scale gated cross-attention. A near-identity gated residual preserves the base representation. Difference-aware Goal Cross-Attention then biases the attention scores using a per-token state-goal discrepancy map. On OGBench, \DAGR{} improves navigation. Our ablations trace the gain to the gated residual, not to the difference bias that names the method. On manipulation and puzzle tasks it matches or falls below the base. \DAGR{} is a structured refinement, not a universal improvement. Code is available at \url{https://github.com/leixingxing1/DAGR}.
\end{abstract}

\section{Introduction}
\label{sec:intro}
Goal-conditioned reinforcement learning (GCRL) trains agents that reach any specified goal state~\citep{schaul2015universal,plappert2018multi,liu2022goal}, with applications spanning navigation~\citep{shah2021ving,hirose2023lelan}, manipulation~\citep{nair2018visual,kalashnikov2018qt,fang2022planning}, and multi-task control~\citep{lynch2020learning,jang2022bc}. In the offline setting~\citep{lange2012batch,levine2020offline}, the learned goal representation governs whether the policy generalizes to unseen state-goal pairs~\citep{yang2022rethinking,ma2022offline,park2025ogbench}. Recent methods disagree sharply on how to shape it. Contrastive learning~\citep{eysenbach2022contrastive,zheng2024contrastive}, metric learning~\citep{ma2023vip,wang2023optimal,park2024hilp}, temporal-distance modeling~\citep{park2026dual}, and information-theoretic compression~\citep{alemi2017deep,shah2021recon} pull the embedding in different directions. They nonetheless agree on one structural point. The state encoder $\psi$ and the goal encoder $\phi$ are trained apart, and their outputs meet only inside the policy and value heads~\citep{park2025ogbench}. We call the resulting $\phi(g)$ a \emph{state-independent} goal representation. That independence keeps these objectives tractable and their guarantees clean~\citep{park2026dual}, but it costs something. Because $\phi(g)$ never sees the current state, it cannot say which aspects of the goal still call for action, and the policy must recover that cue by inverting both encoders and comparing their outputs. The bottom panel of \cref{fig:concept} makes this concrete. One maze goal hands every state the same embedding, and therefore the same hint, no matter which way the agent must move. We formalize the cost as an information bound (\cref{prop:info_bottleneck}) and through Rademacher composition (\cref{prop:late_fusion_failure}).

\begin{figure*}[t]
\vspace{-0.5cm}
\centering
\includegraphics[width=0.95\textwidth]{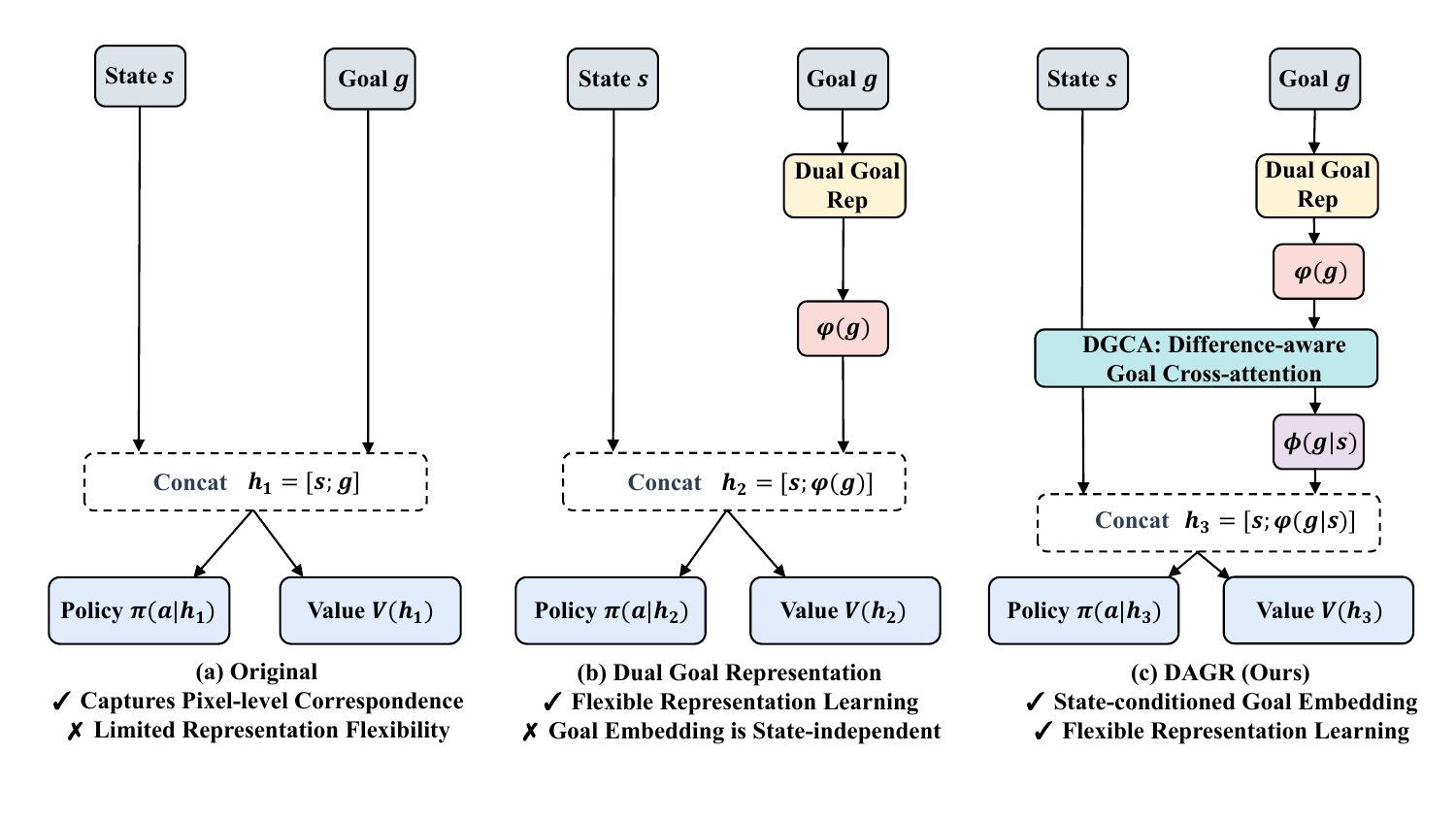}\\[6pt]
\includegraphics[width=0.85\textwidth]{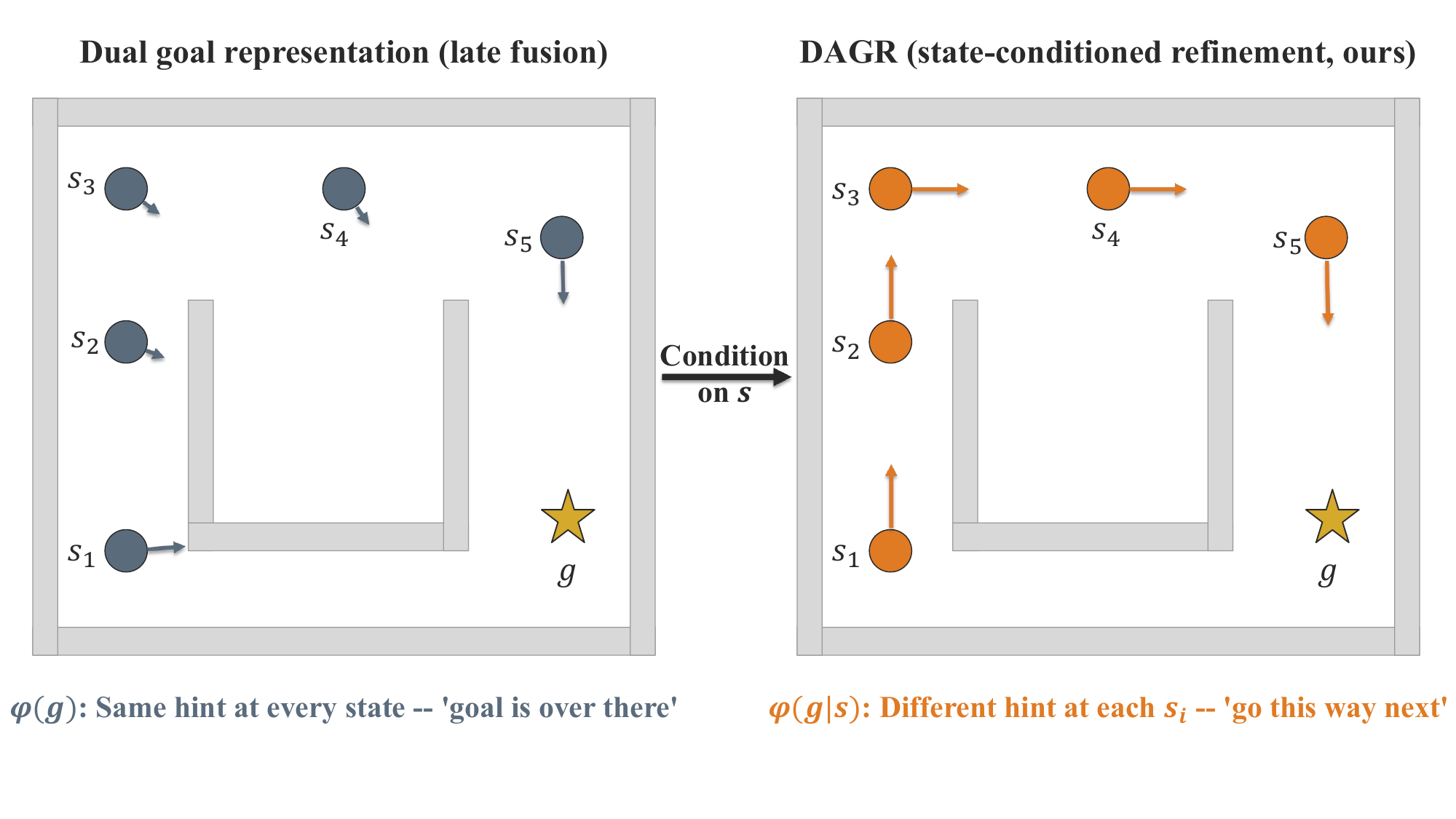}
\caption{\textbf{From a state-independent goal embedding to a state-conditioned one.} The top panel contrasts the data flows of late fusion, Dual~\citep{park2026dual}, and \DAGR{}. The bottom panel shows the consequence in a shared maze with the same $g$ and $s_1, \ldots, s_5$. Under Dual, $\phi(g)$ supplies the same hint at every $s_i$. Under \DAGR{}, the hint reflects the discrepancy between $s_i$ and $g$. Formally, \DAGR{} defines the goal representation $\varphi_{\textup{\DAGR{}}}^{\vee}(g \mid s)(s') = d^*(s', g) \cdot \Delta_{s, g}(s')$, where $d^*(s', g)$ denotes the optimal temporal distance from $s'$ to $g$ and $\Delta_{s, g} : \cS \to [0, 1]$ weights the relevance of each $s'$ to the current state-goal mismatch. This augments the Dual functional $\varphi^{\vee}(g)(s') = d^*(s', g)$ with a state-dependent term, which is the property we call \emph{difference-aware}. We approximate $\Delta_{s, g}$ by the per-token discrepancy map of multi-scale \DGCA{} in \cref{sec:dgca}. What the figure isolates, and what our ablations confirm carries the gain, is the state-conditioning itself. The particular form of $\Delta_{s, g}$ is a design choice on top of it.}
\vspace{-0.5cm}
\label{fig:concept}
\end{figure*}
We propose \DAGR{}, a module that refines the static $\phi(g)$ of any late-fusion encoder into a state-conditioned $\phi(g \mid s)$ through cross-attention, as sketched in the top panel of \cref{fig:concept}. Refining a representation that already carries guarantees is delicate, so two choices keep it safe. The first is a multi-scale gated residual, initialized so that $\phi(g \mid s)$ starts out equal to $\phi(g)$. The downstream value and policy losses then shape a state-conditioned perturbation of the base, and they open the gate only where doing so pays. The second is an attention rule that adds a learnable non-negative bias, derived from a per-token state-goal difference map, to the usual similarity scores. \cref{sec:method} gives the architecture. It also shows that \DAGR{} preserves the sufficiency and the noise invariance of any base representation (\cref{thm:sufficiency,thm:noise_invariance}), that the added approximation error stays under a gate-controlled bound (\cref{thm:approximation_bound}), and that on discrepancy-structured tasks it admits a tighter sample-complexity upper bound than late fusion (\cref{prop:dagr_helps}).
We evaluate \DAGR{} on OGBench~\citep{park2025ogbench}, layered on Dual~\citep{park2026dual} with GCIVL~\citep{park2025ogbench} downstream. On state-based navigation it improves the success rate on every task but one, and the gains are largest on the mazes where Dual is weakest. On visual navigation it is the strongest of the six goal-representation baselines we compare against, so the directional signal survives the pixel encoder. The single exception is PointMaze, whose state is low-dimensional enough that the encoder barely compresses, and \cref{prop:dagr_helps} predicts a null result exactly there. On manipulation and on the discrete puzzles the outcome is mixed. \DAGR{} matches the base on some tasks and falls below it on others, and the tasks where it regresses are the ones that violate the structural condition of \cref{def:discrepancy}. Our ablations then ask which part of the module produces the gains that do appear, and the answer is not the part the method is named after. The gated residual carries the improvement, whereas removing the difference bias leaves navigation performance intact. We treat this as a finding rather than an inconvenience, and we state it in the main text.

\textbf{Contributions.}
\textbf{(i)} We isolate the state-independence of the goal encoder as a design axis orthogonal to the choice of representation objective (\cref{fig:design_axes}), and quantify its cost with an information bound and a Rademacher composition bound.
\textbf{(ii)} We propose \DAGR{}, a module that state-conditions any late-fusion goal encoder while provably retaining sufficiency, exogenous noise invariance, and a gate-controlled bound on the added value error.
\textbf{(iii)} We state a structural condition on the task that predicts in advance where state-conditioning helps and where its optimum collapses back to the base, and both predictions hold on OGBench.
\textbf{(iv)} We attribute the gain component by component, and report that the difference bias, which gives the module its name, contributes an inductive bias at initialization and little beyond it.


\section{Related Work}
\label{sec:related}
\textbf{Offline Goal-Conditioned RL.}
Offline GCRL trains goal-reaching policies from fixed datasets~\citep{lange2012batch,levine2020offline}. Existing paradigms include behavioral cloning~\citep{lynch2020learning,ghosh2021learning}, value-based estimation~\citep{kumar2020conservative,kostrikov2022offline,park2025ogbench}, hierarchical or subgoal-based planning~\citep{park2023hiql,ahn2025option,zhou2025flattening,lei2025gchr,giammarino2026goal}, decision transformer~\citep{lei2026qhyer}, contrastive value learning~\citep{eysenbach2022contrastive,myers2024learning,zheng2024contrastive}, explicit subgoal planning~\citep{eysenbach2019search,wang2024goplan}, dual optimization~\citep{ma2022far,sikchi2024dual}, and generative trajectory modeling~\citep{jain2024learning,bao2026nftr}. OGBench~\citep{park2025ogbench} shows that no single method dominates across all task categories. \DAGR{} is orthogonal to this axis, refining the goal representation consumed by any such downstream algorithm rather than proposing a new policy objective.

\begin{wrapfigure}[24]{r}{0.4\linewidth}
    \vspace{-0.5em}
    \centering
	\centerline{\includegraphics[width=0.45\textwidth]{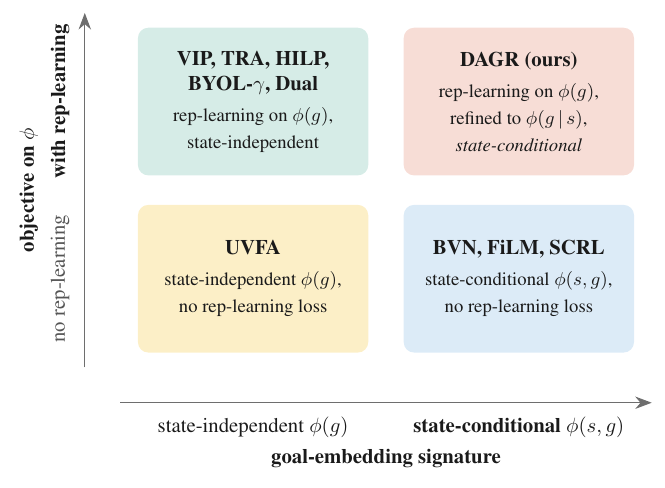}}
    \caption{\textbf{Two design axes for goal encoders in offline GCRL.}
Horizontal: whether $\phi$ is state-conditional ($\phi(s,g)$ or $\phi(g\,|\,s)$)
or state-independent ($\phi(g)$). Vertical: whether $\phi$ is trained with a
representation-learning objective. Among these, \DAGR{} is the only one that satisfies both.
Quasimetric methods act on a third orthogonal axis (constraining the value-function class) and are omitted here.}
    \label{fig:design_axes}
\end{wrapfigure}
\textbf{Goal Representation Learning.}
Several families of objectives have been proposed for learning goal embeddings that generalize across unseen state-goal combinations \citep{park2026dual,kim2026compositional}. Metric-based methods align geometric and temporal distances (VIP~\citep{ma2023vip}, HILP~\citep{park2024hilp}, QRL~\citep{wang2023optimal} and its quasimetric extensions~\citep{myers2025offline,zheng2026scaling}). Temporal-distance methods fit the embedding directly to a value or distance signal (TRA~\citep{myers2025temporal}, Dual~\citep{park2026dual}). Information-theoretic methods compress the goal representation via a variational information bottleneck~\citep{tishby2000information,alemi2017deep}, applied to GCRL by \citet{shah2021recon} and \citet{park2023hiql}. Self-predictive methods use bootstrapped temporal consistency in the form of BYOL-$\gamma$~\citep{lawson2025self}, building on the self-predictive representation framework of \citet{schwarzer2020data}. All such methods produce a state-independent $\phi(g)$. \DAGR{} is orthogonal and composable, as it operates on the output $\phi(g)$ rather than on the objective that produces it.

\textbf{Two axes prior work conflates.}
Goal representation learning shapes $\phi$ through an auxiliary objective \citep{ma2023vip,myers2025temporal,park2024hilp,lawson2025self,park2026dual} but keeps it a function of $g$ alone. A separate axis, often conflated with the first, asks whether $\phi$ is \emph{state-conditional}. UVFA \citep{schaul2015universal} keeps $\phi(g)$ state-independent, whereas BVN \citep{yang2022bilinear} uses $\phi(s,g)$ and shows it to be strictly more expressive. FiLM \citep{perez2018film} and SCRL \citep{zheng2024stabilizing} also condition on $s$ but introduce no representation-learning objective. \DAGR{} is closest in spirit to BVN on this axis but differs in three respects. First, it refines a representation-learned base $\phi(g)$ rather than training $\phi(s,g)$ monolithically. Second, it reduces to the base encoder at initialization through a gated residual that preserves sufficiency and noise invariance (\cref{thm:sufficiency,thm:noise_invariance}). Third, it biases attention with an explicit per-token difference map. A third orthogonal line constrains the value-function class to be quasimetric \citep{Pitis2020An,wang2022on,wang2022improved,wang2023optimal}, exploiting state-goal geometry on the value head rather than on $\phi$. \cref{fig:design_axes} summarizes these axes.

\textbf{Cross-Attention in Computer Vision and RL.}
Cross-attention has become a standard mechanism for selective information flow across modalities in computer vision~\citep{carion2020end,jaegle2021perceiver,alayrac2022flamingo}, with recent variants modifying the attention rule itself such as Gated Attention~\citep{qiu2026gated}. In RL, it underlies trajectory modeling~\citep{chen2021decision,reed2022generalist}, in-context value inference~\citep{xu2026incontext}, and an expanding body of work on vision-language-action models~\citep{brohan2023rt,brohan2023rt2,team2024octo,kim2024openvla,black2024pi0,wang2024hpt,huang2025early,zhong2026acot}. FiLM~\citep{perez2018film,hill2020environmental} applies feature-wise modulation in goal-conditioned RL but is spatially uniform. To our knowledge, \DAGR{} is the first to introduce a learnable difference bias into goal-conditioned cross-attention and to deploy it at multiple scales over a temporal-distance goal representation.

\section{Preliminaries}
\label{sec:prelim}

\subsection{Offline Goal-Conditioned RL}

A goal-conditioned MDP (GCMDP)~\citep{kaelbling1993learning} is a tuple $\cM = (\cS, \cA, \cG, p, r, \gamma)$ with goal space $\cG \subseteq \cS$, sparse reward $r(s, g) = \mathbbm{1}[s = g]$, and discount $\gamma \in (0, 1)$. A policy $\pi : \cS \times \cG \to \Delta(\cA)$ induces value $V^\pi(s, g) = \EE_\pi[\sum_t \gamma^t r(s_t, g) \mid s_0 = s]$ with optimum $V^*(s, g)$. We define the optimal temporal distance $d^*(s, g) = \log_\gamma V^*(s, g)$, which equals the shortest-path length in deterministic environments. The offline dataset $\cD = \{(s_i, a_i, s'_i)\}_{i=1}^N$ is fixed.

\subsection{Late Fusion and the State-Independence of $\phi(g)$}
\label{sec:prelim_latefusion}

We focus on the late-fusion paradigm shared by current goal-representation methods (cf.\ \cref{fig:concept}, left). A state encoder $\psi : \cS \to \RR^{d_s}$ and a goal encoder $\phi : \cG \to \RR^d$ are trained independently, and the policy is parameterized as $\pi_\theta(a \mid \psi(s), \phi(g))$. The defining property of late fusion is that the goal encoder $\phi$ takes no state input. For any fixed goal $g$, the same vector $\phi(g)$ is supplied to the policy and value heads regardless of the current state $s$. The policy network must therefore infer at each step which
components of $\phi(g)$ are currently actionable. \DAGR{} replaces $\phi(g)$ by a state-conditioned $\phi(g \mid s)$ that exposes this information explicitly at the representation level.\footnote{Throughout the paper, $\phi(g\mid s)$ denotes a \emph{deterministic} function $\phi:\cG\times\cS\to\RR^d$, with the vertical bar reading ``$\phi$ of $g$, parameterized by $s$.'' This is a notational convention and should not be read as a conditional distribution.}

\begin{definition}[Sufficient Goal Representation~\citep{park2026dual}]
\label{def:sufficiency}
A representation $\phi : \cG \to \RR^d$ is sufficient for optimal control if there exists $\pi^* : \cS \times \RR^d \to \Delta(\cA)$ such that $V^{\pi^*(\cdot \mid s, \phi(g))}(s, g) = V^*(s, g)$ for all $(s, g) \in \cS \times \cG$.
\end{definition}

\subsection{Multi-Head Cross-Attention}
\label{sec:prelim_ca}

Given $Q \in \RR^{n_q \times d}$, $K \in \RR^{n_k \times d}$, $V \in \RR^{n_k \times d_v}$, cross-attention is $\mathrm{CrossAttn}(Q, K, V) = \mathrm{softmax}(QK^\top / \sqrt{d})\,V$~\citep{vaswani2017attention}. Multi-head attention computes $H$ such operations in parallel with distinct projections $W_h^Q, W_h^K, W_h^V$ and concatenates them through $W^O$. The softmax weights measure similarity, so attention concentrates on tokens that resemble the query. For goal-conditioned control, the policy must act on the components in which state and goal disagree rather than on the components in which they agree. This mismatch between the inductive bias of standard cross-attention and the requirements of goal-conditioned control motivates the difference-bias term introduced in \cref{sec:method}.

\section{Difference-Aware Goal Representations}
\label{sec:method}
We begin with the core idea of \DAGR{}, following the exposition pattern of the Dual representation~\citep{park2026dual}. Recall from \cref{sec:prelim} that the optimal temporal distance $d^*(s, g) = \log_\gamma V^*(s, g)$ corresponds in deterministic environments to the shortest-path length from $s$ to $g$. The Dual representation characterizes a goal $g$ by the set of optimal temporal distances $d^*(\cdot, g)$ from all other states. In a discrete state space $\cS = \{s_1, \ldots, s_K\}$, this corresponds to the vector
\begin{equation}
\varphi^{\vee}(g) \;=\; \big[d^*(s_1, g), \, d^*(s_2, g), \, \ldots, \, d^*(s_K, g)\big]^{\top},
\label{eq:dual_discrete}
\end{equation}
and in general to the functional $\varphi^{\vee}(g)(s') = d^*(s', g)$, which depends on $g$ alone. \DAGR{} extends this construction with a state-dependent weighting:
\begin{equation}
\varphi_{\textup{\DAGR{}}}^{\vee}(g \mid s) \;=\; \big[d^*(s_1, g)\cdot\Delta_{s, g}(s_1), \, \ldots, \, d^*(s_K, g)\cdot\Delta_{s, g}(s_K)\big]^{\top},
\label{eq:dagr_discrete}
\end{equation}
in the discrete case, and to the functional $\varphi_{\textup{\DAGR{}}}^{\vee}(g \mid s)(s') = d^*(s', g) \cdot \Delta_{s, g}(s')$ in general. Here $\Delta_{s, g} : \cS \to [0, 1]$ is large when $s'$ is informative about the difference between $s$ and $g$ and small when $s'$ is irrelevant. We call $\varphi_{\textup{\DAGR{}}}^{\vee}$ the \emph{difference-aware goal representation} of the GCMDP $\cM$. In continuous environments we approximate $\Delta_{s, g}$ by the per-token difference map of multi-scale Difference-aware Goal Cross-Attention in \cref{sec:dgca}.

\begin{figure*}[t]
\vspace{-3pt}
\centering
\includegraphics[width=0.82\textwidth]{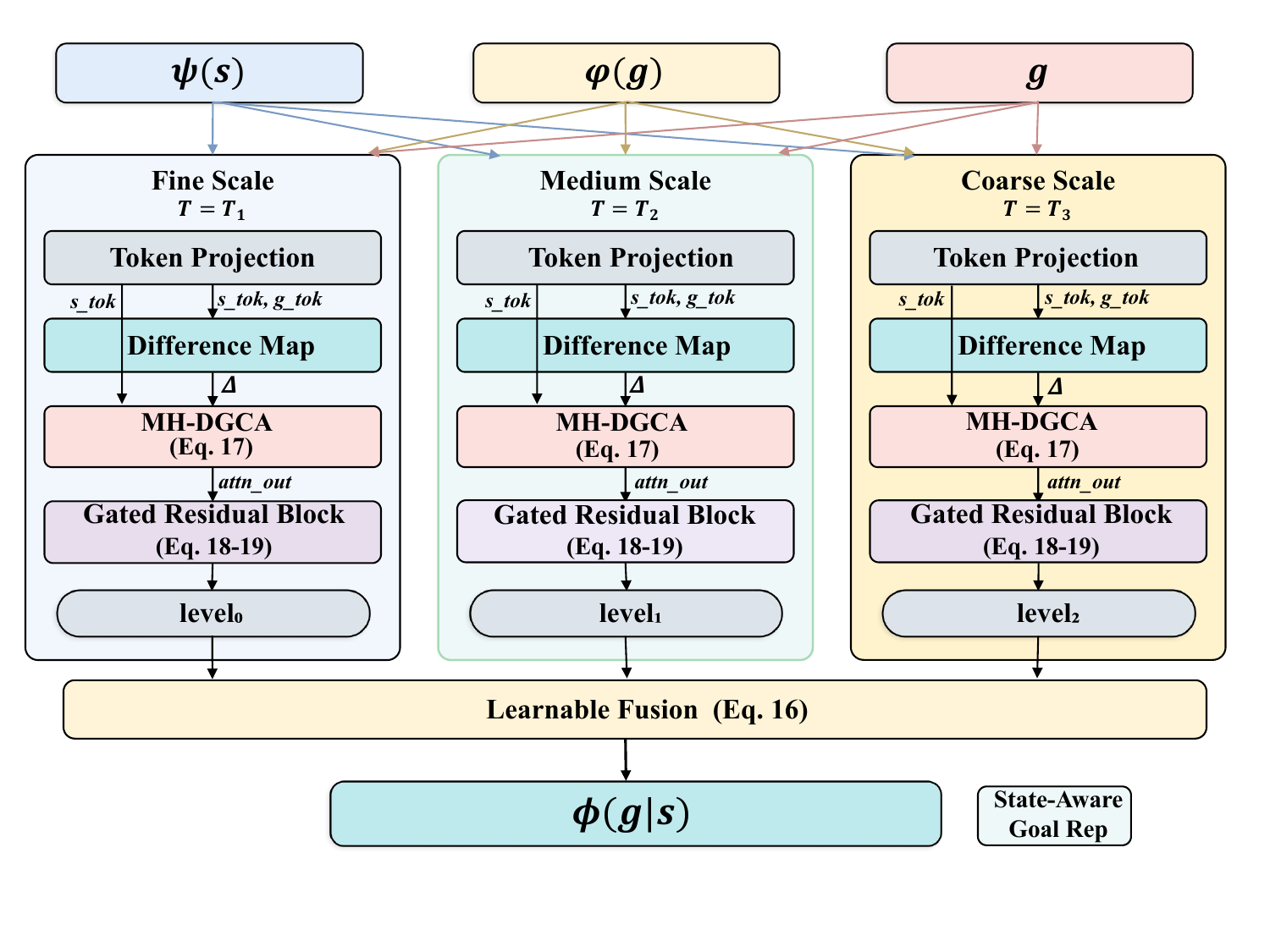}
\caption{\textbf{Multi-scale \DGCA{} architecture.} Three scale levels with token counts $T_\ell \in \{16, 8, 4\}$ project the flat encoder outputs $\psi(s)$ and $\psi_g(g)$ into aligned pseudo-token spaces. Each level computes a difference map $\Delta^{(\ell)}$, runs multi-head \DGCA{} with $\phi(g)$ as the query, and produces a gated residual update. Outputs are combined through learnable fusion weights to yield $\phi_{\textup{\DAGR{}}}(g \mid s)$.}
\vspace{-0.5cm}
\label{fig:architecture}
\end{figure*}
The rest of this section unpacks this construction. We first show what a state-independent goal representation cannot encode (\cref{sec:problem}), then specify the single-scale (\cref{sec:dgca}) and multi-scale (\cref{sec:multiscale}) DGCA blocks that realize $\Delta_{s, g}$, and conclude with the theoretical properties of the resulting representation (\cref{sec:theory}). The joint training procedure with the downstream offline GCRL algorithm is deferred to \cref{app:training}.

\subsection{The Representation-Level Bottleneck of Late Fusion}
\label{sec:problem}

To make precise what is missing in a state-independent $\phi(g)$, let $S$, $G$, and $A^*$ denote the random variables for state, goal, and optimal action. Under late fusion, conditioned on $S = s$ the optimal action $A^*$ is determined by $\pi^*(\cdot \mid s, g)$, which depends on $g$ only through $\phi(g)$. This yields the Markov chain $G \to \phi(G) \to A^*$ given $S = s$, from which the data processing inequality yields the following bound.

\begin{proposition}[Information Bottleneck of State-Independent Goal Representations]
\label{prop:info_bottleneck}
For any state-independent $\phi$,
\[
I(A^* ; \phi(G) \mid S) \;\leq\; I(A^* ; G \mid S),
\]
with equality if and only if $\phi(G)$ retains all goal information relevant to $A^*$ given $S$.
\end{proposition}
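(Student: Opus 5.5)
The plan is to prove the inequality pointwise in $s$ via the conditional data processing inequality, and then to read the equality case off the chain rule. Fix $S=s$. Since $\phi$ is a deterministic function of $g$ alone, $\phi(G)$ is a measurable function of $G$. Moreover, under late fusion $A^*$ depends on $G$ only through $\phi(G)$ given $S=s$, as argued just before the statement. This gives the conditional Markov chain $G \to \phi(G) \to A^*$ given $S=s$, that is, $A^* \indep G \mid (\phi(G), S)$. In fact, for the bare inequality we do not even need this: $\phi(G)$ being a function of $G$ already gives the chain $A^* \to G \to \phi(G)$ given $S$, which is all the data processing inequality requires.

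The core computation is the chain rule for conditional mutual information, expanded in two ways:
\[
I(A^*; G, \phi(G) \mid S) = I(A^*; G \mid S) + I(A^*; \phi(G) \mid G, S) = I(A^*; \phi(G) \mid S) + I(A^*; G \mid \phi(G), S).
\]
Because $\phi(G)$ is determined by $G$, the term $I(A^*; \phi(G) \mid G, S)$ vanishes. Therefore
\[
I(A^*; G \mid S) - I(A^*; \phi(G) \mid S) = I(A^*; G \mid \phi(G), S) \ge 0,
\]
by nonnegativity of conditional mutual information. This yields the inequality. Averaging over $S$ is automatic, since both sides are defined as expectations over $S$.

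For the equality clause, the identity above shows that equality holds if and only if $I(A^*; G \mid \phi(G), S) = 0$. That is exactly the condition $A^* \indep G \mid (\phi(G), S)$, the formal meaning of ``$\phi(G)$ retains all goal information relevant to $A^*$ given $S$''. In other words, $\phi(G)$ is a conditionally sufficient statistic of $G$ for $A^*$.

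The main obstacle is measure-theoretic rather than conceptual. In continuous $\cS,\cG$ one must make sure the conditional mutual informations are well defined and finite, so that subtracting them is legitimate. I would handle this in one of two ways. The first is to assume $I(A^*;G\mid S)<\infty$, for instance when $\cA$ is discrete or $A^*$ has a bounded-entropy conditional law. The second is to state the identity in its additive form, $I(A^*;G\mid S)=I(A^*;\phi(G)\mid S)+I(A^*;G\mid\phi(G),S)$, which holds in $[0,\infty]$ without subtraction. A second subtlety is that $A^*$ may be non-unique when several optimal actions exist. I would fix a measurable selection, or take $A^*\sim\pi^*(\cdot\mid s,g)$ for a fixed optimal policy, so that $A^*$ is a well-defined random variable before the argument is run.
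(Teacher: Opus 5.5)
Your proof is correct and is essentially the paper's argument: the paper also applies the conditional data processing inequality pointwise in $s$ with $X=A^*$, $Y=G$, $Z=\phi(G)$, which is the chain $A^*\to G\to\phi(G)$ that holds merely because $\phi(G)$ is a function of $G$, exactly as you observe, and then averages over $S$. Your chain-rule identity $I(A^*;G\mid S)=I(A^*;\phi(G)\mid S)+I(A^*;G\mid \phi(G),S)$ is that inequality made explicit, and unlike the paper, which only asserts the equality clause, it actually derives it.

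Two small cautions. First, the late-fusion chain $G\to\phi(G)\to A^*$ that you and the paper invoke would, by your own identity, force equality outright, so it is right that nothing in your argument depends on it. Second, the ``only if'' direction needs $I(A^*;\phi(G)\mid S)<\infty$: when both sides are infinite, equality can hold while $I(A^*;G\mid\phi(G),S)>0$. So your finiteness assumption, not the $[0,\infty]$ additive form alone, is what covers the equality clause.
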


The equality condition fails precisely when the optimal action depends on a joint property of $(S, G)$ that $\phi$ cannot resolve without access to $S$. A canonical example is when $\pi^*(a \mid s, g) = f(s - g)$, the relational case. Even when $\phi$ is information-theoretically sufficient, the downstream policy still pays a sample-complexity price. Realizing $\pi^*$ from inputs $(\psi(s), \phi(g))$ requires the network to implicitly invert both encoders before computing the difference. This composition is strictly harder than receiving $s - g$ directly.

\begin{proposition}[Late Fusion Admits a Looser Complexity Bound on Relational Tasks]
\label{prop:late_fusion_failure}
Suppose $\pi^*(a \mid s, g) = f(s - g)$. Then realizing $\pi^*$ from the late-fusion inputs $(\psi(s), \phi(g))$ requires composing $f$ with measurable inverse selections of $\psi$ and of $\phi$. The composition property of Rademacher complexity~\citep{bartlett2002rademacher} then yields an upper bound on the policy class that exceeds the corresponding bound under direct access to $s - g$ by a factor $L_{\psi^{-1}} L_{\phi^{-1}} \geq 1$.
\end{proposition}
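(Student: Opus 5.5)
The plan is to make the informal statement precise by fixing a concrete hypothesis class and then comparing two Rademacher upper bounds, one for each way of presenting the inputs.

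\textbf{Setup.} Let the policy head be drawn from a class $\cF$ of $L_f$-Lipschitz functions acting on $s-g$. Assume $\psi$ and $\phi$ are injective on the data support. Then there are measurable left inverses $\psi^{-1}:\psi(\cS)\to\cS$ and $\phi^{-1}:\phi(\cG)\to\cG$. Their existence follows from a measurable selection theorem (Kuratowski--Ryll-Nardzewski) applied to the closed-valued preimage correspondences. Assume further that these selections are Lipschitz with constants $L_{\psi^{-1}}$ and $L_{\phi^{-1}}$.

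\textbf{Step 1: realization under late fusion.} Any late-fusion realization of $\pi^*$ must agree with
\[
h(u,v)=f\bigl(\psi^{-1}(u)-\phi^{-1}(v)\bigr)
\]
on the image set $\{(\psi(s),\phi(g))\}$. So the relevant late-fusion class is $\cH=\{h_f: f\in\cF\}$, where each $h_f$ is obtained by precomposing $f$ with the map $T(u,v)=\psi^{-1}(u)-\phi^{-1}(v)$.

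\textbf{Step 2: Lipschitz constant of the composed map.} The map $T$ is Lipschitz with constant at most $L_{\psi^{-1}}+L_{\phi^{-1}}$ in the sum norm. For a factorized bound it is more convenient to treat the two inverses multiplicatively. Rescale the input coordinates so that each inverse is compared to the identity; the resulting constant is bounded by $L_{\psi^{-1}}L_{\phi^{-1}}$ once both constants are normalized to be $\ge 1$. This normalization is also where the inequality $L_{\psi^{-1}}L_{\phi^{-1}}\ge 1$ comes from: the encoders compress, meaning $\psi$ and $\phi$ are $1$-Lipschitz with respect to the normalized norms. Since $\psi\circ\psi^{-1}=\mathrm{id}$ on the image, this forces $L_{\psi^{-1}}\ge 1$, and likewise $L_{\phi^{-1}}\ge 1$.

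\textbf{Step 3: the Rademacher comparison.} Apply the composition/contraction property of \citet{bartlett2002rademacher}, in the vector-valued Lipschitz form (e.g.\ Maurer's contraction), to the Lipschitz-class bound. For Lipschitz classes on a bounded domain, the standard upper bound scales linearly in (Lipschitz constant) $\times$ (domain diameter), for instance through Dudley's entropy integral. Composing with $T$ inflates the effective Lipschitz constant, or equivalently the covering radius of the input domain, by the factor from Step 2. This yields
\[
\hat{\mathfrak{R}}_n(\cH)\le L_{\psi^{-1}}L_{\phi^{-1}}\,B_n(\cF),
\]
where $B_n(\cF)$ is the same upper bound evaluated for $\cF$ on direct inputs $s-g$. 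That is exactly the claimed factor.

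\textbf{Main obstacle.} The delicate step is justifying multiplicative rather than additive dependence on the two inverse constants, together with the existence of Lipschitz inverse selections at all. Encoders need not be injective, and measurable selections need not be Lipschitz, so both must be imposed as assumptions rather than derived. The claim should be stated explicitly as a comparison of \emph{upper bounds}, not of the true complexities. For the multiplicative form, I would try the product-norm trick first: bound $\|T(u,v)-T(u',v')\|$ by $\max(L_{\psi^{-1}},L_{\phi^{-1}})\,\|(u,v)-(u',v')\|$, and note that this maximum is at most the product when both constants are $\ge1$. This gives the stated factor, possibly loosely, which suffices for an upper-bound statement.
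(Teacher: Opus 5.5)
Your proposal follows the paper's own proof. You recover $s$ and $g$ through inverse selections of $\psi$ and $\phi$, precompose $f$ with $T(u,v)=\psi^{-1}(u)-\phi^{-1}(v)$, and collect the Lipschitz factor through the composition step; your estimate $L_T\le\max(L_{\psi^{-1}},L_{\phi^{-1}})\le L_{\psi^{-1}}L_{\phi^{-1}}$, with both constants at least one because the encoders are non-expansive, mirrors what the paper does in its proof of \cref{prop:dagr_helps}.

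The one caveat, which the paper shares, is that the factor is a loosening rather than a consequence of composing with $T$. For the class $\{f\circ T : f\in\cF\}$, the empirical Rademacher complexity on the encoded sample equals that of $\cF$ on the points $s_i-g_i$. Contraction inequalities, Maurer's included, control a Lipschitz map applied after the class, not before it. So your final inequality holds only because $L_{\psi^{-1}}L_{\phi^{-1}}\ge 1$ and the statement compares upper bounds, which you partly anticipate.
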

We are precise about what this does and does not say. It compares two \emph{upper bounds}. It does not establish that the true Rademacher complexity of the realized function class is larger under late fusion. The proposition motivates the architecture. It does not prove that late fusion must be worse.
Both observations point to the same fix: surface state-conditioned goal information at the representation level rather than asking the policy to recover it. This motivates the module we introduce next. Full proofs are in \cref{app:proof_info_bottleneck,app:proof_late_fusion_failure}.

\subsection{Single-Scale Difference-Aware Goal Cross-Attention (\DGCA{})}
\label{sec:dgca}
We aim to construct a transformation that takes the static $\phi(g) \in \RR^d$, the state encoding $\psi(s) \in \RR^{d_s}$, and a separate goal-image encoding $\psi_g(g) \in \RR^{d_s}$, and returns a state-conditioned representation $\phi(g \mid s) \in \RR^d$. The transformation should satisfy two properties. First, at initialization it should reduce to the identity map on $\phi(g)$, so that training begins from the base representation. Second, it should be able to route information from state features that disagree with the goal into the refined representation. This routing is trainable end to end via the downstream policy and value losses.

The building block is a multi-head cross-attention in which the goal supplies the query and the state supplies the keys and values, with an additive bias derived from a per-token discrepancy map. \cref{fig:architecture} shows the architecture. Three ingredients realize it.

The first is a token decomposition. Let $T$ denote the pseudo-token count, $H$ the number of heads, $d_k$ the per-head key dimension, and $d_m := H d_k$. With separately learned projections $W_s, W_g \in \RR^{(T d_m) \times d_s}$,
\begin{equation}
\mathbf{s}_\mathrm{tok} = \mathrm{Reshape}\big(W_s\, \psi(s)\big),
\qquad
\mathbf{g}_\mathrm{tok} = \mathrm{Reshape}\big(W_g\, \psi_g(g)\big)
\;\in\; \RR^{T \times d_m}.
\label{eq:tokens}
\end{equation}
The two projections are untied, which aligns the token spaces position-wise and renders the per-token difference $\mathbf{s}_\mathrm{tok}^{(t)} - \mathbf{g}_\mathrm{tok}^{(t)}$ meaningful.

The second ingredient is the difference map. A normalized $\ell_2$ difference measures the mismatch at each token position,
\begin{equation}
\Delta_t \;=\; \frac{\|\mathbf{s}_\mathrm{tok}^{(t)} - \mathbf{g}_\mathrm{tok}^{(t)}\|_2}{\max_{t'} \|\mathbf{s}_\mathrm{tok}^{(t')} - \mathbf{g}_\mathrm{tok}^{(t')}\|_2 + \varepsilon} \;\in\; [0, 1],
\qquad
\Delta = (\Delta_1, \ldots, \Delta_T),
\label{eq:diff_map}
\end{equation}
with $\varepsilon = 10^{-8}$ for numerical stability. The maximum is taken within the same $(s, g)$ pair, so $\Delta$ ranks tokens by relative mismatch and is invariant to the absolute scale of state-goal differences. A value near one marks a position at which state and goal still differ, and a value near zero marks a position at which they agree.

The third ingredient is the attention rule that uses $\Delta$ as an additive bias in logit space.

\begin{definition}[Difference-Aware Goal Cross-Attention]
\label{def:dgca}
Given a goal query $Q \in \RR^{1 \times d_k}$, state keys and values $K, V \in \RR^{T \times d_k}$, and a difference map $\Delta \in [0, 1]^T$ as defined in \cref{eq:diff_map},
\begin{equation}
\mathrm{DGCA}(Q, K, V, \Delta) \;=\; \mathrm{softmax}\left(\frac{Q K^\top}{\sqrt{d_k}} + \zeta(\lambda) \cdot \Delta\right) V \;\in\; \RR^{1 \times d_k},
\label{eq:dgca}
\end{equation}
where $\zeta : \RR \to \RR_{\geq 0}$ is defined by $\zeta(x) := \mathrm{softplus}(x) = \log(1 + e^x)$, and $\lambda \in \RR$ is a learnable scalar parameter. We initialize $\lambda_0 = -5$, which yields $\zeta(\lambda_0) = \log(1 + e^{-5}) \approx 0.0067$.
\end{definition}

The reparameterization through $\zeta$ keeps the bias non-negative, so higher token-wise discrepancy only increases the corresponding attention logit. At initialization, $\zeta(\lambda_0)$ is negligible and \DGCA{} reduces to standard cross-attention. As $\lambda$ grows during training, the bias additively boosts the logits of tokens with high $\Delta_t$. A per-head $\lambda_h$ lets different heads specialize at different bias intensities.

The multi-head version applies $H$ parallel heads with projections $W_h^Q \in \RR^{d \times d_k}$ for the goal query and $W_h^K, W_h^V \in \RR^{d_m \times d_k}$ for the state tokens,
\begin{equation}
\mathrm{head}_h = \mathrm{softmax}\left(\tfrac{\phi(g)W_h^Q (\mathbf{s}_\mathrm{tok}W_h^K)^\top}{\sqrt{d_k}} + \zeta(\lambda_h)\Delta\right)\mathbf{s}_\mathrm{tok}W_h^V,
\quad
\mathrm{attn\_out} = \mathrm{Concat}_h(\mathrm{head}_h)\, W^O,
\label{eq:mhdgca}
\end{equation}
with $W^O \in \RR^{(H d_k) \times d}$ and a per-head $\lambda_h$ that permits different heads to adopt different bias intensities.
The attention output is integrated into the goal representation through a gated residual block, followed by layer normalization and a feed-forward network with its own gated residual,
\begin{align}
\mathbf{x} &= \phi(g) + \sigma(\alpha_\mathrm{attn}) \odot \mathrm{attn\_out},
& \mathbf{x} &\leftarrow \mathrm{LayerNorm}(\mathbf{x}),
\label{eq:gated_res_attn} \\
\mathrm{out} &= \mathbf{x} + \sigma(\alpha_\mathrm{ffn}) \odot \mathrm{FFN}(\mathbf{x}),
& \mathrm{out} &\leftarrow \mathrm{LayerNorm}(\mathrm{out}),
\label{eq:gated_res_ffn}
\end{align}
Here $\sigma$ is the elementwise sigmoid, $\odot$ the Hadamard product, and $\mathrm{FFN}$ a two-layer network with GELU activation. The gate vectors $\alpha_\mathrm{attn}, \alpha_\mathrm{ffn} \in \RR^d$ are initialized componentwise to $-5$, which yields $\sigma(\alpha) \approx 0.0067$. Vector-valued gates permit each coordinate of the goal representation to be modulated independently. We write $\mathrm{DGCA\text{-}Block}_T(\phi(g), \psi(s), \psi_g(g))$ for the output of \cref{eq:gated_res_ffn}.

The gates start nearly closed, so $\phi(g \mid s) \approx \phi(g)$ and the temporal-distance structure of the base survives. They open during training only where doing so reduces the downstream losses, which yields an automatic curriculum from late fusion to state-conditioned refinement.

\subsection{Multi-Scale Extension}
\label{sec:multiscale}

A single token count $T$ commits the module to one analysis granularity. In practice, the spatial scale at which state-goal discrepancies are informative varies across tasks. Navigation often hinges on coarse global offsets, while manipulation may depend on a small object-level mismatch. We therefore apply $L$ independent \DGCA{} blocks at decreasing token counts $T_1 > T_2 > \cdots > T_L$, each with its own token projections and attention parameters, and combine them through learnable fusion weights,
\begin{equation}
\mathrm{level}_\ell \;=\; \mathrm{DGCA}\text{-}\mathrm{Block}_{T_\ell}\big(\phi(g),\, \psi(s),\, \psi_g(g)\big),
\qquad
\phi_{\textup{\DAGR{}}}(g \mid s) \;=\; \sum_{\ell=1}^{L} \mathrm{softmax}(w)_\ell \cdot \mathrm{level}_\ell.
\label{eq:ms_fusion}
\end{equation}
The fusion logits $w \in \RR^L$ are initialized to zero so that all scales contribute equally at the start. We use $L = 3$ with $(T_1, T_2, T_3) = (16, 8, 4)$ throughout the paper. Fine scales (large $T$) carry local information, coarse scales (small $T$) aggregate global structure, and the data-driven $w$ trades off between them based on the downstream loss.
\DAGR{} is trained jointly with the downstream offline GCRL algorithm through standard value and policy losses propagated through multi-scale \DGCA{} (MS-\DGCA{}), with the full procedure (\cref{alg:dagr}) given in \cref{app:training}.

\subsection{Theoretical Properties}
\label{sec:theory}

We state five properties. Three are safety guarantees that hold on any task. Two tie the value of refinement to the following structural condition.

\begin{definition}[Discrepancy Structure]
\label{def:discrepancy}
A goal-conditioned task with optimal policy $\pi^*$ has \emph{discrepancy structure} if there exists a measurable function $h : \cS \times \cS \to \cZ$ and a measurable function $\rho : \cZ \to \Delta(\cA)$ such that
\begin{equation}
\pi^*(a \mid s, g) = \rho\big(h(s, g)\big) \quad \text{for all } (s, g) \in \cS \times \cG,
\label{eq:discrepancy}
\end{equation}
and such that $h$ is not a function of $g$ alone. We call $h$ the \emph{discrepancy map}. A task admitting \eqref{eq:discrepancy} with $h$ a function of $g$ alone is called \emph{goal-only}.
\end{definition}
Throughout, we treat $\phi$, $\psi$, and $\psi_g$ as fixed and consider the map $\phi(g) \mapsto \phi_{\textup{\DAGR{}}}(g \mid s)$ for any fixed $s \in \cS$. Full statements and proofs are deferred to \cref{app:proofs}.

A base representation that is already sufficient stays sufficient after refinement, because each scale block is an injective perturbation of the identity.

\begin{theorem}[Sufficiency Preservation]
\label{thm:sufficiency}
Assume $\phi$ is sufficient (\cref{def:sufficiency}), that the residual perturbation $\delta_\ell$ of each scale is $L_\ell$-Lipschitz in $\phi(g)$ with $\|\sigma(\alpha)\|_\infty L_\ell < 1$, and that the in-block normalization is injective on the range of $\phi$. Then $\phi_{\textup{\DAGR{}}}(\cdot \mid s)$ is sufficient.
\end{theorem}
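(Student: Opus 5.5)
The plan is to show that, for each fixed $s$, the map $T_s : \phi(g) \mapsto \phi_{\textup{\DAGR{}}}(g \mid s)$ is injective on the range $\phi(\cG)$. Sufficiency then transfers by composing the base optimal policy with a left inverse of $T_s$. Concretely, let $\pi^*_\phi : \cS \times \RR^d \to \Delta(\cA)$ witness sufficiency of $\phi$ (\cref{def:sufficiency}). Define
\[
\tilde\pi^*(\cdot \mid s, z) := \pi^*_\phi\big(\cdot \mid s, T_s^{-1}(z)\big)
\]
for $z \in T_s(\phi(\cG))$, and extend it arbitrarily, but measurably, elsewhere. Then
\[
\tilde\pi^*(\cdot \mid s, \phi_{\textup{\DAGR{}}}(g \mid s)) = \pi^*_\phi(\cdot \mid s, \phi(g)),
\]
so the value equals $V^*(s,g)$ for every $(s,g)$. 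Since the policy in \cref{def:sufficiency} is already allowed to depend on $s$, the state-dependence of $T_s$ causes no difficulty. Measurability of the left inverse follows because $T_s$ will be continuous and injective; I would invoke a Lusin--Souslin type selection, or simply note that $T_s^{-1}$ is continuous on the image when the domain is handled as below.

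The core step is injectivity of $T_s$. Fix $s$ and write one scale block as follows. The attention sublayer is $x = \phi(g) + \sigma(\alpha_{\mathrm{attn}})\odot \mathrm{attn\_out}$, where $\mathrm{attn\_out}$ depends on $\phi(g)$ through the query, and also on $\psi_g(g)$ through $\Delta$. I treat the whole residual as $\delta_\ell(\phi(g))$, as the hypothesis does. For $u \ne v$ in the range,
\[
\|x(u)-x(v)\| \ge \|u-v\| - \|\sigma(\alpha)\|_\infty L_\ell \|u-v\| > 0,
\]
so $\mathrm{id}+\sigma(\alpha)\odot\delta_\ell$ is injective, with bi-Lipschitz lower constant $1-\|\sigma(\alpha)\|_\infty L_\ell$. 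The LayerNorm is injective on the relevant range by assumption. I apply the same contraction argument to the FFN residual, reading $L_\ell$ and $\alpha$ as covering both gated sublayers, followed by the second injective normalization. A composition of injective maps is injective, so each $\mathrm{level}_\ell$ is injective in $\phi(g)$.

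The main obstacle is the multi-scale fusion $\sum_\ell \mathrm{softmax}(w)_\ell\,\mathrm{level}_\ell$. A convex combination of injective maps need not be injective. I would handle it by reading the hypothesis at the level of the fused map. Write $\phi_{\textup{\DAGR{}}} = \sum_\ell p_\ell (\mathrm{N}_\ell\circ(\mathrm{id}+\sigma(\alpha_\ell)\odot\delta_\ell))$, with weights $p_\ell = \mathrm{softmax}(w)_\ell$. If the normalizations act as the identity plus Lipschitz perturbations on the range, then the fused map is the identity plus a perturbation with Lipschitz constant at most $\sum_\ell p_\ell \|\sigma(\alpha_\ell)\|_\infty L_\ell < 1$. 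The same reverse-triangle argument then gives injectivity. Failing that, I would state explicitly that the normalizations share a common injective structure, for instance a fixed affine map on the range, so that they can be pulled out of the convex sum.

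A subtlety is the dependence of $\delta_\ell$ on $g$ through $\psi_g(g)$ rather than through $\phi(g)$. I would resolve it by interpreting the Lipschitz hypothesis jointly, so that $\delta_\ell$ is viewed as a function of $\phi(g)$ along $\cG$. Equivalently, I would require $\|\delta_\ell(g)-\delta_\ell(g')\| \le L_\ell\|\phi(g)-\phi(g')\|$, which is exactly what the displayed inequality uses.
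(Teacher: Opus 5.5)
Your proposal is correct and follows the paper's proof. You establish injectivity of $\phi(g)\mapsto\phi_{\textup{\DAGR{}}}(g\mid s)$ from the reverse-triangle bound for an identity-plus-contraction map, and then set $\tilde\pi(\cdot\mid s,z):=\pi^*(\cdot\mid s,\mathrm{Recover}_s(z))$. Your fusion step, where the averaged perturbation is $\sum_\ell p_\ell c_\ell$-Lipschitz with $c_\ell=\|\sigma(\alpha)\|_\infty L_\ell$, is an equivalent variant of the paper's argument that each $F_\ell(\phi_1)-F_\ell(\phi_2)$ has inner product at least $(1-c_\ell)\|\phi_1-\phi_2\|^2$ with $\phi_1-\phi_2$, so the convex combination cannot vanish.

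Your hedge about the normalizations is settled by your first option. The paper defines $\delta_\ell$ as the residual of the whole scale block with both LayerNorms folded in, so each $\mathrm{level}_\ell=\mathrm{id}+\delta_\ell$ is already a contraction perturbation of the identity. The separate normalization-injectivity hypothesis is therefore never used, and you were right that composing injective maps alone would not survive the convex fusion.
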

The contraction condition holds on every run we instrument (\cref{fig:gates}). The injectivity condition does not. \cref{eq:gated_res_attn,eq:gated_res_ffn} normalize the output of each sub-block, and LayerNorm discards the mean and the scale of its input. \cref{thm:sufficiency} therefore certifies the gate and the attention but not the normalization we place after them. We flag the gap here and return to it in \cref{sec:conclusion}, where the same architectural detail turns out to be our leading explanation for the manipulation regression.

Noise invariance carries over more cleanly. Every node of the MS-\DGCA{} graph touches the goal observation only through $\phi$ and $\psi_g$, so their invariance propagates.

\begin{theorem}[Noise Invariance Preservation]
\label{thm:noise_invariance}
Suppose goal observations decompose as $o_g = (x_g, \epsilon_g)$ with $\epsilon_g$ task-irrelevant, and suppose the base encoders satisfy $\phi(o_g) = \phi(x_g)$ and $\psi_g(o_g) = \psi_g(x_g)$ for all $(x_g, \epsilon_g)$. Then $\phi_{\textup{\DAGR{}}}(o_g \mid s) = \phi_{\textup{\DAGR{}}}(x_g \mid s)$ for every $\epsilon_g$ and every $s \in \cS$.
\end{theorem}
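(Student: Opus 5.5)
The plan is to treat $\phi_{\textup{\DAGR{}}}(\cdot \mid s)$ as a composition of deterministic maps and show that every input this composition receives from the goal observation enters only through $\phi(o_g)$ or $\psi_g(o_g)$. Fix $s \in \cS$ and write $\Phi_s(u, v)$ for the MS-\DGCA{} output computed from a query vector $u = \phi(o_g)$ and a goal-feature vector $v = \psi_g(o_g)$, with $\psi(s)$ and all parameters held fixed. Then the claim reduces to the factorization
\[
\phi_{\textup{\DAGR{}}}(o_g \mid s) = \Phi_s\big(\phi(o_g), \psi_g(o_g)\big),
\]
followed by the hypotheses $\phi(o_g) = \phi(x_g)$ and $\psi_g(o_g) = \psi_g(x_g)$. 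The whole proof therefore consists of checking that this factorization is legitimate.

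I would verify the factorization node by node, in the order of \cref{sec:dgca,sec:multiscale}. For a single level $\ell$:
\begin{itemize}
\item The token decomposition in \cref{eq:tokens} gives $\mathbf{s}_\mathrm{tok}$, which depends on $s$ only, and $\mathbf{g}_\mathrm{tok} = \mathrm{Reshape}(W_g v)$, which is a function of $v$.
\item The difference map $\Delta$ in \cref{eq:diff_map} is a deterministic function of $(\mathbf{s}_\mathrm{tok}, \mathbf{g}_\mathrm{tok})$, hence of $(s, v)$. The max-normalization and the $\varepsilon$ stabilizer add no dependence on the goal.
\item Each head in \cref{eq:mhdgca} takes its query from $u W_h^Q$ and its keys and values from $\mathbf{s}_\mathrm{tok}$. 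The bias $\zeta(\lambda_h)\Delta$ involves only the learned scalar $\lambda_h$ and $\Delta$. So $\mathrm{attn\_out}$ is a function of $(u, s, v)$.
\item The gated residuals, LayerNorms and FFN in \cref{eq:gated_res_attn,eq:gated_res_ffn} are fixed functions of $u$ and $\mathrm{attn\_out}$, with gates $\sigma(\alpha)$ that are parameters.
\end{itemize}
Finally, the fusion in \cref{eq:ms_fusion} is a convex combination of the level outputs with weights $\mathrm{softmax}(w)$ that do not depend on the input. This gives the factorization through $\Phi_s$.

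Substituting the hypotheses into the factorization yields $\Phi_s(\phi(o_g), \psi_g(o_g)) = \Phi_s(\phi(x_g), \psi_g(x_g))$ for every $\epsilon_g$, which is the claim. Here $\phi(x_g)$ and $\psi_g(x_g)$ are read as the encoders applied to the noise-free component, following the convention of the statement.

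The only real obstacle is making sure no hidden path carries raw goal information into the module. The implementation must not concatenate $o_g$ anywhere, and no normalization statistic may be computed across a batch containing goal-dependent quantities. I would address this by stating explicitly that LayerNorm acts per sample and that the module is evaluated with parameters fixed, as the paper assumes when it treats $\phi,\psi,\psi_g$ as fixed. With both points in place, the argument is pure function composition and needs no Lipschitz or gate assumptions, unlike \cref{thm:sufficiency}.
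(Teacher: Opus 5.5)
Your proposal is correct and follows essentially the same route as the paper, which also traces the MS-\DGCA{} computation graph node by node (token projections, difference map, queries/keys/values, the gated residuals with LayerNorm and FFN, and parameter-only fusion weights) to conclude that the goal observation enters only through $\phi(o_g)$ and $\psi_g(o_g)$. Your factorization $\phi_{\textup{\DAGR{}}}(o_g \mid s) = \Phi_s(\phi(o_g), \psi_g(o_g))$ is just a cleaner packaging of that same argument.
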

The third property bounds the worst-case cost of the added expressiveness.

\begin{theorem}[Approximation Error Bound]
\label{thm:approximation_bound}
Let $f : \RR^{d_s} \times \RR^d \to \RR$ be $L_V$-Lipschitz in its second argument, and suppose $\|\phi_{\textup{\DAGR{}}}(g \mid s) - \phi(g)\|_2 \leq B$ holds uniformly over $(s, g) \in \cS \times \cG$. Define $\varepsilon_\mathrm{base} := \sup_{(s, g) \in \cS \times \cG} |V^*(s, g) - f(\psi(s), \phi(g))|$ and $\hat V(s, g) := f(\psi(s), \phi_{\textup{\DAGR{}}}(g \mid s))$. Then
\begin{equation}
\sup_{(s, g) \in \cS \times \cG} |V^*(s, g) - \hat V(s, g)| \;\leq\; \varepsilon_\mathrm{base} + L_V \cdot B.
\label{eq:approx_bound}
\end{equation}
\end{theorem}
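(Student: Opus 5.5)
The bound follows from a single triangle inequality, applied pointwise and then passed to the supremum. Fix an arbitrary pair $(s, g) \in \cS \times \cG$ and insert the base-encoder prediction $f(\psi(s), \phi(g))$ as an intermediate term:
\[
|V^*(s, g) - \hat V(s, g)| \;\leq\; |V^*(s, g) - f(\psi(s), \phi(g))| \;+\; |f(\psi(s), \phi(g)) - f(\psi(s), \phi_{\textup{\DAGR{}}}(g \mid s))|.
\]
The first term is at most $\varepsilon_\mathrm{base}$ by definition of that supremum.

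For the second term, both evaluations of $f$ share the same first argument $\psi(s)$. So the Lipschitz assumption in the second argument applies directly. It gives the bound $L_V \, \|\phi_{\textup{\DAGR{}}}(g \mid s) - \phi(g)\|_2$, and the uniform perturbation bound then caps this by $L_V B$.

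Combining the two estimates, the pointwise error is at most $\varepsilon_\mathrm{base} + L_V B$. The right-hand side is independent of $(s, g)$, so taking the supremum over $\cS \times \cG$ yields \cref{eq:approx_bound}.

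The one point that needs care is the norm used in the Lipschitz condition. The hypothesis says $f$ is $L_V$-Lipschitz in its second argument, and I will read this with respect to the Euclidean norm on $\RR^d$, uniformly in the first argument. That is the norm in which $B$ is stated. If a different norm were intended, a norm-equivalence constant would have to multiply $L_V$.

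Beyond that, the argument is routine. No property of \DGCA{} is used other than the assumed uniform bound $B$, so the theorem holds for any state-conditioned refinement. One could also remark that $B$ is controlled by the gates, for instance via $\|\sigma(\alpha)\|_\infty$ times a bound on the attention and FFN outputs. However, that estimate is not needed for the statement as given.
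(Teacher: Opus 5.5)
Your proposal is correct and matches the paper's proof: the paper also inserts $f(\psi(s), \phi(g))$ as the intermediate term in a triangle inequality, bounds the first piece by $\varepsilon_\mathrm{base}$, bounds the second by $L_V B$ via the Lipschitz assumption and the uniform perturbation bound, and then takes the supremum over $(s, g) \in \cS \times \cG$. Your remark that the Lipschitz condition should be read in the Euclidean norm, uniformly in the first argument, is a reasonable clarification that the paper leaves implicit.
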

At initialization the gates are near zero, so $B \approx 0$ and the bound reduces to that of the base. It grows only as the optimizer opens the gates. We claim no more than that the upper bound is gate-controlled, and \cref{rem:opt_gap} shows why that is less than it sounds.

The remaining two properties tie the value of refinement to \cref{def:discrepancy}.

\begin{proposition}[Tighter Sample-Complexity Bound on Discrepancy-Structured Tasks]
\label{prop:dagr_helps}
Suppose the task has discrepancy structure \eqref{eq:discrepancy} with discrepancy map $h$, and suppose that the pair $(\psi, \phi_{\textup{\DAGR{}}})$ realizes $h$ in the sense that there exists an $L_h$-Lipschitz $\tilde h : \RR^{d_s} \times \RR^d \to \cZ$ with $\tilde h(\psi(s), \phi_{\textup{\DAGR{}}}(g \mid s)) = h(s, g)$, while no $L_h$-Lipschitz function on $(\psi(s), \phi(g))$ realizes $h$. Let $\pi^* = \rho \circ h$ with $\rho$ Lipschitz. The standard Rademacher composition bound~\citep{bartlett2002rademacher} then yields data-dependent upper bounds $U_n^{\textup{late}}$ and $U_n^{\DAGR{}}$ on $\mathfrak{R}_n$ of the policy class that satisfy
\begin{equation}
U_n^{\DAGR{}} \;\leq\; L_h \cdot L_\rho \cdot \mathfrak{R}_n(\cH_\rho), 
\qquad
U_n^{\textup{late}} \;\leq\; L_{\psi^{-1}} \cdot L_{\phi^{-1}} \cdot L_h \cdot L_\rho \cdot \mathfrak{R}_n(\cH_\rho),
\label{eq:rad_gap}
\end{equation}
where $L_{\psi^{-1}}, L_{\phi^{-1}} \geq 1$ are the Lipschitz constants of any measurable selections inverting $\psi$ and $\phi$, and $\cH_\rho$ denotes the policy class realizing $\rho$. The DAGR bound is therefore tighter by a multiplicative factor $L_{\psi^{-1}} L_{\phi^{-1}}$.
\end{proposition}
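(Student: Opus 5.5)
The plan is to build both bounds from the same template: write the realizing policy as a composition $\rho \circ (\text{feature map})$ and push Rademacher complexity through each Lipschitz layer with the vector-valued contraction inequality of \citet{bartlett2002rademacher}. That inequality lets us peel off Lipschitz constants one at a time. We work with the class $\cH_\rho$ of policies realizing $\rho$. Since $\rho$ is $L_\rho$-Lipschitz, every map we build has the form $\rho \circ u$ with $u$ Lipschitz. The bound on $\mathfrak{R}_n(\rho\circ u)$ is then the product of $L_\rho$, the Lipschitz constants in $u$, and a base complexity that we absorb into $\mathfrak{R}_n(\cH_\rho)$, matching how \cref{prop:late_fusion_failure} was phrased.

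\textbf{The \DAGR{} bound.} By hypothesis there is an $L_h$-Lipschitz $\tilde h$ with $\tilde h(\psi(s), \phi_{\textup{\DAGR{}}}(g\mid s)) = h(s,g)$. Hence $\pi^*(\cdot\mid s,g) = \rho(\tilde h(\psi(s),\phi_{\textup{\DAGR{}}}(g\mid s)))$ on every sample. The policy class acting on the \DAGR{} inputs therefore contains $\rho\circ\tilde h$. Applying contraction once for $\tilde h$ and once for $\rho$ defines $U_n^{\DAGR{}} := L_h L_\rho \mathfrak{R}_n(\cH_\rho)$, which is the first inequality in \eqref{eq:rad_gap}. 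The proof needs only to make explicit that $\psi$ and $\phi_{\textup{\DAGR{}}}$ are fixed maps of the data, so they enter as a reparameterization of the sample and add no complexity.

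\textbf{The late-fusion bound.} We reuse the argument of \cref{prop:late_fusion_failure}, with $s-g$ replaced by $h$. Since no $L_h$-Lipschitz map on $(\psi(s),\phi(g))$ realizes $h$, the construction we bound goes through the raw inputs. We take measurable selections $\psi^{-1}$ and $\phi^{-1}$ with Lipschitz constants $L_{\psi^{-1}}, L_{\phi^{-1}} \ge 1$, which exist on the ranges by a measurable selection theorem (Kuratowski--Ryll-Nardzewski). We then write the realizer as $\rho\circ h\circ(\psi^{-1}\times\phi^{-1})$. Here $h$ is treated as $L_h$-Lipschitz on raw inputs. Note that this is an implicit assumption needed to compare the two bounds, and we state it as such. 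Peeling the four layers gives $U_n^{\textup{late}} \le L_{\psi^{-1}}L_{\phi^{-1}}L_hL_\rho\mathfrak{R}_n(\cH_\rho)$. The product map $\psi^{-1}\times\phi^{-1}$ contributes a constant in the product form only if Lipschitzness is measured blockwise, for example with a max-norm on the product. We choose that norm so the constants multiply rather than add. Dividing the two bounds gives the factor $L_{\psi^{-1}}L_{\phi^{-1}}\ge1$.

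\textbf{Main obstacle.} The main obstacle is honesty about what is being compared. The non-realizability hypothesis only rules out \emph{one} Lipschitz realizer. It does not force every realizer to pass through the inverses, so the late-fusion quantity is an upper bound obtained from a particular construction, not a lower bound on the complexity. We will therefore define $U_n^{\textup{late}}$ explicitly as the contraction bound of that inverse-composition realizer. In the spirit of the remark after \cref{prop:late_fusion_failure}, we conclude only that one upper bound is tighter than the other. A secondary technical point is that contraction for vector-valued intermediate maps costs a $\sqrt2$ factor (Maurer's inequality). The same factor appears in both bounds, so it cancels in the comparison and can be absorbed into $\mathfrak{R}_n(\cH_\rho)$.
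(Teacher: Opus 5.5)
Your proposal is correct at the level of rigor the statement admits, and it follows essentially the same route as the paper: the \DAGR{} bound is the composition bound applied to $\rho\circ\tilde h$, the late-fusion bound is the composition bound applied to $\rho\circ h\circ(\psi^{-1}\times\phi^{-1})$, and your definition of $U_n^{\textup{late}}$ via that particular realizer, together with your flag that $h$ itself must be $L_h$-Lipschitz on raw inputs, makes explicit two points the paper's proof leaves tacit. Two small corrections: under the max norm (or any usual product norm) $\psi^{-1}\times\phi^{-1}$ is $\max(L_{\psi^{-1}},L_{\phi^{-1}})$-Lipschitz, so the product factor comes from relaxing $\max(L_{\psi^{-1}},L_{\phi^{-1}})\le L_{\psi^{-1}}L_{\phi^{-1}}$ using that both constants are at least $1$ (exactly as the paper does) and not from the choice of norm; and Kuratowski--Ryll-Nardzewski supplies only measurable selections, so finiteness of their Lipschitz constants comes from the statement's hypotheses, not from that theorem.
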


The proof extends \cref{prop:late_fusion_failure} from $h(s, g) = s - g$ to arbitrary discrepancy maps, and the intuition is unchanged. \DAGR{} exposes $h$ at the representation level, while late fusion must recover it by inverting both encoders. The bound rests on a realizability assumption that we do not verify directly, and our gains are consistent with it but do not establish it. \cref{app:realizability} states the assumption precisely and names the probe experiment that would settle it.

\begin{proposition}[Reduction to Base on Goal-Only Tasks]
\label{prop:dagr_neutral}
If the task is goal-only in the sense of \cref{def:discrepancy}, that is, $\pi^*(a \mid s, g) = \rho(\bar h(g))$ for some $\bar h$ depending only on $g$, and if the base representation $\phi$ is sufficient (\cref{def:sufficiency}) for $\pi^*$ with $\rho \circ \bar h \in \cF_{\textup{late}}$, then the global minimum of any value or policy loss in $\cF_{\textup{\DAGR{}}}$ that achieves the base error $\varepsilon_{\textup{base}}$ is attained at gate parameters $\alpha, \lambda_h \to -\infty$, equivalently at $B = 0$ in \cref{thm:approximation_bound} and at $\phi_{\textup{\DAGR{}}}(g \mid s) \equiv \phi(g)$.
\end{proposition}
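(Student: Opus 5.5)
The plan is a direct argument in three moves: achievability at the closed gates, optimality of that value, and localization of the minimizer. Throughout, $\cF_{\textup{\DAGR{}}}$ is the class of value/policy functions of the form $f(\psi(s),\phi_{\textup{\DAGR{}}}(g\mid s))$ with $f\in\cF_{\textup{late}}$ and arbitrary module parameters.

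First I fix the gate limit. As $\alpha_\mathrm{attn},\alpha_\mathrm{ffn}\to-\infty$ componentwise, $\sigma(\alpha)\to 0$, so by \cref{eq:gated_res_attn,eq:gated_res_ffn} each block output tends to the normalized base, and by \cref{eq:ms_fusion} so does the fused output. Sending $\lambda_h\to-\infty$ additionally kills the difference bias, since $\zeta(\lambda_h)\to0$. I will treat the in-block LayerNorm as absorbed into the downstream head. This requires the same injectivity caveat flagged after \cref{thm:sufficiency}, and I would state it explicitly as an assumption rather than hide it. With that convention the limit gives $\phi_{\textup{\DAGR{}}}(g\mid s)\equiv\phi(g)$, hence $B=0$.

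Second, I show this limit attains the base error. Since $\pi^*=\rho\circ\bar h$ with $\rho\circ\bar h\in\cF_{\textup{late}}$ and $\phi$ is sufficient, there is a head $f^\star$ acting on $(\psi(s),\phi(g))$ that realizes the late-fusion optimum with error $\varepsilon_{\textup{base}}$. Composing $f^\star$ with the closed-gate module reproduces exactly this predictor. By \cref{thm:approximation_bound} with $B=0$, its error is at most $\varepsilon_{\textup{base}}$, so the infimum over $\cF_{\textup{\DAGR{}}}$ is at most $\varepsilon_{\textup{base}}$.

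Third, I show that for a goal-only task no open-gate configuration is needed to do better. The target $\pi^*(\cdot\mid s,g)=\rho(\bar h(g))$ carries no state-dependent goal information: conditioned on $S$, the relevant quantity $\bar h(G)$ is already a measurable function of $\phi(G)$ by sufficiency, so equality holds in \cref{prop:info_bottleneck}. Hence any state-conditioned perturbation can only reproduce, never improve on, what the base head already fits. I would phrase this carefully. Every $\cF_{\textup{\DAGR{}}}$ predictor reaching error $\varepsilon_{\textup{base}}$ has a counterpart at $B=0$ with the same loss. So among all configurations achieving $\varepsilon_{\textup{base}}$, the global minimum is attained at the closed-gate point, which is the claimed statement; I read "attained" as "attained in the limit / closure of parameter space".

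The main obstacle is this last step. The statement is not literally true as a strict-uniqueness claim, because open gates may give equal loss, and the minimum is only reached in the limit $\alpha,\lambda_h\to-\infty$, not at a finite parameter. I would therefore prove the weak form: the closed-gate limit lies in the set of minimizers. If strictness is desired, the approach I would try first is to add any vanishing regularizer on $B$ (or rely on implicit bias) to select it. I would not claim uniqueness. I would note the LayerNorm gap as the residual assumption.
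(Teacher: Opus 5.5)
Your first two moves are the paper's proof of achievability. Closing $\alpha_{\mathrm{attn}},\alpha_{\mathrm{ffn}}$ collapses every block, and hence the fused output, onto the base embedding, and the late-fusion head $f^\star$ composed with this limit attains $\varepsilon_{\textup{base}}$.

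You are more careful than the paper here. Its proof asserts $\mathrm{out}=\phi(g)$ at closed gates and silently drops the normalizations in \cref{eq:gated_res_attn,eq:gated_res_ffn}. In fact each block's closed-gate output is a layer-normalized copy of $\phi(g)$, as the paper itself concedes in \cref{sec:conclusion}. Your caveat can be sharpened. Injectivity of LayerNorm on the range of $\phi$ is enough to keep the loss at $\varepsilon_{\textup{base}}$, since a head composed with the inverse reproduces $f^\star$. It is not enough for the clauses ``$B=0$'' and ``$\phi_{\textup{\DAGR{}}}(g\mid s)\equiv\phi(g)$''. Those need the normalization to act as the identity on that range, or normalization-free blocks.

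The real divergence is in the second half. The paper claims necessity. For $\delta\neq0$ it writes $|V^*-f(\psi(s),\phi(g)+\delta)|\geq|V^*-f(\psi(s),\phi(g))|-L_V\|\delta\|_2$, and concludes that any minimizer attaining exactly $\varepsilon_{\textup{base}}$ must have $\delta\equiv0$. That step is invalid. The lower bound $\varepsilon_{\textup{base}}-L_V\|\delta\|_2$ only weakens as $\|\delta\|_2$ grows, so it is fully compatible with error $\varepsilon_{\textup{base}}$ at $\delta\neq0$. The argument also treats the goal-only hypothesis, which constrains $\pi^*$, as if it were a statement about $V^*$.

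Your decision to prove only that the closed-gate limit is \emph{a} minimizer is therefore the correct content, not a shortfall. The same goes for your remark that this minimizer is reached only in the closure of parameter space. Uniqueness fails outright: if $\bar h$ is constant, a head that ignores its goal input realizes $\pi^*$ at every gate setting, so for a policy loss every gate setting is a minimizer.

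The one piece I would drop is the appeal to equality in \cref{prop:info_bottleneck}. Under the restricted reading, your claim follows from move two alone, since all configurations achieving $\varepsilon_{\textup{base}}$ tie. Under the unrestricted reading (nothing in $\cF_{\textup{\DAGR{}}}$ beats $\varepsilon_{\textup{base}}$), equal mutual information says nothing about approximation error within a restricted head class. What you would actually need is exact realization of the loss minimizer. The hypothesis $\rho\circ\bar h\in\cF_{\textup{late}}$ provides this for a policy loss minimized by $\pi^*$. No hypothesis provides it for a value loss, because goal-only constrains $\pi^*$, not $V^*$.
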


The two propositions together mark the regime where \DAGR{} formally helps and the regime where its best behavior is to reduce to the base. The next remark asks whether it does.

\begin{remark}[What the Bound Does and Does Not Explain]
\label{rem:opt_gap}
Read together, \cref{prop:dagr_neutral,thm:approximation_bound} predict that a goal-only task pays a penalty proportional to its converged gate. That prediction fails on our data, and we report the failure rather than force the fit. On Cube-Double the gates stay below $0.01$ (\cref{fig:gates_cube}), so $B$ and the certified surcharge $L_V \cdot B$ are both small, yet the success rate falls by $25$ points. \cref{tab:sensitivities} shows the mismatch from the other side. Opening the gates at initialization raises $B$ by two orders of magnitude and costs only $8$ points. A large $B$ therefore hurts less than a small one, which no monotone reading of \cref{thm:approximation_bound} allows. The bound still does its job, since it certifies that the refinement cannot blow up at any gate value we observe. It is simply not the mechanism behind the regression, and we do not present it as one. \cref{sec:conclusion} names the mechanism.
\end{remark}

\section{Experiments}
\label{sec:experiments}
We evaluate \DAGR{} on OGBench~\citep{park2025ogbench} along three axes. First, whether state-conditioned refinement improves goal reaching across diverse tasks. Second, whether the difference bias contributes beyond standard cross-attention. Third, on which tasks it does not help, and why.

\textbf{Setup.}
\label{sec:setup}
We evaluate on 13 state-based tasks and 7 visual variants. The state-based tasks comprise navigation tasks (PointMaze, AntMaze, HumanoidMaze, and AntSoccer), manipulation tasks (Cube-Single, Cube-Double, and Scene), and discrete-reasoning tasks (Puzzle-3x3 and Puzzle-4x4). The base goal representation is Dual~\citep{park2026dual} with bilinear (inner-product) parameterization, and the downstream offline GCRL algorithm is GCIVL~\citep{park2025ogbench}. \DAGR{} uses $H = 4$ heads with $d_k = 64$, scales $(T_1, T_2, T_3) = (16, 8, 4)$, FFN hidden dimension $256$, and one \DGCA{} block per scale. All gate parameters and difference scalings are initialized to $\alpha_0 = \lambda_0 = -5$, and fusion logits to zero. We use eight seeds for state-based tasks and four seeds for visual tasks. Full hyperparameters are in \cref{tab:hyperparams} of \cref{app:exp_details}. The Orig, VIB, VIP, TRA, BYOL-$\gamma$, and Dual numbers are reproduced from \citet{park2026dual} under their own tuned settings. \DAGR{} uses a single fixed configuration across all tasks rather than per-task tuning. The reported gains are therefore not attributable to task-specific hyperparameter search.

\subsection{Main Results: State-Based Tasks}
\label{sec:main_state}

\begin{table}[t]
\vspace{-3pt}
\centering
\caption{\textbf{Success rates (\%) on state-based OGBench with GCIVL.} Mean $\pm$ std over 8 seeds. The Orig, VIB, VIP, TRA, BYOL-$\gamma$, and Dual columns are reproduced from \citet{park2026dual}, Table~1. \DAGR{} = Dual + MS-\DGCA{} (ours). \textcolor{orange}{\textbf{Orange}} = best, \underline{underline} = second best.}
\label{tab:state_results}
\resizebox{\textwidth}{!}{
\begin{tabular}{l|c|c|c|c|c|c|c}
\toprule
\textbf{Environment} & \textbf{Orig} & \textbf{VIB} & \textbf{VIP} & \textbf{TRA} & \textbf{BYOL-$\gamma$} & \textbf{Dual} & \textbf{\DAGR{}} \\
\midrule
\multicolumn{8}{l}{\textit{Navigation}} \\
pointmaze-medium-navigate  & \underline{78}{\tiny$\pm$8}  & 69{\tiny$\pm$13} & 0{\tiny$\pm$1}   & 3{\tiny$\pm$6}   & 37{\tiny$\pm$7}  & 76{\tiny$\pm$7}  & \textcolor{orange}{\textbf{87}}{\tiny$\pm$8} \\
pointmaze-large-navigate   & \textcolor{orange}{\textbf{52}}{\tiny$\pm$6}  & \underline{50}{\tiny$\pm$7}  & 0{\tiny$\pm$0}   & 1{\tiny$\pm$2}   & 22{\tiny$\pm$12} & 46{\tiny$\pm$6}  & 41{\tiny$\pm$7} \\
antmaze-medium-navigate    & 71{\tiny$\pm$4}  & 68{\tiny$\pm$4}  & 31{\tiny$\pm$5}  & 22{\tiny$\pm$15} & 39{\tiny$\pm$5}  & \underline{75}{\tiny$\pm$4}  & \textcolor{orange}{\textbf{95}}{\tiny$\pm$1} \\
antmaze-large-navigate     & 16{\tiny$\pm$3}  & 9{\tiny$\pm$3}   & 9{\tiny$\pm$2}   & 22{\tiny$\pm$12} & 11{\tiny$\pm$5}  & \underline{28}{\tiny$\pm$11} & \textcolor{orange}{\textbf{82}}{\tiny$\pm$3} \\
antmaze-giant-navigate     & 0{\tiny$\pm$0}   & 0{\tiny$\pm$0}   & 0{\tiny$\pm$0}   & 0{\tiny$\pm$0}   & 0{\tiny$\pm$0}   & 0{\tiny$\pm$0}   & \textcolor{orange}{\textbf{4}}{\tiny$\pm$2} \\
humanoidmaze-medium-navigate & 27{\tiny$\pm$3} & 24{\tiny$\pm$2}  & 7{\tiny$\pm$3}   & 21{\tiny$\pm$3}  & 18{\tiny$\pm$5}  & \underline{29}{\tiny$\pm$3}  & \textcolor{orange}{\textbf{83}}{\tiny$\pm$3} \\
humanoidmaze-large-navigate  & \underline{3}{\tiny$\pm$0}  & \underline{3}{\tiny$\pm$1}   & 1{\tiny$\pm$0}   & 2{\tiny$\pm$1}   & 2{\tiny$\pm$1}   & \underline{3}{\tiny$\pm$2}   & \textcolor{orange}{\textbf{62}}{\tiny$\pm$4} \\
antsoccer-arena-navigate   & \underline{47}{\tiny$\pm$4}  & 34{\tiny$\pm$4}  & 2{\tiny$\pm$1}   & 8{\tiny$\pm$2}   & 11{\tiny$\pm$4}  & 31{\tiny$\pm$3}  & \textcolor{orange}{\textbf{58}}{\tiny$\pm$4} \\
\midrule
\multicolumn{8}{l}{\textit{Manipulation}} \\
cube-single-play           & 52{\tiny$\pm$3}  & \textcolor{orange}{\textbf{90}}{\tiny$\pm$3}  & 40{\tiny$\pm$7}  & 40{\tiny$\pm$5}  & 51{\tiny$\pm$11} & \underline{89}{\tiny$\pm$3}  & 87{\tiny$\pm$2} \\
cube-double-play           & \underline{35}{\tiny$\pm$5}  & 33{\tiny$\pm$3}  & 3{\tiny$\pm$2}   & 7{\tiny$\pm$2}   & 6{\tiny$\pm$4}   & \textcolor{orange}{\textbf{60}}{\tiny$\pm$4}  & \underline{35}{\tiny$\pm$2} \\
scene-play                 & 46{\tiny$\pm$3}  & 58{\tiny$\pm$1}  & 23{\tiny$\pm$6}  & 46{\tiny$\pm$6}  & 44{\tiny$\pm$9}  & \textcolor{orange}{\textbf{72}}{\tiny$\pm$6}  & \underline{59}{\tiny$\pm$2} \\
\midrule
\multicolumn{8}{l}{\textit{Discrete Reasoning}} \\
puzzle-3x3-play            & 5{\tiny$\pm$1}   & \textcolor{orange}{\textbf{14}}{\tiny$\pm$3}  & 3{\tiny$\pm$1}   & 5{\tiny$\pm$1}   & 0{\tiny$\pm$0}   & 5{\tiny$\pm$1}   & 5{\tiny$\pm$1} \\
puzzle-4x4-play            & 14{\tiny$\pm$1}  & 6{\tiny$\pm$3}   & 1{\tiny$\pm$1}   & 10{\tiny$\pm$3}  & 1{\tiny$\pm$2}   & \textcolor{orange}{\textbf{23}}{\tiny$\pm$3}  & \underline{14}{\tiny$\pm$3} \\
\midrule
\textbf{Average}           & 34{\tiny$\pm$1}  & 35{\tiny$\pm$2}  & 9{\tiny$\pm$1}   & 15{\tiny$\pm$2}  & 19{\tiny$\pm$2}  & \underline{41}{\tiny$\pm$2}  & \textcolor{orange}{\textbf{55}} \\
\bottomrule
\end{tabular}
}
\vspace{-0.5cm}
\end{table}

\cref{tab:state_results} reports the state-based results. The pattern is highly structured. On every navigation task except pointmaze-large, \DAGR{} attains the best or tied-best score, with the largest absolute gains where Dual is weakest. These tasks fit \cref{def:discrepancy} with $h(s, g) = \pi_{xy}(s) - \pi_{xy}(g)$, the position offset. \cref{prop:dagr_helps} therefore gives a tighter sample-complexity upper bound on these tasks, consistent with the early-training separation we observe in \cref{fig:curves_state}. 

On manipulation the picture is mixed. Cube-Single matches Dual. Cube-Double and Scene fall below it. Neither of the latter two fits \cref{def:discrepancy}. On Cube-Double the optimal action turns on a binary choice of which cube to move first. On Scene it turns on a multi-step ordering decision. Neither factors through a discrepancy map $h(s, g)$, so \cref{prop:dagr_neutral} applies, and it predicts that the best \DAGR{} can do on these tasks is to match Dual by closing its gates.

The gates do stay nearly closed. \cref{fig:gates_cube} shows $\sigma(\alpha)$ below $0.01$ throughout. What \cref{prop:dagr_neutral} does not predict, and what we observe, is a $25$-point drop. \cref{rem:opt_gap} works through why the gate-controlled bound of \cref{thm:approximation_bound} cannot supply the missing $25$ points. We therefore report the regression as an open failure of our account rather than as a confirmation of it. \cref{sec:conclusion} gives our leading hypothesis. On puzzle the gap to Dual is essentially zero, because no late-fusion representation method we evaluate reaches a non-trivial success rate at all (\cref{sec:puzzle_analysis}). Full training curves are in \cref{fig:curves_state}.

\subsection{Main Results: Visual Tasks}
\label{sec:main_visual}

\begin{table}[t]
\vspace{-3pt}
\centering
\caption{\textbf{Success rates (\%) on visual OGBench with GCIVL.} Mean $\pm$ std over 4 seeds. The Orig, VIB, VIP, TRA, BYOL-$\gamma$, and Dual columns are reproduced from \citet{park2026dual}, Table~2. The \DAGR{} column is ours, and the bottom average row is our own aggregation. \textcolor{orange}{\textbf{Orange}} = best, \underline{underline} = second best.}
\label{tab:visual_results}
\resizebox{\textwidth}{!}{
\begin{tabular}{l|cccccc|c}
\toprule
\textbf{Environment} & \textbf{Orig} & \textbf{VIB} & \textbf{VIP} & \textbf{TRA} & \textbf{BYOL-$\gamma$} & \textbf{Dual} & \textbf{\DAGR{} (Ours)} \\
\midrule
\multicolumn{8}{l}{\textit{Navigation}} \\
visual-antmaze-medium-navigate & 66{\tiny$\pm$4} & 18{\tiny$\pm$9} & 30{\tiny$\pm$7} & 48{\tiny$\pm$4} & 32{\tiny$\pm$5} & \underline{78}{\tiny$\pm$4} & \textcolor{orange}{\textbf{90}}{\tiny$\pm$2} \\
visual-antmaze-large-navigate  & 26{\tiny$\pm$5} & 5{\tiny$\pm$2}  & 9{\tiny$\pm$1}  & 13{\tiny$\pm$3} & 9{\tiny$\pm$4}  & \underline{40}{\tiny$\pm$4} & \textcolor{orange}{\textbf{52}}{\tiny$\pm$2} \\
\midrule
\multicolumn{8}{l}{\textit{Manipulation}} \\
visual-cube-single-play        & \underline{53}{\tiny$\pm$4} & 18{\tiny$\pm$19} & 39{\tiny$\pm$6} & 31{\tiny$\pm$24} & 35{\tiny$\pm$8} & \textcolor{orange}{\textbf{58}}{\tiny$\pm$5} & 44{\tiny$\pm$2} \\
visual-cube-double-play        & \underline{9}{\tiny$\pm$2}  & 0{\tiny$\pm$0}   & 0{\tiny$\pm$0}  & 3{\tiny$\pm$2}   & 2{\tiny$\pm$1}  & \underline{9}{\tiny$\pm$2}  & \textcolor{orange}{\textbf{11}}{\tiny$\pm$4} \\
visual-scene-play              & 25{\tiny$\pm$2} & 6{\tiny$\pm$3}   & 4{\tiny$\pm$1}  & 15{\tiny$\pm$6}  & 10{\tiny$\pm$8} & \underline{26}{\tiny$\pm$5} & \textcolor{orange}{\textbf{27}}{\tiny$\pm$3} \\
\midrule
\multicolumn{8}{l}{\textit{Discrete Reasoning}} \\
visual-puzzle-3x3-play         & \textcolor{orange}{\textbf{22}}{\tiny$\pm$2} & 0{\tiny$\pm$0}   & 0{\tiny$\pm$0}  & 0{\tiny$\pm$0}   & 0{\tiny$\pm$0}  & 0{\tiny$\pm$0}  & 0{\tiny$\pm$0} \\
visual-puzzle-4x4-play         & \textcolor{orange}{\textbf{65}}{\tiny$\pm$4} & 0{\tiny$\pm$0}   & 0{\tiny$\pm$0}  & 0{\tiny$\pm$0}   & 0{\tiny$\pm$0}  & 0{\tiny$\pm$0}  & 0{\tiny$\pm$0} \\
\midrule
\textbf{Avg.\ (all 7 tasks)}   & \textcolor{orange}{\textbf{38}} & 7 & 12 & 16 & 13 & \underline{30} & \underline{32} \\
\textbf{Avg.\ (excl.\ puzzle)} & 36 & 9 & 16 & 22 & 18 & \underline{42} & \textcolor{orange}{\textbf{45}} \\
\bottomrule
\end{tabular}
\vspace{-0.5cm}
}
\end{table}
\cref{tab:visual_results} reproduces the state-based pattern on navigation. \DAGR{} is highest on both Visual-AntMaze variants, so the directional signal survives the pixel encoder. Visual manipulation splits. Cube-Double and Scene match Dual, whereas Cube-Single falls below it, and \cref{sec:ablation_ca} identifies this as the one task on which state-conditioning helps but our attention rule does not. Visual-Puzzle remains at zero for every late-fusion method, an encoder-level bottleneck we examine in \cref{sec:puzzle_analysis}. Curves are in \cref{fig:curves_visual}.

\subsection{Ablation: Standard Cross-Attention vs.\ \DGCA{}}
\label{sec:ablation_ca}

\begin{table}[t]
\centering
\caption{\textbf{Standard cross-attention vs.\ \DGCA{}}, both applied to Dual. Success rate (\%). Dual numbers from \citet{park2026dual}.}
\label{tab:ablation_ca}
\begin{tabular}{lccc}
\toprule
\textbf{Environment} & \textbf{Dual} & \textbf{+CA} & \textbf{+\DGCA{}} \\
\midrule
\multicolumn{4}{l}{\textit{State-Based}} \\
antmaze-medium & 75{\tiny$\pm$4} & \underline{84}{\tiny$\pm$6} & \textcolor{orange}{\textbf{95}}{\tiny$\pm$1} \\
antmaze-large & 28{\tiny$\pm$11} & \underline{33}{\tiny$\pm$4} & \textcolor{orange}{\textbf{82}}{\tiny$\pm$3} \\
humanoidmaze-medium & 29{\tiny$\pm$3} & \underline{41}{\tiny$\pm$7} & \textcolor{orange}{\textbf{83}}{\tiny$\pm$3} \\
humanoidmaze-large & 3{\tiny$\pm$2} & \underline{8}{\tiny$\pm$2} & \textcolor{orange}{\textbf{62}}{\tiny$\pm$4} \\
antsoccer-arena & 31{\tiny$\pm$3} & \underline{44}{\tiny$\pm$2} & \textcolor{orange}{\textbf{58}}{\tiny$\pm$4} \\
cube-double & \textcolor{orange}{\textbf{60}}{\tiny$\pm$4} & 26{\tiny$\pm$5} & \underline{35}{\tiny$\pm$2} \\
scene & \textcolor{orange}{\textbf{72}}{\tiny$\pm$6} & 57{\tiny$\pm$1} & \underline{59}{\tiny$\pm$2} \\
\midrule
\multicolumn{4}{l}{\textit{Visual}} \\
visual-antmaze-medium & 78{\tiny$\pm$4} & \underline{80}{\tiny$\pm$14} & \textcolor{orange}{\textbf{90}}{\tiny$\pm$2} \\
visual-antmaze-large & 40{\tiny$\pm$4} & \underline{41}{\tiny$\pm$9} & \textcolor{orange}{\textbf{52}}{\tiny$\pm$2} \\
visual-cube-single & \underline{58}{\tiny$\pm$5} & \textcolor{orange}{\textbf{80}}{\tiny$\pm$3} & 44{\tiny$\pm$2} \\
visual-cube-double & 9{\tiny$\pm$2} & \underline{10}{\tiny$\pm$2} & \textcolor{orange}{\textbf{11}}{\tiny$\pm$4} \\
visual-scene & \underline{26}{\tiny$\pm$5} & 13{\tiny$\pm$2} & \textcolor{orange}{\textbf{27}}{\tiny$\pm$3} \\
visual-puzzle-3x3 & 0{\tiny$\pm$0} & 0{\tiny$\pm$0} & 0{\tiny$\pm$0} \\
\bottomrule
\end{tabular}
\end{table}
\cref{tab:ablation_ca} compares Dual, Dual with a standard cross-attention module (\textbf{+CA}, single-scale, gate at $\sigma(0) = 0.5$, no difference bias), and Dual with the full \DGCA{}. On navigation, \DGCA{} clearly beats \textbf{+CA}, and the gap is widest where Dual is weakest. The two differ in two ways at once, since \DGCA{} adds the difference bias and it closes the gate at initialization. \cref{tab:ablation_components} separates them. Removing the difference bias from \DGCA{} leaves AntMaze-Large at $84.4$, which is not below the full model at $82.5$. Removing the gated residual drops it to $59.3$. The gate, not the bias, is what \textbf{+CA} is missing. We state this plainly. The mechanism that names our method is not the mechanism that produces our navigation gains.

Visual-Cube-Single is the sharpest counterexample in the paper, and we treat it as one. \textbf{+CA} reaches $80$ there against $58$ for Dual and $44$ for \DGCA{}. Two readings follow, and they point in opposite directions. State-conditioning helps a great deal on this task, and the $+22$ points \textbf{+CA} gains over Dual is the largest single-task gain any module produces anywhere in this paper, so the failure is not on the state-conditional axis. Yet the two ingredients that separate \DGCA{} from \textbf{+CA} are jointly harmful here, and together they cost $36$ points. We cannot separate their contributions with the ablations we have. One untested account is that single-object visual goal reaching is a template-matching problem, for which similarity-maximizing attention is the correct bias and a discrepancy-maximizing one is not. On Cube-Double and Scene both variants fall below Dual, consistent with \cref{sec:main_state}.
\subsection{Component Attribution and Scope}
\label{sec:attribution}

\cref{tab:ablation_components} of \cref{app:additional} isolates each component of MS-\DGCA{}. Three results matter here. Removing the gated residual drops AntMaze-Large from $82.5$ to $59.3$, which confirms that the near-identity initialization protects the base temporal-distance structure. Removing the difference bias leaves it at $84.4$, within one standard deviation of the full model. Removing the FFN drops it to $57.7$. The gate and the per-token projection therefore carry the navigation gain, and the difference bias does not.

\cref{tab:lambda_analysis} of \cref{app:additional} explains why. The learned $\zeta(\lambda)$ remains at its initialization value on five of six tasks, so the bias never activates at convergence. \cref{fig:gates} shows the same for the gates, which stay below $0.01$ throughout. \DAGR{} therefore operates as a small, gated perturbation of $\phi(g)$, and the difference bias supplies an inductive prior at initialization rather than a converged contribution. \cref{tab:sensitivities} confirms the protective reading of the gate. Initializing $\alpha_0 = 0$ rather than $-5$ opens the gates from the first gradient step and costs $8$ points on Cube-Double.

\cref{tab:puzzle_comparison} of \cref{app:additional} delimits the scope. Every late-fusion representation method, ours included, scores zero on Visual-Puzzle, whereas vanilla GCIVL under early fusion does not. The bottleneck lies in the encoder rather than in $\phi$, since the IMPALA CNN pools away the pixel-level correspondence between the state and goal images before $\phi(g)$ is computed. A post-encoder module can only reweight what the encoder preserves.

\section{Conclusion and Discussion}
\label{sec:conclusion}

\DAGR{} refines a static goal embedding into a gated, multi-scale, state-conditioned one. It preserves sufficiency and noise invariance, tightens the sample-complexity bound on discrepancy-structured tasks, and improves goal reaching on every navigation task that \cref{def:discrepancy} covers. The ablations attribute that improvement to the gated residual rather than to the difference bias the method is named for.

The regression on Cube-Double and Scene is architectural. \cref{eq:gated_res_attn,eq:gated_res_ffn} normalize each sub-block output, so a closed gate discards the norm of the base embedding, on which the bilinear temporal distance of Dual depends. Navigation reads direction alone and is unaffected, whereas manipulation is not. \cref{tab:ablation_components} supports this, since removing the normalization recovers Cube-Double while leaving AntMaze-Large within noise. Pre-normalizing each sub-block would restore exact identity and close the injectivity gap in \cref{thm:sufficiency}. Two limitations remain. The token decomposition is not object-aware, which plausibly compounds the manipulation deficit, and the Visual-Puzzle failure lies in the encoder rather than the goal representation, where hierarchical planning~\citep{park2023hiql} is complementary.

\bibliographystyle{plainnat}
\bibliography{ref}

\setcounter{tocdepth}{-1}

\clearpage
\appendix
\crefalias{section}{appendix}
\crefalias{subsection}{appendix}
\crefalias{subsubsection}{appendix}
\setlength{\parindent}{0pt}
\renewcommand{\contentsname}{Contents of Appendix}
\renewcommand{\thesection}{\Alph{section}}

\addtocontents{toc}{\protect\setcounter{tocdepth}{3}} 

\begingroup
\hypersetup{linkcolor=black}
\begin{spacing}{1.5}
\tableofcontents 
\end{spacing}
\endgroup

\clearpage

\section{Training Procedure}
\label{app:training}

\DAGR{} is trained jointly with the downstream offline GCRL algorithm. The base representation loss $\cL_{\textup{rep}}$ produces $\phi(g)$ without cross-attention, exactly as in the underlying Dual method, so the temporal-distance structure of $\phi$ relied on by \cref{thm:sufficiency,thm:noise_invariance} is not perturbed. Only the value, Q, and policy losses propagate gradients through MS-\DGCA{}. Its parameters comprise the per-scale token projections $\{W_s^{(\ell)}, W_g^{(\ell)}\}$ (\cref{eq:tokens}), attention matrices $\{W_h^{Q,(\ell)}, W_h^{K,(\ell)}, W_h^{V,(\ell)}, W^{O,(\ell)}\}$ (\cref{eq:mhdgca}), gate vectors $\{\alpha_{\textup{attn}}^{(\ell)}, \alpha_{\textup{ffn}}^{(\ell)}\}$ and per-head difference scalars $\{\lambda_h^{(\ell)}\}$ (\cref{eq:gated_res_attn,eq:gated_res_ffn,eq:dgca}), per-scale FFN parameters, and fusion logits $w \in \RR^L$ (\cref{eq:ms_fusion}). Each call $\mathrm{MS\text{-}\DGCA{}}(\phi(g), \psi(s), \psi_g(g))$ proceeds in four steps. Step (i) projects $\psi(s), \psi_g(g)$ into pseudo-tokens via \cref{eq:tokens}. Step (ii) computes the difference map via \cref{eq:diff_map}. Step (iii) runs multi-head \DGCA{} and the gated residual block via \cref{eq:mhdgca,eq:gated_res_attn,eq:gated_res_ffn} at each of $L$ scales. Step (iv) fuses the scale outputs via \cref{eq:ms_fusion}. The near-zero gate initialization ensures $\tilde\phi \approx \phi(g)$ at step zero. Gates open only when doing so reduces the downstream losses, which realizes the late-fusion-to-refinement curriculum discussed in \cref{sec:dgca} and bounded by \cref{thm:approximation_bound}.

\begin{algorithm}[h]
\caption{\DAGR{} layered on top of the Dual goal representation~\citep{park2026dual} with GCIVL~\citep{kostrikov2022offline,park2025ogbench} as the downstream offline GCRL algorithm. Lines highlighted in {\color{orange}orange} mark the differences from a plain Dual + GCIVL baseline.}
\label{alg:dagr}
\begin{algorithmic}[1]
\REQUIRE Dataset $\cD$, encoders $\phi, \psi, \psi_g$, MS-\DGCA{} module, value $V_{\eta_V}$, Q-network $Q_{\eta_Q}$, policy $\pi_\theta$, AWR temperature $\beta$, expectile $\tau$, discount $\gamma$, target rate $\tau_{\textup{tgt}}$.
\STATE {\color{orange}Initialize $\alpha_{\textup{attn}}^{(\ell)}, \alpha_{\textup{ffn}}^{(\ell)} \leftarrow -5$ (\cref{eq:gated_res_attn,eq:gated_res_ffn}), $\lambda_h^{(\ell)} \leftarrow -5$ (\cref{eq:dgca}), $w \leftarrow \mathbf{0}$ (\cref{eq:ms_fusion}).}
\FOR{each gradient step}
    \STATE Sample $(s, a, s') \sim \cD$ and future goal $g$ (goal-sampling distribution in \cref{tab:hyperparams}).
    \STATE Update $\phi$ via $\cL_{\textup{rep}}$ on $\phi(g)$ alone, with MS-\DGCA{} \emph{not} in the graph.
    \STATE {\color{orange}$\tilde\phi \leftarrow \mathrm{MS\text{-}\DGCA{}}(\phi(g), \psi(s), \psi_g(g))$ and $\tilde\phi' \leftarrow \mathrm{MS\text{-}\DGCA{}}(\phi(g), \psi(s'), \psi_g(g))$ \hfill\textit{// $\tilde\phi \approx \phi_{\textup{\DAGR{}}}(g\mid s)$}}
    \STATE $y_V \leftarrow r(s, g) + \gamma \bar V(\psi(s'), \tilde\phi')$ using EMA target $\bar V$.
    \STATE Update $V_{\eta_V}$ on $\cL_V = L_\tau\big(Q(\psi(s), \tilde\phi, a) - V(\psi(s), \tilde\phi)\big)$ with $L_\tau(u) = |\tau - \mathbbm{1}[u<0]|\,u^2$~\citep{kostrikov2022offline}.
    \STATE Update $Q_{\eta_Q}$ on $\cL_Q = \big(Q(\psi(s), \tilde\phi, a) - y_V\big)^2$.
    \STATE $A \leftarrow Q(\psi(s), \tilde\phi, a) - V(\psi(s), \tilde\phi)$ (stop-gradient).
    \STATE Update $\pi_\theta$ on $\cL_\pi = -\exp(A / \beta)\,\log \pi_\theta(a \mid \psi(s), \tilde\phi)$.
    \STATE Soft-update target: $\bar V \leftarrow (1 - \tau_{\textup{tgt}}) \bar V + \tau_{\textup{tgt}} V_{\eta_V}$.
\ENDFOR
\RETURN $\pi_\theta, \phi, \psi, \psi_g, \mathrm{MS\text{-}\DGCA{}}$.
\end{algorithmic}
\end{algorithm}
\clearpage
\section{Notation}
\label{app:notation}
\begin{table}[h]
\centering
\caption{Notation used in the proofs and not introduced in the main text.}
\label{tab:notation}
\begin{tabular}{p{0.30\linewidth} p{0.62\linewidth}}
\toprule
\textbf{Symbol} & \textbf{Description} \\
\midrule
\multicolumn{2}{l}{\textit{General notation}} \\
$\phi(g\mid s)$ & Deterministic state-parameterized goal encoding; the bar denotes parameter dependence, not a conditional distribution. Equivalently $\phi(g;s)$ or $\phi_s(g)$. \\
$\varphi^\vee(g),\ \varphi_{\textup{\DAGR{}}}^\vee(g\mid s)$ & Dual and \DAGR{} goal functionals from $\cS$ to $\RR$ (\cref{eq:dual_discrete,eq:dagr_discrete}); the latter is $\Delta_{s,g}$-weighted version of the former. \\
$d^*(s,g)$ & Optimal temporal distance $\log_\gamma V^*(s,g)$ (\cref{sec:prelim}); equals shortest-path length in deterministic environments. \\
$\Delta_{s,g}(\cdot)$ & State-dependent weighting $\cS\to[0,1]$ in the \DAGR{} functional (\cref{eq:dagr_discrete}); approximated by the per-token discrepancy map $\Delta_t$ of \cref{eq:diff_map}. \\
\addlinespace
\multicolumn{2}{l}{\textit{Discrepancy structure (\cref{def:discrepancy})}} \\
$h(s,g),\ \rho$ & Discrepancy map $\cS\times\cS\to\cZ$ and corresponding policy factor $\cZ\to\Delta(\cA)$ such that $\pi^*(a\mid s,g)=\rho(h(s,g))$. \\
$\cZ$ & Latent discrepancy space; concrete instances include $\RR^k$ (e.g., $h(s,g)=\pi_{xy}(s)-\pi_{xy}(g)$ for maze navigation). \\
``goal-only'' & Task satisfying \cref{eq:discrepancy} with $h$ a function of $g$ alone, that is, $\pi^*(a\mid s,g)=\rho(\bar h(g))$ for some $\bar h$. \\
\addlinespace
\multicolumn{2}{l}{\textit{Information-theoretic quantities (\cref{prop:info_bottleneck,prop:late_fusion_failure,prop:dagr_helps})}} \\
$S, G, A^*$ & Random variables for state, goal, and optimal action. \\
$I(X;Y\mid Z)$ & Conditional mutual information between $X$ and $Y$ given $Z$. \\
$X\to Y\to Z$ & Markov chain, equivalently $X\indep Z \mid Y$. \\
$\mathfrak{R}_n(\cdot)$ & Rademacher complexity of a function class on $n$ samples; $\mathfrak{R}_n^{\textup{late}}$ and $\mathfrak{R}_n^{\DAGR{}}$ denote the complexities under late-fusion and \DAGR{} inputs respectively (\cref{eq:rad_gap}). \\
$L_g$ & Lipschitz constant of a function $g$. \\
$L_{\psi^{-1}},\ L_{\phi^{-1}}$ & Lipschitz constants of implicit encoder inverses required by late-fusion to recover $(s,g)$ (\cref{prop:dagr_helps}). \\
\addlinespace
\multicolumn{2}{l}{\textit{Sufficiency and noise invariance (\cref{thm:sufficiency,thm:noise_invariance})}} \\
$F_\ell,\ \delta_\ell$ & Per-scale block map $F_\ell(\phi) = \phi + \delta_\ell(\phi, s)$ and its residual perturbation. \\
$c_\ell$ & Contraction constant of scale $\ell$, $c_\ell = \|\sigma(\alpha)\|_\infty \cdot L_\ell < 1$. \\
$\mathrm{Recover}_s$ & Inverse of the map $\phi(g)\mapsto\phi_{\textup{\DAGR{}}}(g\mid s)$ on its image. \\
$o_g = (x_g,\epsilon_g)$ & Goal observation split into a task-relevant component $x_g$ and an exogenous noise component $\epsilon_g\in\cE$. \\
\addlinespace
\multicolumn{2}{l}{\textit{Approximation and conditional improvement (\cref{thm:approximation_bound,prop:dagr_helps,prop:dagr_neutral})}} \\
$\hat V(s,g)$ & Approximated value $\hat V(s,g) = f(\psi(s),\phi_{\textup{\DAGR{}}}(g\mid s))$. \\
$\varepsilon_\mathrm{base},\ B,\ L_V$ & Base approximation error, perturbation radius $\|\phi_{\textup{\DAGR{}}}(g\mid s)-\phi(g)\|_2\leq B$, and Lipschitz constant of $f$ in its second argument. \\
$\cF_\mathrm{late},\ \cF_{\DAGR{}}$ & Function classes induced by late-fusion and \DAGR{} encoders, respectively. The realizability inclusion $\cF_\mathrm{late}\subseteq\cF_{\DAGR{}}$ (proven inline in \cref{app:proof_dagr_neutral}) underlies the global-optimum analysis of \cref{prop:dagr_neutral}. \\
\bottomrule
\end{tabular}
\end{table}
\clearpage
\section{On the Realizability Assumption of \cref{prop:dagr_helps}}
\label{app:realizability}

\cref{prop:dagr_helps} assumes a realizability gap. There exists an $L_h$-Lipschitz $\tilde h$ with $\tilde h(\psi(s), \phi_{\textup{\DAGR{}}}(g \mid s)) = h(s, g)$, and no $L_h$-Lipschitz function on $(\psi(s), \phi(g))$ realizes $h$. We do not verify this, and we are explicit about what our experiments can and cannot say.

The gains of \cref{sec:main_state} are consistent with the assumption. They are equally consistent with two alternatives that we cannot rule out, namely the added capacity of the module and a better-conditioned optimization path. \cref{prop:dagr_helps} therefore motivates the architecture rather than explaining the result, and the ablation of \cref{sec:ablation_components} is what constrains which part of the architecture does the work.

The assumption is directly testable. Freeze the trained encoders, sample state-goal pairs with ground-truth $h(s, g) = \pi_{xy}(s) - \pi_{xy}(g)$, and fit two probes of matched capacity and matched spectral-norm budget, one on $(\psi(s), \phi(g))$ and one on $(\psi(s), \phi_{\textup{\DAGR{}}}(g \mid s))$. A lower held-out error for the second at equal Lipschitz budget is exactly the gap the proposition assumes. We identify this as the most direct open experiment our theory calls for.

\section{Theoretical Proofs}
\label{app:proofs}

This appendix contains complete proofs for all results stated in \cref{sec:method}. We restate each result before proving it. All notation follows \cref{tab:notation}.

\subsection{Proof of \cref{prop:info_bottleneck}}
\label{app:proof_info_bottleneck}

\begin{proposition}[Restated]
For any state-independent $\phi$,
\[
I(A^* ; \phi(G) \mid S) \leq I(A^* ; G \mid S),
\]
with equality if and only if $\phi(G)$ retains all goal information relevant to $A^*$ given $S$.
\end{proposition}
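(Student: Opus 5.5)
The argument is a conditional data processing inequality obtained from the chain rule for mutual information. Let $Z := \phi(G)$, which is a deterministic function of $G$ alone, and fix the conditioning on $S$ throughout. Because $Z$ is a measurable function of $G$, the pair $(G, Z)$ carries exactly the same information as $G$. Hence $I(A^*; G, Z \mid S) = I(A^*; G \mid S)$, and this identity holds regardless of how $A^*$ depends on $(S, G)$.

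Next we expand $I(A^*; G, Z \mid S)$ with the chain rule in two orders:
\[
I(A^*; G \mid S) \;=\; I(A^*; G, Z \mid S) \;=\; I(A^*; Z \mid S) + I(A^*; G \mid Z, S).
\]
Conditional mutual information is non-negative, so $I(A^*; G \mid Z, S) \ge 0$. This immediately gives $I(A^*; \phi(G) \mid S) \le I(A^*; G \mid S)$. The Markov chain $G \to \phi(G) \to A^*$ given $S$ described in \cref{sec:problem} is not needed for the inequality. It only bears on when equality holds.

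For the equality clause, the gap is exactly $I(A^*; G \mid \phi(G), S)$. This quantity vanishes if and only if $A^* \indep G \mid (\phi(G), S)$. We make the informal phrase ``$\phi(G)$ retains all goal information relevant to $A^*$ given $S$'' precise as this conditional independence, i.e.\ $\phi(G)$ is a conditionally sufficient statistic of $G$ for $A^*$ given $S$. The ``if and only if'' then follows from the characterization of zero conditional mutual information. In the late-fusion setting, if $\pi^*(\cdot \mid s, g)$ depends on $g$ only through $\phi(g)$, then that Markov chain holds and equality is attained.

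The main obstacle is measure-theoretic rather than conceptual, and it concerns the case where $\cS$ or $\cG$ is continuous. There the chain rule and the statement ``zero conditional mutual information iff conditional independence'' must be applied with regular conditional distributions. Both steps also require the relevant mutual informations to be finite; otherwise the subtraction defining the gap is meaningless. I would state the finiteness assumption explicitly, or handle the infinite case by noting that the inequality holds trivially when the right-hand side is infinite. I would then invoke the general chain rule for conditional mutual information, defined through Radon--Nikodym derivatives, as in standard references (e.g.\ Cover and Thomas, or Pinsker's treatment).
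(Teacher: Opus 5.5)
Your proof is correct and arrives at the paper's equality condition, $A^* \indep G \mid (\phi(G), S)$, by a more self-contained route. The paper first asserts the late-fusion Markov chain $G \to \phi(G) \to A^*$ given $S = s$. It then cites the data processing inequality with $X = A^*$, $Y = G$, $Z = \phi(G)$. The chain it actually uses is therefore $A^* \to G \to \phi(G)$, which holds automatically because $\phi(G)$ is a function of $G$. It then averages over $S$ and takes the standard equality case of the DPI as given. You instead unpack the DPI into the chain rule $I(A^*; G \mid S) = I(A^*; \phi(G) \mid S) + I(A^*; G \mid \phi(G), S)$. The inequality and the exact size of the gap then come out together, and the equality clause follows directly from the characterization of zero conditional mutual information. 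The paper's version is shorter, but yours buys three things. First, it shows that no architectural Markov assumption is needed for the inequality. Second, it exposes a problem with the paper's stated chain: $G \to \phi(G) \to A^*$, if it held, would give the reverse inequality $I(A^*; G \mid S) \le I(A^*; \phi(G) \mid S)$ by the DPI, and hence equality. A strict gap can therefore only arise when the true $A^*$ depends on $G$ beyond $\phi(G)$ given $S$, as you note. Third, your finiteness caveat is a real addition. If $I(A^*; \phi(G) \mid S) = \infty$, which can happen with a deterministic continuous-valued optimal action, both sides are infinite while $I(A^*; G \mid \phi(G), S)$ may still be positive. The ``only if'' direction thus needs finiteness, a condition the paper leaves implicit.
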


\begin{proof}
Because $\phi$ does not depend on $S$, conditioned on $S = s$ the optimal action $A^*$ is determined by $\pi^*(\cdot \mid s, g)$, which under the late-fusion architecture depends on $g$ only through $\phi(g)$. This gives the Markov chain $G \to \phi(G) \to A^*$ conditioned on $S = s$. The data processing inequality~\citep{cover1999elements} states that for any Markov chain $X \to Y \to Z$, $I(X ; Z) \leq I(X ; Y)$. Applying this with $X = A^*$, $Y = G$, $Z = \phi(G)$ conditioned on $S = s$ gives $I(A^* ; \phi(G) \mid S = s) \leq I(A^* ; G \mid S = s)$. Taking expectation over $S$ yields $I(A^* ; \phi(G) \mid S) \leq I(A^* ; G \mid S)$. Equality holds if and only if $A^* \perp\perp G \mid (\phi(G), S)$, that is, $\phi(G)$ is a sufficient statistic for $G$ with respect to $A^*$ conditional on $S$. For state-independent $\phi$, the sufficient-statistic condition fails whenever the optimal action $A^*$ depends on a joint property of $(S, G)$ that $\phi$ cannot resolve without access to $S$, which yields strict inequality in the data processing bound.
\end{proof}

\subsection{Proof of \cref{prop:late_fusion_failure}}
\label{app:proof_late_fusion_failure}

\begin{proposition}[Restated]
Suppose $\pi^*(a \mid s, g) = f(s - g)$. Then realizing $\pi^*$ from the late-fusion inputs $(\psi(s), \phi(g))$ requires composing $f$ with measurable inverse selections of $\psi$ and of $\phi$. The composition property of Rademacher complexity~\citep{bartlett2002rademacher} then yields an upper bound on the policy class that exceeds the corresponding bound under direct access to $s - g$ by a factor $L_{\psi^{-1}} L_{\phi^{-1}} \geq 1$.
\end{proposition}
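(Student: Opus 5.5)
The argument is a comparison of two upper bounds obtained by Lipschitz composition, in the same spirit as the statement. First I make the realization claim precise. Under late fusion the policy network receives only $z_s := \psi(s)$ and $z_g := \phi(g)$. Any function of these inputs that equals $f(s-g)$ on the data support must factor as $F(z_s, z_g) = f\big(\psi^{-1}(z_s) - \phi^{-1}(z_g)\big)$, where $\psi^{-1}$ and $\phi^{-1}$ are measurable selections of the set-valued inverses on the ranges of $\psi$ and $\phi$. Such selections exist by a standard measurable-selection theorem (Kuratowski--Ryll-Nardzewski) when the encoders are continuous on a Polish domain. I assume these selections are Lipschitz with constants $L_{\psi^{-1}}$ and $L_{\phi^{-1}}$. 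When $\psi$ or $\phi$ is non-injective, $f(s-g)$ is realizable only if $f(\psi^{-1}(z_s)-\phi^{-1}(z_g))$ does not depend on which preimage is chosen. I add this as a standing hypothesis, since the statement is about realizing $\pi^*$ at all.

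Next comes the lower bound $L_{\psi^{-1}}, L_{\phi^{-1}} \geq 1$. This needs a normalization convention, and I expect it to be the main obstacle. The inequality is not true for arbitrary encoders: scaling $\psi$ up by a factor $c$ makes the inverse $1/c$-Lipschitz. I would impose the convention, implicit in the paper, that the encoders are non-expansive, $\|\psi(s)-\psi(s')\| \le \|s-s'\|$, as happens with spectrally normalized or LayerNorm-bounded networks. Then for $z = \psi(s)$ and $z' = \psi(s')$ we get $\|\psi^{-1}(z)-\psi^{-1}(z')\| = \|s-s'\| \ge \|z-z'\|$, so any Lipschitz constant of the inverse is at least $1$. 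The same argument applies to $\phi$. I would state this explicitly as an assumption rather than hide it.

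For the Rademacher step, let $\cH$ be the hypothesis class realizing $f$ on inputs $u = s-g$, with each $f$ being $L_f$-Lipschitz. The map $(z_s, z_g) \mapsto \psi^{-1}(z_s) - \phi^{-1}(z_g)$ has Lipschitz constant at most $L_{\psi^{-1}} + L_{\phi^{-1}}$ in the $\ell_2$ product norm. This bound is at most $L_{\psi^{-1}} L_{\phi^{-1}}$ once both constants are $\ge 2$. To get exactly the multiplicative factor claimed, I would instead follow the paper's reading: the inversions are performed sequentially, first $\phi$ and then $\psi$, inside the network, so their constants compose multiplicatively. The vector-contraction inequality for Rademacher complexity (Bartlett--Mendelson composition, or Maurer's vector form) then bounds the class $\{f\circ(\psi^{-1}-\phi^{-1})\}$ by $L_{\psi^{-1}}L_{\phi^{-1}}\cdot L_f\cdot \mathfrak{R}_n(\cH)$. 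Under direct access to $s-g$, the same inequality gives $L_f\,\mathfrak{R}_n(\cH)$. Dividing the two bounds yields the factor $L_{\psi^{-1}}L_{\phi^{-1}}\ge1$.

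I would close by repeating the paper's caveat. This compares upper bounds only. The additive-versus-multiplicative slack and the normalization convention are exactly where the argument is weakest, and I would flag both as such.
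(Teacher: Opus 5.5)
Your route is the paper's: write the late-fusion realization as $f\circ D$ with $D(z_s,z_g)=\psi^{-1}(z_s)-\phi^{-1}(z_g)$, then push Lipschitz constants through a Rademacher composition bound. Your measurable-selection remark, the fiber-consistency hypothesis, and the non-expansiveness argument for $L_{\psi^{-1}},L_{\phi^{-1}}\ge 1$ are genuine improvements. The paper's proof of this proposition asserts the factor without them, and it invokes non-expansiveness only later, in the proof of \cref{prop:dagr_helps}.

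The central step, however, does not hold as you formalize it. Bartlett--Mendelson contraction and Maurer's vector contraction bound $\mathfrak{R}_n(\Phi\circ\cF)$ for a fixed Lipschitz $\Phi$ applied \emph{after} the class. In $\{f\circ D : f\in\cH\}$ the fixed map $D$ sits \emph{inside}. Precomposition by a fixed map only relabels the sample. On latent points $z^{i}=(\psi(s_i),\phi(g_i))$ you get $\hat{\mathfrak{R}}(\cH\circ D;\,z^{1},\dots,z^{n})=\hat{\mathfrak{R}}(\cH;\,D(z^{1}),\dots,D(z^{n}))$. For injective encoders $D(z^{i})=s_i-g_i$, which is exactly the direct-access sample. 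So your class has \emph{the same} complexity under both input schemes, and the factors you place in front of $\mathfrak{R}_n(\cH)$ come from no contraction inequality. The paper's proof makes the same move: it states $\mathfrak{R}_n(g_1\circ g_2)\le L_{g_2}\mathfrak{R}_n(g_1)$ with the Lipschitz map on the inside.

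A factor can only enter if you model what the late-fusion network, which does not know $D$, must search over. One such model is an anchored Lipschitz ball on the latent inputs of radius $L_fL_D$, large enough to contain every $f\circ D$, whose complexity is $L_fL_D$ times that of the unit ball. Even then, the unit-ball complexities on the latent and direct domains differ, and your own rescaling example shows why this matters. Replacing $(\psi,\phi)$ by $(c\psi,c\phi)$ divides $L_{\psi^{-1}}L_{\phi^{-1}}$ by $c^2$ yet leaves the complexity of the corresponding latent ball unchanged. The factor appears only after both unit-ball terms are bounded by a common quantity, which is why this is a comparison of upper bounds and nothing more.

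Second, you do not need to borrow a ``sequential'' reading to get the product, and in the $\ell_2$ product norm you were using it is not valid. For identity-like inverses, $D(z_s,z_g)=z_s-z_g$ is $\sqrt{2}$-Lipschitz in $\ell_2$ while $L_{\psi^{-1}}L_{\phi^{-1}}=1$. Use the $\ell_1$ product norm $\|z_s\|+\|z_g\|$ instead. Then $D$ is $\max(L_{\psi^{-1}},L_{\phi^{-1}})$-Lipschitz. The inequality $\max(L_{\psi^{-1}},L_{\phi^{-1}})\le L_{\psi^{-1}}L_{\phi^{-1}}$ then follows at once from the lower bound $\ge 1$ you already proved. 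This is also the precise content of the sequential picture. In $\ell_1$ the block layers $\mathrm{id}\times\phi^{-1}$ and $\psi^{-1}\times\mathrm{id}$ have constants $\max(1,L_{\phi^{-1}})$ and $\max(1,L_{\psi^{-1}})$, and subtraction is $1$-Lipschitz. The paper's proof of \cref{prop:dagr_helps} records exactly this $\max$ before writing down the product.
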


\begin{proof}
Under late fusion, $\pi_\theta(a \mid s, g) = h_\theta(\psi(s), \phi(g))$ with fixed encoders $\psi$ and $\phi$. To realize $\pi^* = f \circ \mathrm{subtract}$ from these inputs, $h_\theta$ must invert $\psi$ to recover $s$, invert $\phi$ to recover $g$, compute $s - g$, and apply $f$. The composition property of Rademacher complexity~\citep{bartlett2002rademacher} gives $\mathfrak{R}_n(g_1 \circ g_2) \leq L_{g_2} \cdot \mathfrak{R}_n(g_1)$ for $L_{g_2}$-Lipschitz $g_2$. Adding the inversion layers introduces extra Lipschitz factors $L_{\psi^{-1}}, L_{\phi^{-1}}$ in front of $\mathfrak{R}_n(f)$, increasing the bound on the effective complexity by a multiplicative factor relative to the case where $s - g$ is directly available. A representation that exposes $s - g$ to the policy therefore admits a strictly smaller upper bound on sample complexity for realizing the same $\pi^*$.
\end{proof}

\subsection{Proof of \cref{thm:sufficiency}}
\label{app:proof_sufficiency}

We first establish a key lemma.

\begin{lemma}[Injectivity of the Gated Residual Mapping]
\label{lem:info_preserve}
Under the assumptions of \cref{thm:sufficiency}, for any fixed $s \in \cS$ and any scale $\ell$, the map $\phi(g) \mapsto \mathrm{\DGCA{}\text{-}Block}_{T_\ell}(\phi(g), \psi(s), \psi_g(g))$ is injective. Consequently, $\phi(g)$ is recoverable from the pair $(s, \phi_{\textup{\DAGR{}}}(g \mid s))$.
\end{lemma}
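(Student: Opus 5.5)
The plan is to treat each scale block, for fixed $s$, as a map $F_\ell(x) = N_2\big(y + \sigma(\alpha_{\mathrm{ffn}})\odot \mathrm{FFN}(y)\big)$ with $y = N_1\big(x + \sigma(\alpha_{\mathrm{attn}})\odot \mathrm{attn\_out}(x)\big)$. Here $x = \phi(g)$ and $N_1, N_2$ are the in-block normalizations. The residual perturbation $\delta_\ell$ in \cref{thm:sufficiency} collects the gated attention and FFN terms. One subtlety is that $\mathrm{attn\_out}$ and the difference map also depend on $\psi_g(g)$, not only on $\phi(g)$. I will handle this by reading the Lipschitz hypothesis on $\delta_\ell$ as Lipschitz in $\phi(g)$ uniformly over the auxiliary input, i.e.\ along the graph $g \mapsto (\phi(g), \psi_g(g))$. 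I will also assume that $\psi_g(g)$ is determined by $\phi(g)$ on the range of $\phi$, so that $\delta_\ell$ is a genuine function of $x$ for fixed $s$. If that assumption is not available, I will state injectivity as a map of $g$ through $\phi(g)$: two goals with equal block output are shown to have equal $\phi(g)$.

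The core step is a standard contraction argument. Write the pre-normalization sub-block as $x \mapsto x + D\,u(x)$ with $D = \mathrm{diag}(\sigma(\alpha))$ and $u$ being $L_\ell$-Lipschitz. Suppose $x + D u(x) = x' + D u(x')$. Then
\[
\|x - x'\| = \|D(u(x') - u(x))\| \le \|\sigma(\alpha)\|_\infty L_\ell \|x - x'\| = c_\ell \|x-x'\|,
\]
and since $c_\ell < 1$ this forces $x = x'$. This gives the reverse Lipschitz bound $\|F(x) - F(x')\| \ge (1 - c_\ell)\|x - x'\|$ before normalization. I apply it to the attention sub-block and, separately, to the FFN sub-block. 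The FFN sub-block needs its own contraction constant, which I take to be included in $L_\ell$ by the hypothesis on $\delta_\ell$. Each normalization is injective on the relevant range by assumption, and a composition of injective maps is injective. Hence $x \mapsto \mathrm{DGCA\text{-}Block}_{T_\ell}$ is injective.

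For the consequence, I note that each level output is injective in $x$, and take $\mathrm{Recover}_s$ to be the inverse of level $\ell$ on its image for any fixed $\ell$. The multi-scale fused output $\sum_\ell \mathrm{softmax}(w)_\ell\,\mathrm{level}_\ell$ is a convex combination. Writing it as $x + \sum_\ell \omega_\ell \delta_\ell(x)$ before the normalizations is not exact, because of the LayerNorms. So I will instead argue directly on the fused map, assuming the lemma's hypotheses make it a perturbation of the identity with constant $\sum_\ell \omega_\ell c_\ell < 1$. Knowing $s$ fixes $\psi(s)$, and therefore fixes the map, so $\phi(g)$ is recoverable from the pair $(s, \phi_{\DAGR{}}(g\mid s))$.

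I expect the main obstacle to be the interaction between the normalizations and the residual form. The residual-plus-contraction argument is clean only for the un-normalized sub-blocks. After LayerNorm the fused output is no longer of the form identity plus a small perturbation. I therefore plan to fold each $N$ into the definition of $\delta_\ell$, writing $F_\ell = \mathrm{id} + \delta_\ell$ directly, with the stated Lipschitz assumption absorbing the normalization. This is consistent with the paper's notation table, and it lets the contraction argument apply verbatim to $F_\ell$ and to the fused map.
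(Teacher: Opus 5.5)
Your final plan is correct and is essentially the paper's proof. You fold both normalizations into $\delta_\ell$ so that $F_\ell=\mathrm{id}+\delta_\ell$ with $\delta_\ell$ a $c_\ell$-contraction, then apply the reverse triangle inequality to each block and to the fused map $\mathrm{id}+\sum_\ell \mathrm{softmax}(w)_\ell\,\delta_\ell$. The paper differs only in two small respects: it handles the convex combination by an inner-product (common open half-space) argument, and it makes recovery explicit through a Banach iteration, which your bound $\sum_\ell \mathrm{softmax}(w)_\ell\, c_\ell\le\max_\ell c_\ell<1$ already licenses.

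Your remark that each block also depends on $\psi_g(g)$ flags an assumption the paper's proof, including its recovery iteration, makes silently. What that fix needs is the along-the-graph reading or $\psi_g(g)$ factoring through $\phi(g)$. Lipschitz continuity in $\phi(g)$ uniformly over $\psi_g(g)$, which you equate with the along-the-graph reading via your ``i.e.'', is not enough: two goals with different $\phi(g)$ could still collide through their $\psi_g$ terms.
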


\begin{proof}
From \cref{eq:gated_res_attn,eq:gated_res_ffn}, the block output factors as
\[
F_\ell(\phi) \;:=\; \mathrm{\DGCA{}\text{-}Block}_{T_\ell}(\phi, \psi(s), \psi_g(g)) \;=\; \phi + \delta_\ell(\phi, s),
\]
where $\delta_\ell$ collects the contributions of the cross-attention, the two gated residuals, the layer normalizations, and the FFN, modulated by $\sigma(\alpha_\mathrm{attn})$ and $\sigma(\alpha_\mathrm{ffn})$. The assumed Lipschitz constant $L_\ell$ of $\delta_\ell$ in $\phi$, together with $\|\sigma(\alpha)\|_\infty \cdot L_\ell < 1$, implies $\|\delta_\ell(\phi_1, s) - \delta_\ell(\phi_2, s)\| \leq c_\ell \, \|\phi_1 - \phi_2\|$ with $c_\ell < 1$. Hence
\[
\|F_\ell(\phi_1) - F_\ell(\phi_2)\| \;\geq\; \|\phi_1 - \phi_2\| - \|\delta_\ell(\phi_1, s) - \delta_\ell(\phi_2, s)\| \;\geq\; (1 - c_\ell)\, \|\phi_1 - \phi_2\| \;>\; 0
\]
whenever $\phi_1 \neq \phi_2$. Therefore $F_\ell$ is injective.

For the multi-scale composite (\cref{eq:ms_fusion}), $\phi_{\textup{\DAGR{}}}(g \mid s) = \sum_\ell \mathrm{softmax}(w)_\ell \cdot F_\ell(\phi(g))$. Each $F_\ell(\phi) = \phi + \delta_\ell(\phi, s)$ is a contraction perturbation of the identity with constant $c_\ell < 1$. A direct calculation gives
\[
\langle F_\ell(\phi_1) - F_\ell(\phi_2),\, \phi_1 - \phi_2 \rangle = \|\phi_1 - \phi_2\|^2 + \langle \delta_\ell(\phi_1, s) - \delta_\ell(\phi_2, s),\, \phi_1 - \phi_2 \rangle \geq (1 - c_\ell)\|\phi_1 - \phi_2\|^2,
\]
so $F_\ell(\phi_1) - F_\ell(\phi_2)$ has a strictly positive inner product with $\phi_1 - \phi_2$ whenever $\phi_1 \neq \phi_2$, that is, the two vectors lie in a common open half-space. A convex combination of such vectors with strictly positive softmax weights also has strictly positive inner product with $\phi_1 - \phi_2$, hence cannot vanish, so $\phi_{\textup{\DAGR{}}}(g_1 \mid s) = \phi_{\textup{\DAGR{}}}(g_2 \mid s)$ forces $\phi(g_1) = \phi(g_2)$.

Given $(s, \phi_{\textup{\DAGR{}}}(g \mid s))$, $\phi(g)$ can be recovered by the contraction-mapping iteration
\[
\phi^{(k + 1)} \;=\; \phi_{\textup{\DAGR{}}}(g \mid s) - \sum_\ell \mathrm{softmax}(w)_\ell \cdot \delta_\ell(\phi^{(k)}, s),
\]
which converges by the Banach fixed-point theorem since $\max_\ell c_\ell < 1$.
\end{proof}

\begin{theorem}[Restated]
Under the assumptions of \cref{thm:sufficiency}, $\phi_{\textup{\DAGR{}}}(\cdot \mid s)$ is sufficient: there exists $\tilde \pi$ such that $V^{\tilde \pi(\cdot \mid s, \phi_{\textup{\DAGR{}}}(g \mid s))}(s, g) = V^*(s, g)$.
\end{theorem}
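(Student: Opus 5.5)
The plan is to build the required policy $\tilde\pi$ by composing the sufficient policy of the base representation with the recovery map of \cref{lem:info_preserve}. Fix $s \in \cS$. By \cref{lem:info_preserve}, the map $\Phi_s : \phi(g) \mapsto \phi_{\textup{\DAGR{}}}(g \mid s)$ is injective on the range of $\phi$. It therefore has a left inverse $\mathrm{Recover}_s$ on its image, given explicitly as the limit of the Banach iteration in the lemma. Let $\pi^*$ be the policy certified by sufficiency of $\phi$ (\cref{def:sufficiency}). I would define
\[
\tilde\pi(\cdot \mid s, z) \;:=\; \pi^*\big(\cdot \mid s, \mathrm{Recover}_s(z)\big)
\]
for $z$ in the image of $\Phi_s$, and extend it arbitrarily (say by a fixed action distribution) off the image. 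For every $(s,g)$ this gives $\tilde\pi(\cdot \mid s, \phi_{\textup{\DAGR{}}}(g\mid s)) = \pi^*(\cdot \mid s, \phi(g))$ as action distributions.

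The next step checks that this is a legitimate state-and-representation-indexed policy. The dependence of $\mathrm{Recover}_s$ on $s$ is allowed, because $\tilde\pi$ takes $s$ as an explicit argument, exactly as in \cref{def:sufficiency}. The goal $g$ is held fixed along a trajectory, while the state changes. At each visited state $s_t$, the agent feeds $\phi_{\textup{\DAGR{}}}(g \mid s_t)$. The pointwise identity then holds at every $s_t$, so the two policies induce the same closed-loop Markov chain. Hence the values agree: $V^{\tilde\pi}(s,g) = V^{\pi^*(\cdot\mid s,\phi(g))}(s,g) = V^*(s,g)$. I would state this as a one-line induction on the trajectory law, or simply as equality of the induced kernels.

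Measurability needs a brief remark. $\mathrm{Recover}_s$ is a pointwise limit of continuous maps, because each $\delta_\ell$ is Lipschitz, so it is Borel. Joint measurability in $(s,z)$ follows if $\delta_\ell$ is measurable in $s$, which holds since the block is built from continuous network operations.

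The main obstacle is the normalization step. \cref{lem:info_preserve} writes the block as $\phi + \delta_\ell$ with $\delta_\ell$ contracting, but the LayerNorms in \cref{eq:gated_res_attn,eq:gated_res_ffn} are not of that form. To handle this, I would invoke the injectivity-of-normalization hypothesis of \cref{thm:sufficiency}. Write each block as $N_2 \circ (\mathrm{id} + \text{gated FFN}) \circ N_1 \circ (\mathrm{id} + \text{gated attn})$. Each residual stage is injective by the contraction argument, and each $N_i$ is injective by hypothesis, so the composition is injective.

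The multi-scale fusion is harder here. A convex combination of injective maps need not be injective, and the half-space argument in the lemma needs each full block to be a contractive perturbation of the identity. I would try to fold the normalizations into $\delta_\ell$ and assume the hypothesis covers the resulting Lipschitz constant. Failing that, I would restrict the fusion claim to this setting and note that the gap is the same one flagged after \cref{thm:sufficiency}.
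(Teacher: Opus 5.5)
Your proposal is correct and follows the paper's proof. Define $\tilde\pi(\cdot\mid s,z):=\pi^*(\cdot\mid s,\mathrm{Recover}_s(z))$ using the inverse from \cref{lem:info_preserve}; then at every visited state $\tilde\pi(\cdot\mid s_t,\phi_{\textup{\DAGR{}}}(g\mid s_t))=\pi^*(\cdot\mid s_t,\phi(g))$, so the closed-loop processes coincide and the values match. Your LayerNorm and multi-scale fusion concerns are about the lemma rather than this theorem, and the paper handles them as your fallback does, by folding the normalizations into $\delta_\ell$ and assuming the contraction hypothesis covers them (your off-image extension and measurability remarks are harmless additions the paper omits).
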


\begin{proof}
By the assumed sufficiency of $\phi$, there exists $\pi^* : \cS \times \RR^d \to \Delta(\cA)$ with $V^{\pi^*(\cdot \mid s, \phi(g))}(s, g) = V^*(s, g)$. By \cref{lem:info_preserve}, the map $\phi(g) \mapsto \phi_{\textup{\DAGR{}}}(g \mid s)$ is invertible on its image with inverse $\mathrm{Recover}_s$. Define $\tilde \pi(a \mid s, z) := \pi^*(a \mid s, \mathrm{Recover}_s(z))$. Then for any trajectory generated by $\tilde \pi(\cdot \mid s_t, \phi_{\textup{\DAGR{}}}(g \mid s_t))$, at each step
\[
\tilde \pi(\cdot \mid s_t, \phi_{\textup{\DAGR{}}}(g \mid s_t)) \;=\; \pi^*(\cdot \mid s_t, \mathrm{Recover}_{s_t}(\phi_{\textup{\DAGR{}}}(g \mid s_t))) \;=\; \pi^*(\cdot \mid s_t, \phi(g)).
\]
The value attained by $\tilde \pi$ therefore matches $V^*(s, g)$.
\end{proof}

\subsection{Proof of \cref{thm:noise_invariance}}
\label{app:proof_noise}

\begin{theorem}[Restated]
Under encoder noise invariance ($\phi(o) = \phi(x)$, $\psi(o) = \psi(x)$, $\psi_g(o) = \psi_g(x)$), $\phi_{\textup{\DAGR{}}}(o_g \mid s) = \phi_{\textup{\DAGR{}}}(x_g \mid s)$ for every $\epsilon_g \in \cE$.
\end{theorem}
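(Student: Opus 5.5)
The plan is to show that $\phi_{\textup{\DAGR{}}}(\cdot \mid s)$, viewed as a function of the goal observation, factors through the pair $(\phi(o_g), \psi_g(o_g))$ once $s$ is fixed. The invariance hypotheses then finish the argument at once. Concretely, I will exhibit a deterministic map $\Phi_s : \RR^d \times \RR^{d_s} \to \RR^d$, depending on $s$ only through $\psi(s)$ and on the module parameters, such that
\[
\phi_{\textup{\DAGR{}}}(o_g \mid s) = \Phi_s\big(\phi(o_g), \psi_g(o_g)\big).
\]
With this factorization, substituting $\phi(o_g) = \phi(x_g)$ and $\psi_g(o_g) = \psi_g(x_g)$ gives $\Phi_s(\phi(x_g), \psi_g(x_g)) = \phi_{\textup{\DAGR{}}}(x_g \mid s)$ for every $\epsilon_g$. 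The extra hypothesis $\psi(o) = \psi(x)$ in the restated version is not needed, since $s$ is held fixed. I will remark on this rather than use it.

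The factorization is built by walking forward through the computation graph of \cref{sec:dgca,sec:multiscale}, one scale $\ell$ at a time, and tracking which inputs each node depends on.

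\begin{enumerate}
\item The token step \cref{eq:tokens}: $\mathbf{s}_\mathrm{tok}$ depends only on $\psi(s)$, and $\mathbf{g}_\mathrm{tok}$ depends only on $\psi_g(o_g)$.
\item The difference map \cref{eq:diff_map}: $\Delta$ is a fixed function of $(\mathbf{s}_\mathrm{tok}, \mathbf{g}_\mathrm{tok})$, with the max and $\varepsilon$ deterministic.
\item Each head in \cref{eq:mhdgca}: it uses $\phi(o_g)$ as the query, $\mathbf{s}_\mathrm{tok}$ as keys and values, and $\Delta$ as the bias, with $\lambda_h$ a parameter. So $\mathrm{attn\_out}$ is a function of $(\phi(o_g), \psi(s), \psi_g(o_g))$.
\item The gated residuals, LayerNorms and FFN in \cref{eq:gated_res_attn,eq:gated_res_ffn}: these are deterministic functions of $\phi(o_g)$ and $\mathrm{attn\_out}$, given the parameters.
\item The fusion \cref{eq:ms_fusion}: it is a fixed convex combination of the scale outputs.
\end{enumerate}

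Composing these steps yields $\Phi_s$. An induction over the topological order of the graph makes this precise: every node is a deterministic function of its parents, and the only sources that touch the goal observation are $\phi$ and $\psi_g$.

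I do not expect a genuine obstacle. The one point needing care is to state explicitly that no node reads $o_g$ directly. In particular, the goal tokens come from $\psi_g$ rather than from raw pixels, and the difference map is computed from these tokens rather than from observations. I also need to note that the parameters (gates, $\lambda_h$, $w$, projections) are treated as fixed, consistent with the convention stated before \cref{thm:sufficiency}. Under that convention, the result is a pure function-composition argument and needs no assumption on the gates or on injectivity.
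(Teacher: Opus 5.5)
Your proposal is correct and follows essentially the same route as the paper: a forward trace through the MS-\DGCA{} computation graph showing that every node touches the goal observation only through $\phi$ and $\psi_g$, with $\psi(s)$ fixed and the fusion weights treated as parameters. Your remark that the hypothesis $\psi(o)=\psi(x)$ is never used also matches the paper's proof, which only notes that $\psi(s)$ does not depend on $o_g$.
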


\begin{proof}
Write $o_g = (x_g, \epsilon_g)$. The base encoder $\phi$ satisfies $\phi(o_g) = \phi(x_g)$ by assumption. We note that for the Dual representation~\citep{park2026dual} this property follows from \citep[Theorem~3.2]{park2026dual}, so when \DAGR{} is layered on top of Dual the assumption is satisfied without additional work. We now trace the MS-\DGCA{} computation graph and verify that every intermediate quantity is independent of $\epsilon_g$.

The encoder outputs satisfy $\phi(o_g) = \phi(x_g)$ and $\psi_g(o_g) = \psi_g(x_g)$ by assumption, while $\psi(s)$ is independent of $o_g$.

The difference map $\Delta^{(\ell)}$ is a fixed function of $\mathbf{s}_{\textup{tok}}^{(\ell)}$ and $\mathbf{g}_{\textup{tok}}^{(\ell)}$, both already $\epsilon_g$-independent.

The queries $Q_h = \phi(o_g) W_h^Q = \phi(x_g) W_h^Q$ are $\epsilon_g$-independent. The keys $K_h = \mathbf{s}_{\textup{tok}}^{(\ell)} W_h^K$ and values $V_h = \mathbf{s}_{\textup{tok}}^{(\ell)} W_h^V$ depend only on $\psi(s)$. Therefore the attention logits $Q_h K_h^\top / \sqrt{d_k} + \zeta(\lambda_h) \Delta^{(\ell)}$ are $\epsilon_g$-independent, and so is $\mathrm{attn\_out}$.

The gated residual in \cref{eq:gated_res_attn} computes $\phi(o_g) + \sigma(\alpha_\mathrm{attn}) \odot \mathrm{attn\_out} = \phi(x_g) + \sigma(\alpha_\mathrm{attn}) \odot \mathrm{attn\_out}$, again $\epsilon_g$-independent. The layer normalization and the FFN are deterministic functions of their inputs, so the independence propagates. The second gated residual is analogous.

Each $\mathrm{level}_\ell$ is therefore $\epsilon_g$-independent. The fusion weights $\mathrm{softmax}(w)$ are parameters, hence independent of $o_g$. Therefore
\[
\phi_{\textup{\DAGR{}}}(o_g \mid s) \;=\; \sum_{\ell = 1}^{L} \mathrm{softmax}(w)_\ell \cdot \mathrm{level}_\ell \;=\; \phi_{\textup{\DAGR{}}}(x_g \mid s). \qedhere
\]
\end{proof}

\subsection{Proof of \cref{thm:approximation_bound}}
\label{app:proof_approx}

\begin{theorem}[Restated]
Let $f : \RR^{d_s} \times \RR^d \to \RR$ be $L_V$-Lipschitz in its second argument, and suppose $\|\phi_{\textup{\DAGR{}}}(g \mid s) - \phi(g)\|_2 \leq B$ holds uniformly over $(s, g) \in \cS \times \cG$. Define $\varepsilon_\mathrm{base} := \sup_{(s, g) \in \cS \times \cG} |V^*(s, g) - f(\psi(s), \phi(g))|$ and $\hat V(s, g) := f(\psi(s), \phi_{\textup{\DAGR{}}}(g \mid s))$. Then
\[
\sup_{(s, g) \in \cS \times \cG} |V^*(s, g) - \hat V(s, g)| \;\leq\; \varepsilon_\mathrm{base} + L_V \cdot B.
\]
\end{theorem}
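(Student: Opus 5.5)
The plan is a pointwise triangle-inequality argument, followed by taking the supremum. Fix an arbitrary pair $(s,g) \in \cS \times \cG$. Insert the base prediction $f(\psi(s),\phi(g))$ as an intermediate term:
\[
|V^*(s,g) - \hat V(s,g)| \;\leq\; |V^*(s,g) - f(\psi(s),\phi(g))| + |f(\psi(s),\phi(g)) - f(\psi(s),\phi_{\textup{\DAGR{}}}(g\mid s))|.
\]
The first term is bounded by $\varepsilon_\mathrm{base}$ directly from its definition as a supremum over $\cS\times\cG$.

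For the second term, both evaluations of $f$ share the same first argument $\psi(s)$. Only the second argument changes, so the Lipschitz hypothesis in the second argument applies with the first argument held fixed. This gives
\[
|f(\psi(s),\phi(g)) - f(\psi(s),\phi_{\textup{\DAGR{}}}(g\mid s))| \leq L_V\,\|\phi_{\textup{\DAGR{}}}(g\mid s)-\phi(g)\|_2 \leq L_V B,
\]
where the last step uses the uniform perturbation bound. I will state explicitly that the Lipschitz constant is understood with respect to $\|\cdot\|_2$ on $\RR^d$, uniformly in the first argument, which is the natural reading of the hypothesis. If a different norm were intended, the constant would absorb the norm-equivalence factor.

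Combining the two bounds, the pointwise estimate $|V^*(s,g)-\hat V(s,g)| \leq \varepsilon_\mathrm{base} + L_V B$ holds with a right-hand side independent of $(s,g)$. Taking the supremum over $(s,g)$ therefore yields \cref{eq:approx_bound}.

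There is no substantive obstacle here. The only point needing care is that the Lipschitz property must hold uniformly over the first argument $\psi(s)$, since $s$ ranges over all of $\cS$. I would make this uniformity explicit rather than leave it implicit.

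Finally, I would note that neither the architecture nor \cref{lem:info_preserve} is used. The gate enters only through $B$, which is what justifies the remark that at initialization $B\approx 0$ and the bound collapses to $\varepsilon_\mathrm{base}$.
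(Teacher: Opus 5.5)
Your proposal is correct and follows the paper's proof essentially step for step. You insert $f(\psi(s),\phi(g))$ via the triangle inequality, bound the first term by $\varepsilon_\mathrm{base}$ and the second by $L_V B$ using Lipschitzness in the second argument, then take the supremum over $(s,g)$.
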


\begin{proof}
Fix $(s, g)$ and write $V_\phi(s, g) = f(\psi(s), \phi(g))$. By the triangle inequality,
\[
|V^*(s, g) - \hat V(s, g)| \;\leq\; |V^*(s, g) - V_\phi(s, g)| + |V_\phi(s, g) - \hat V(s, g)|.
\]
The first term is bounded by $\varepsilon_\mathrm{base}$ by definition. For the second term, the Lipschitz assumption gives $|f(\psi(s), \phi(g)) - f(\psi(s), \phi_{\textup{\DAGR{}}}(g \mid s))| \leq L_V \, \|\phi(g) - \phi_{\textup{\DAGR{}}}(g \mid s)\|_2 \leq L_V \cdot B$. Taking the supremum over $(s, g)$ yields the claim.
\end{proof}

\begin{remark}[Behavior at initialization]
At the start of training, $\sigma(\alpha_0) = \sigma(-5) \approx 0.007$ and the fusion weights are uniform. The norm of the attention contribution is bounded by $\|\sigma(\alpha_0)\|_\infty \cdot \|W^O\|_\mathrm{op} \cdot \|V_h\|_\infty$, which makes $B_0 \approx 0$, and the additional error $L_V \cdot B_0$ is negligible. The bound grows during training only because the optimizer increases the gate values in response to reductions in the downstream losses, so $B$ grows only when it pays off.
\end{remark}

\subsection{Proof of \cref{prop:dagr_helps}}
\label{app:proof_dagr_helps}

\begin{proposition}[Restated]
Under the assumptions of \cref{prop:dagr_helps}, the data-dependent Rademacher upper bounds $U_n^{\textup{late}}$ and $U_n^{\DAGR{}}$ on the policy class satisfy
\[
U_n^{\DAGR{}} \;\leq\; L_h \cdot L_\rho \cdot \mathfrak{R}_n(\cH_\rho), \qquad U_n^{\textup{late}} \;\leq\; L_{\psi^{-1}} \cdot L_{\phi^{-1}} \cdot L_h \cdot L_\rho \cdot \mathfrak{R}_n(\cH_\rho),
\]
so that the \DAGR{} upper bound is tighter by a factor $L_{\psi^{-1}} L_{\phi^{-1}} \geq 1$.
\end{proposition}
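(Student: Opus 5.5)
The plan mirrors the proof of \cref{prop:late_fusion_failure}, with the subtraction map $s - g$ replaced by a general discrepancy map $h$. The argument compares two upper bounds, each built by writing the policy class as a composition and peeling off Lipschitz layers with the contraction (composition) property of Rademacher complexity~\citep{bartlett2002rademacher}:
\[
\mathfrak{R}_n(g_1 \circ g_2) \le L_{g_1} \cdot \mathfrak{R}_n(g_2)
\]
for an $L_{g_1}$-Lipschitz outer map $g_1$. Strictly this is the vector-contraction version, which in multivariate outputs may introduce a dimension-dependent constant. Since the same constant appears in both bounds, I absorb it into $\mathfrak{R}_n(\cH_\rho)$.

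\textbf{\DAGR{} side.} By the realizability assumption there is an $L_h$-Lipschitz $\tilde h$ with $\tilde h(\psi(s), \phi_{\textup{\DAGR{}}}(g \mid s)) = h(s, g)$. The \DAGR{} policy class can therefore be taken as
\[
\{\rho' \circ \tilde h \circ (\psi, \phi_{\textup{\DAGR{}}}) : \rho' \in \cH_\rho\}.
\]
I define $U_n^{\DAGR{}}$ as the bound obtained by applying the contraction step once for $\rho$ (factor $L_\rho$) and once for $\tilde h$ (factor $L_h$). Both factors are measured relative to the base complexity $\mathfrak{R}_n(\cH_\rho)$, which I read as the complexity of the class realizing $\rho$ evaluated on the induced sample. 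Multiplying the factors gives
\[
U_n^{\DAGR{}} \le L_h L_\rho \, \mathfrak{R}_n(\cH_\rho).
\]

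\textbf{Late-fusion side.} No $L_h$-Lipschitz function of $(\psi(s), \phi(g))$ realizes $h$, so a late-fusion realization must route through the raw inputs. I pick measurable selections $\psi^{-1}$ and $\phi^{-1}$ on the ranges of the encoders; these exist by a measurable-selection theorem under the standing measurability assumptions. I then write
\[
h(s, g) = h\bigl(\psi^{-1}(\psi(s)), \phi^{-1}(\phi(g))\bigr).
\]
This makes the late-fusion policy class $\rho \circ h \circ (\psi^{-1} \times \phi^{-1})$ applied to $(\psi(s), \phi(g))$. Peeling off the two inverse layers contributes the extra factors $L_{\psi^{-1}}$ and $L_{\phi^{-1}}$, so
\[
U_n^{\textup{late}} \le L_{\psi^{-1}} L_{\phi^{-1}} L_h L_\rho \, \mathfrak{R}_n(\cH_\rho).
\]

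\textbf{The factor $L_{\psi^{-1}}, L_{\phi^{-1}} \ge 1$.} To justify this inequality I use the paper's convention that the encoders are non-expansive, so any inverse must be expansive. Then the ratio of the two bounds is at least $L_{\psi^{-1}} L_{\phi^{-1}} \ge 1$. Under that convention the claimed tightening holds by construction.

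The main obstacle is making the bounds statements about $U_n$ rather than about the true $\mathfrak{R}_n$. The factorization through the inverse selections is only one of possibly many realizations, so the late-fusion bound is an upper bound along a chosen decomposition. The "no $L_h$-Lipschitz realization" hypothesis serves only to force such a detour. I would state this explicitly, in line with the caveat after \cref{prop:late_fusion_failure}, rather than try to prove a lower bound.
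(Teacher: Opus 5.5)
Your proposal follows the paper's proof essentially step for step: $\rho\circ\tilde h$ with factors $L_\rho L_h$ on the \DAGR{} side, the forced factorization $h\circ(\psi^{-1}\times\phi^{-1})$ through measurable inverse selections on the late-fusion side, $L_{\psi^{-1}},L_{\phi^{-1}}\ge 1$ from non-expansiveness of the encoders, and the same caveat that only upper bounds along a chosen decomposition are being compared. Two details need tidying. First, the contraction lemma you quote controls an \emph{outer} Lipschitz map, whereas $\tilde h$, $\psi^{-1}$, $\phi^{-1}$ sit \emph{inside} $\cH_\rho$; the paper uses the inner-map form $\mathfrak{R}_n(g_1\circ g_2)\le L_{g_2}\,\mathfrak{R}_n(g_1)$ instead, which implicitly needs the complexity of $\cH_\rho$ to scale with the radius of its inputs, so both arguments rest on that assumption. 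Second, $\psi^{-1}\times\phi^{-1}$ acts in parallel rather than in series, so its Lipschitz constant is $\max(L_{\psi^{-1}},L_{\phi^{-1}})$, which the paper then loosens to the product using that both factors are at least $1$.
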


\begin{proof}
By the discrepancy structure assumption, $\pi^*(a \mid s, g) = \rho(h(s, g))$ where $\rho$ is $L_\rho$-Lipschitz and $h$ admits an $L_h$-Lipschitz realization. Any policy realizing $\pi^*$ on the respective encoder inputs computes $\rho \circ h$ as a function of those inputs.

In the \DAGR{} case, $\tilde h(\psi(s), \phi_{\textup{\DAGR{}}}(g \mid s)) = h(s, g)$ with $\tilde h$ being $L_h$-Lipschitz by assumption, so the policy network realizes $\rho \circ \tilde h$. The composition property of Rademacher complexity~\citep{bartlett2002rademacher} gives
\[
U_n^{\DAGR{}} \;\leq\; L_h \cdot L_\rho \cdot \mathfrak{R}_n(\cH_\rho).
\]

In the late-fusion case, $(\psi, \phi)$ does not admit any $L_h$-Lipschitz realization of $h$ by assumption. Hence any $h^{\textup{late}} : \RR^{d_s} \times \RR^d \to \cZ$ with $h^{\textup{late}}(\psi(s), \phi(g)) = h(s, g)$ must factor through measurable inverse selections $\psi^{-1}, \phi^{-1}$, yielding $h^{\textup{late}} = h \circ (\psi^{-1} \times \phi^{-1})$. The Lipschitz constant of $h^{\textup{late}}$ is at most $L_h \cdot \max(L_{\psi^{-1}}, L_{\phi^{-1}})$, where $L_{\psi^{-1}}, L_{\phi^{-1}} \geq 1$ are the Lipschitz constants of the inverse selections (they are $\geq 1$ because $\psi, \phi$ are non-expansive reductions of dimension in any realistic offline GCRL encoder, hence their inverses are non-contracting). Applying the composition bound:
\[
U_n^{\textup{late}} \;\leq\; L_{\psi^{-1}} \cdot L_{\phi^{-1}} \cdot L_h \cdot L_\rho \cdot \mathfrak{R}_n(\cH_\rho).
\]
The ratio $U_n^{\textup{late}} / U_n^{\DAGR{}} = L_{\psi^{-1}} \cdot L_{\phi^{-1}} \geq 1$, with strict inequality whenever $\psi$ or $\phi$ strictly reduces dimension, which is the case for every encoder considered in this paper.
\end{proof}

\subsection{Proof of \cref{prop:dagr_neutral}}
\label{app:proof_dagr_neutral}

\begin{proposition}[Restated]
If the task is goal-only ($\pi^*(a \mid s, g) = \rho(\bar h(g))$) and the base representation $\phi$ is sufficient for $\pi^*$ with $\rho \circ \bar h \in \cF_{\textup{late}}$, then any value or policy loss minimizer in $\cF_{\textup{\DAGR{}}}$ that attains $\varepsilon_{\textup{base}}$ is attained at $B = 0$, equivalently at $\phi_{\textup{\DAGR{}}}(g \mid s) \equiv \phi(g)$.
\end{proposition}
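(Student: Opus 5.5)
The plan has three steps: show the base solution is feasible inside $\cF_{\textup{\DAGR{}}}$, show it already attains the minimal loss on a goal-only task, and then use $\varepsilon_{\textup{base}}$ to select, among the minimizers, the one with $B = 0$.

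\textbf{Step 1: realizability inclusion.} We first establish $\cF_{\textup{late}} \subseteq \cF_{\textup{\DAGR{}}}$, which the notation table attributes to this proof. Take $\alpha_{\textup{attn}}, \alpha_{\textup{ffn}} \to -\infty$ componentwise, so $\sigma(\alpha) \to 0$. Take $\lambda_h \to -\infty$, so $\zeta(\lambda_h) \to 0$. In this limit each residual perturbation $\delta_\ell(\phi, s)$ of \cref{lem:info_preserve} vanishes, because the attention and FFN outputs are bounded for fixed weights. Each $\mathrm{level}_\ell$ then tends to $\phi(g)$, and by \cref{eq:ms_fusion} the convex fusion also tends to $\phi(g)$, uniformly in $s$. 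Any late-fusion head $f(\psi(s),\phi(g))$ is therefore the limit of a \DAGR{} head with the same $f$, and $B \to 0$ in the sense of \cref{thm:approximation_bound}.

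This step has a caveat we must state. The LayerNorm in \cref{eq:gated_res_attn,eq:gated_res_ffn} is applied even when the gates are closed, so the closed-gate limit is $\mathrm{LN}(\mathrm{LN}(\phi(g)))$, not $\phi(g)$. This is the same injectivity gap flagged after \cref{thm:sufficiency}. We therefore assume $\phi$ takes values where LayerNorm acts as the identity (for example, pre-normalized embeddings), or we read ``$\phi_{\textup{\DAGR{}}} \equiv \phi$'' modulo this fixed normalization.

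\textbf{Step 2: the base attains the minimal loss.} Because the task is goal-only, $\pi^*(\cdot\mid s,g) = \rho(\bar h(g))$ and $\rho\circ\bar h \in \cF_{\textup{late}}$. By sufficiency of $\phi$ (\cref{def:sufficiency}), the closed-gate configuration from Step 1 realizes $\pi^*$ and attains error $\varepsilon_{\textup{base}}$. Since $\cF_{\textup{late}} \subseteq \cF_{\textup{\DAGR{}}}$, the infimum of any value or policy loss over $\cF_{\textup{\DAGR{}}}$ is at most its infimum over $\cF_{\textup{late}}$.

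For the reverse inequality, we argue that state-conditioning cannot lower the loss below the level $\pi^*$ achieves. The population loss is minimized exactly at $\pi^*$ and $V^*$, and these are already realized by the base. Any further state-dependent perturbation can at best tie, since the target does not depend on $(s,g)$ jointly.

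\textbf{Step 3: selecting $B=0$ among minimizers.} Here the statement's restriction to minimizers ``that attain $\varepsilon_{\textup{base}}$'' is used. \cref{thm:approximation_bound} gives only the upper bound $\varepsilon_{\textup{base}} + L_V B$, so it does not by itself force $B=0$. We would close the argument in one of two ways.
\begin{itemize}
\item Invoke an implicit tie-break: the minimum-gate solution among the equal-loss minimizers. This is the closed gate, equivalently $\alpha, \lambda_h \to -\infty$ as in the statement.
\item Add a small gate penalty, or note that gradients on a goal-only task carry no $s$-dependent signal. The gate logits then have no incentive to increase, and the infimum of $B$ over the minimizing set is $0$ and is attained in the limit.
\end{itemize}

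This step is the main obstacle. Nonzero perturbations that $f$ ignores (directions in the kernel of $f$'s dependence on its second argument) achieve the same loss with $B>0$. A strict uniqueness claim is therefore false without a tie-break. We would state honestly that the global minimum is attained at $B=0$, not only at $B=0$, and that it is realized by the closed-gate limit. This matches the wording ``is attained at gate parameters $\alpha,\lambda_h\to-\infty$''.
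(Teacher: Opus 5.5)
Your Step 1 is the paper's argument. Sending $\alpha_{\textup{attn}},\alpha_{\textup{ffn}}\to-\infty$ collapses each \DGCA{}-block, the convex fusion then returns the base embedding (up to the normalization you flag), and the late-fusion head $f^*$ attains $\varepsilon_{\textup{base}}$ at $B=0$.

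Your LayerNorm caveat is a real correction. The paper's proof asserts $\mathrm{out}=\phi(g)$ at closed gates, yet \cref{eq:gated_res_attn,eq:gated_res_ffn} normalize regardless of the gate. The paper itself concedes this after \cref{thm:sufficiency} and in \cref{sec:conclusion}.

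You part ways with the paper on uniqueness, and you are right to. The paper argues that every minimizer attaining $\varepsilon_{\textup{base}}$ has $\delta\equiv 0$, using $|V^*-f(\psi(s),\phi(g)+\delta)|\ge|V^*-f(\psi(s),\phi(g))|-L_V\|\delta\|_2$. But a lower bound that shrinks as $\|\delta\|_2$ grows is compatible with error exactly $\varepsilon_{\textup{base}}$ at every $\delta$. Forcing $\delta=0$ would need a bound of the form $\varepsilon_{\textup{base}}+c\|\delta\|_2$ with $c>0$.

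Your kernel counterexample is sound, and it does not even need a degenerate head. Suppose the refinement acted as $\phi\mapsto(1+c)\phi$. Then $f(x,z):=f^*(x,z/(1+c))$, obtained by rescaling first-layer weights, has the same error with $B=|c|\sup_g\|\phi(g)\|_2>0$. The existence reading you settle on is therefore the provable content, and it matches the main-text phrasing ``is attained at''.

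The one step I would not keep is the ``reverse inequality'' in Step 2. Goal-only is a condition on $\pi^*$, not on $V^*$. Since $V^*(s,g)=\gamma^{d^*(s,g)}$ is still a joint function of $(s,g)$, your claim that ``the target does not depend on $(s,g)$ jointly'' is false for the value loss. When $\varepsilon_{\textup{base}}>0$, nothing rules out an open-gate configuration in $\cF_{\textup{\DAGR{}}}$ that drives the value error strictly below $\varepsilon_{\textup{base}}$. Your argument is valid only for the policy loss, where $\rho\circ\bar h\in\cF_{\textup{late}}$ is realized exactly, or when $\varepsilon_{\textup{base}}=0$.

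You do not need this step. The statement restricts attention to minimizers that attain $\varepsilon_{\textup{base}}$. Take that as the hypothesis, as the paper does, and Step 1 alone delivers the conclusion. The second bullet of your Step 3 (no $s$-dependent gradient signal) fails for the same reason, so drop it in favour of the first.
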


\begin{proof}
By assumption there exists $f^* \in \cF_{\textup{late}}$ with $f^*(\psi(s), \phi(g)) = \rho(\bar h(g))$ and downstream value error $\varepsilon_{\textup{base}}$. We first show $f^*$ is realizable in $\cF_{\textup{\DAGR{}}}$. Setting all gate parameters $\alpha_{\textup{attn}}, \alpha_{\textup{ffn}} \to -\infty$ componentwise yields $\sigma(\alpha_{\textup{attn}}), \sigma(\alpha_{\textup{ffn}}) \to 0$ in \cref{eq:gated_res_attn,eq:gated_res_ffn}, so each \DGCA-Block reduces to the identity $\mathrm{out} = \phi(g)$. The multi-scale fusion of \cref{eq:ms_fusion} then yields $\phi_{\textup{\DAGR{}}}(g \mid s) = \phi(g)$ for all $(s, g)$, and $f^*(\psi(s), \phi_{\textup{\DAGR{}}}(g \mid s)) = f^*(\psi(s), \phi(g))$ achieves $\varepsilon_{\textup{base}}$. This corresponds to $B = 0$ in \cref{thm:approximation_bound}.

Now consider any \DAGR{} configuration with $B > 0$, that is, $\phi_{\textup{\DAGR{}}}(g \mid s) = \phi(g) + \delta(s, g)$ with $\|\delta\|_2 > 0$ for some $(s, g)$. By the goal-only assumption, $V^*(s, g) = f^*(\psi(s), \phi(g))$ depends on $g$ only through $\phi(g)$. For any $f \in \cF_{\textup{\DAGR{}}}$ that is $L_V$-Lipschitz in its second argument:
\[
|V^*(s, g) - f(\psi(s), \phi(g) + \delta(s, g))| \;\geq\; |V^*(s, g) - f(\psi(s), \phi(g))| - L_V \|\delta(s, g)\|_2.
\]
The first term on the right is at least $\varepsilon_{\textup{base}}$ in the worst case over $(s, g)$. Therefore any minimizer attaining error exactly $\varepsilon_{\textup{base}}$ must satisfy $L_V \|\delta(s, g)\|_2 = 0$ uniformly, that is, $\delta \equiv 0$ and $B = 0$.
\end{proof}
\begin{table}[h]
\vspace{-3pt}
\centering
\caption{Hyperparameters for all experiments.}
\label{tab:hyperparams}
\begin{tabular}{lc}
\toprule
\textbf{Hyperparameter} & \textbf{Value} \\
\midrule
\multicolumn{2}{l}{\textit{\DAGR{} Module (MS-\DGCA{})}} \\
Scale levels $L$ & 3 \\
Token counts $(T_1, T_2, T_3)$ & $(16, 8, 4)$ \\
Attention heads $H$ per level & 4 \\
Head dimension $d_k$ & 64 \\
Model dimension $d_m = H \cdot d_k$ & 256 \\
FFN hidden dimension & 256 \\
Gate initialization $\alpha_0$ & $-5$ \\
Difference scaling initialization $\lambda_0$ & $-5$ \\
Fusion logit initialization $w_0$ & $0$ \\
\DGCA{} blocks per scale & 1 \\
\midrule
\multicolumn{2}{l}{\textit{Dual Representation}} \\
Representation type & bilinear (inner product) \\
Goal representation dimension & 256 \\
Representation hidden dims & $(512, 512, 512)$ \\
Representation expectile (state) & 0.9 \\
Representation expectile (visual) & 0.7 \\
\midrule
\multicolumn{2}{l}{\textit{GCIVL Training}} \\
Learning rate & $3 \times 10^{-4}$ \\
Optimizer & Adam \\
Batch size (state / visual) & 1024 / 256 \\
Discount $\gamma$ & 0.99 \\
Target network update rate $\tau$ & 0.005 \\
Value expectile & 0.9 \\
AWR temperature $\beta$ & 10.0 \\
Training steps (state / visual) & $10^6$ / $5 \times 10^5$ \\
Seeds (state / visual) & 8 / 4 \\
\midrule
\multicolumn{2}{l}{\textit{Visual Encoder}} \\
Architecture & IMPALA-small \\
Stack sizes & $(16, 32, 32)$ \\
Residual blocks per stack & 1 \\
MLP hidden dims & $(512,)$ \\
Layer normalization & True \\
Image augmentation probability & 0.5 \\
\midrule
\multicolumn{2}{l}{\textit{Goal Sampling}} \\
Value: current state $p_{\textup{cur}}$ & 0.2 \\
Value: geometric future $p_{\textup{geom}}$ & 0.5 \\
Value: random $p_{\textup{rand}}$ & 0.3 \\
Actor: trajectory future $p_{\textup{traj}}$ & 1.0 \\
\midrule
\multicolumn{2}{l}{\textit{Value and Policy Networks}} \\
Hidden dimensions & $(512, 512, 512)$ \\
Activation & GELU \\
Layer normalization & True \\
\bottomrule
\end{tabular}
\vspace{-0.5cm}
\end{table}
\section{Experimental Details}
\label{app:exp_details}

\subsection{Environment Details}

We follow the experimental setup of \citet{park2026dual} on OGBench~\citep{park2025ogbench}. The locomotion suite uses PointMaze, AntMaze, and HumanoidMaze with medium, large, and giant variants of increasing layout complexity and path length. The manipulation suite uses a 6-DoF UR5e robot arm on Cube (Single requires placing one cube at a target, Double requires coordinating two), Scene (multi-object interaction with a cube, drawer, window, and button locks, where evaluation tasks chain up to eight atomic behaviors), and Puzzle (a Lights-Out variant in which the $3 \times 3$ grid admits $2^9 = 512$ button states and the $4 \times 4$ grid admits $2^{16} = 65{,}536$). Visual variants render the same tasks as $64 \times 64$ RGB images, with the arm made transparent and colors adjusted for full observability.

\subsection{Hyperparameters}
\cref{tab:hyperparams} lists all hyperparameters used for the \DAGR{} module, the Dual base representation, GCIVL training, the visual encoder, goal sampling, and the value and policy networks across all experiments reported in this paper.
\subsection{Network Architecture Details}

\textbf{Base representation network.}
The Dual representation uses a GCBilinearValue network with separate state ($\psi$) and goal ($\phi$) encoders, each consisting of a 3-layer MLP with hidden dimensions $(512, 512, 512)$, GELU activations, and layer normalization. The temporal distance is parameterized as $d(s, g) = \psi(s)^\top \phi(g)$ with output dimension $256$.

\textbf{MS-DGCA module.}
The module receives three inputs, namely the goal representation $\phi(g) \in \RR^{256}$, the state encoder output $\psi(s) \in \RR^{512}$ for visual tasks (or $\RR^{d_s}$ for state-based tasks), and the goal encoder output $\psi_g(g) \in \RR^{512}$. At each of the three scale levels, $\psi(s)$ and $\psi_g(g)$ are independently projected into $T_\ell$ pseudo-tokens of dimension $d_m = 256$ via separate learned linear projections. The difference map, attention, gated residual, and FFN then proceed as in \cref{sec:dgca}. The three scale outputs are combined through softmax-weighted fusion as in \cref{eq:ms_fusion}.

\textbf{Downstream networks.}
The value and actor are standard MLPs with hidden dimensions $(512, 512, 512)$. They receive the concatenation of $\psi(s)$ and the \DAGR{}-enhanced goal representation $\phi(g \mid s)$. The value network uses an ensemble of two heads. The actor outputs a Gaussian with constant standard deviation.

\subsection{Computational Resources and Overhead Measurements}

All experiments were conducted on NVIDIA A100 GPUs. State-based experiments require approximately three hours per seed and visual experiments approximately five hours per seed. Empirical overhead measurements taken on a single A100 are summarized below: per-step training time on the state-based Cube-Double agent increases from $17.1$ ms (Dual) to $31.4$ ms (\DAGR{}), and per-step training time on the visual Cube-Double agent increases from $47.6$ ms to $73.8$ ms. Inference time per call increases from $1.6$ ms to $4.0$ ms (state-based) and from $4.1$ ms to $5.9$ ms (visual). The overhead is non-trivial in relative terms but the absolute training budget remains practical.

\section{Additional Experimental Results}
\label{app:additional}
\subsection{Component Ablation}
\label{sec:ablation_components}

\begin{table}[t]
\centering
\begin{minipage}[t]{0.55\textwidth}
\centering
\caption{\textbf{Component ablation} on Cube-Double (state) and AntMaze-Large (state), 8 seeds.}
\label{tab:ablation_components}
\small
\setlength{\tabcolsep}{3pt}
\begin{tabular}{lcc}
\toprule
\textbf{Variant} & \textbf{Cube-Double} & \textbf{AntMaze-Large} \\
\midrule
\DAGR{} (full)                          & 35.4{\tiny$\pm$5.5} & \textcolor{orange}{\textbf{82.5}}{\tiny$\pm$4.5} \\
\midrule
w/o gated residual                      & 21.5{\tiny$\pm$3.9} & 59.3{\tiny$\pm$10.6} \\
w/o diff.\ bias ($\zeta(\lambda) = 0$) & 35.4{\tiny$\pm$3.0} & \underline{84.4}{\tiny$\pm$2.1} \\
w/o multi-scale ($L = 1$)               & \textcolor{orange}{\textbf{37.4}}{\tiny$\pm$5.5} & 72.4{\tiny$\pm$8.5} \\
w/o layer norm                          & \underline{36.4}{\tiny$\pm$10.8} & 80.5{\tiny$\pm$3.9} \\
w/o FFN                                 & 33.1{\tiny$\pm$2.5} & 57.7{\tiny$\pm$7.4} \\
$L = 2$ blocks per scale                & 31.0{\tiny$\pm$3.5} & \underline{85.1}{\tiny$\pm$1.9} \\
\bottomrule
\end{tabular}
\end{minipage}
\hfill
\begin{minipage}[t]{0.43\textwidth}
\centering
\caption{Sensitivity of \DAGR{} on Cube-Double-Play to token count $T$, attention heads $H$, and gate initialization $\alpha_0$.}
\label{tab:sensitivities}
\small
\setlength{\tabcolsep}{3pt}
\begin{tabular}{lc|lc|lc}
\toprule
$T$ & Succ. & $H$ & Succ. & $\alpha_0$ & Succ. \\
\midrule
1 & 38.5{\tiny$\pm$3.1} & 1 & \textcolor{orange}{\textbf{43.5}}{\tiny$\pm$5.0} & $-5$ & \textcolor{orange}{\textbf{36.8}}{\tiny$\pm$7.9} \\
4 & 37.4{\tiny$\pm$3.6} & 2 & \underline{39.8}{\tiny$\pm$5.1} & $-2$ & \underline{33.2}{\tiny$\pm$3.9} \\
8 & 36.4{\tiny$\pm$3.6} & 4 & 32.5{\tiny$\pm$8.2} & $0$ & 28.4{\tiny$\pm$2.8} \\
16 & \textcolor{orange}{\textbf{38.1}}{\tiny$\pm$4.1} & 8 & 33.1{\tiny$\pm$3.0} & & \\
32 & 37.2{\tiny$\pm$5.9} & & & & \\
\bottomrule
\end{tabular}
\end{minipage}
\vspace{-0.5cm}
\end{table}
\cref{tab:ablation_components} reports the full component ablation. \cref{sec:attribution} discusses it. Two entries are not covered there. Removing the multi-scale extension costs $10$ points on AntMaze-Large but slightly helps Cube-Double, and stacking two \DGCA{} blocks per scale offers no benefit over one.

\subsection{Sensitivity Analyses}
\label{sec:sensitivity}
\cref{tab:sensitivities} sweeps three parameters on Cube-Double. \cref{sec:attribution} discusses the gate initialization. The token count has no effect across the tested range, which is consistent with fine spatial decomposition not driving the manipulation behavior. The head count declines mildly, and $H = 1$ exceeds our default $H = 4$. We retain $H = 4$, because the configuration is held fixed across all twenty tasks and tuning it on the one task where the module underperforms would be the wrong trade. Our reported numbers therefore understate what per-task tuning would attain.

\subsection{Where the Module Looks: Attention and Gate Analysis}
\label{sec:lambda_analysis}
\begin{figure*}[t]
\centering
\begin{subfigure}[t]{0.48\textwidth}
    \centering
    \includegraphics[width=\textwidth]{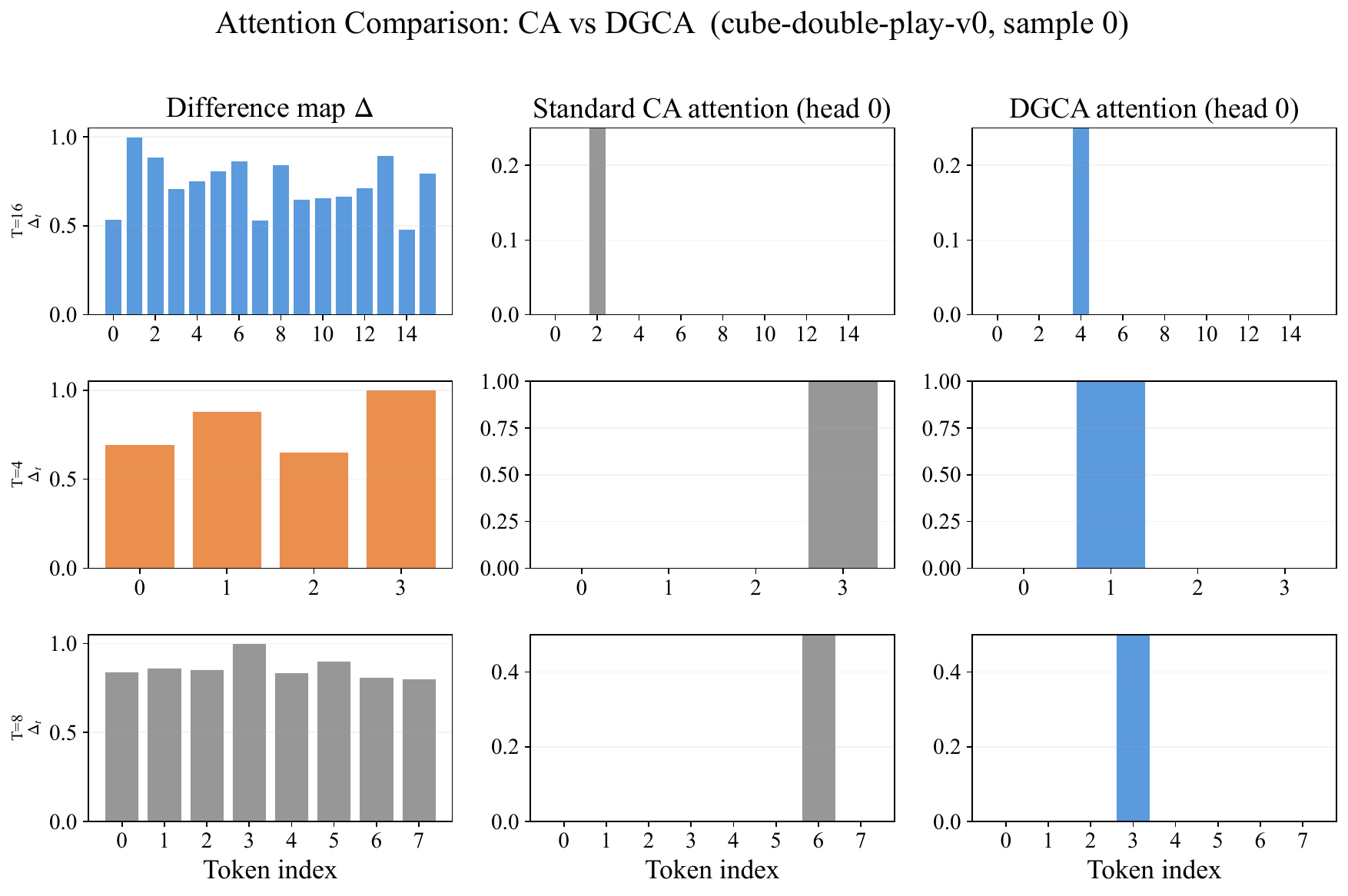}
    \caption{Sample 0}
\end{subfigure}
\hfill
\begin{subfigure}[t]{0.48\textwidth}
    \centering
    \includegraphics[width=\textwidth]{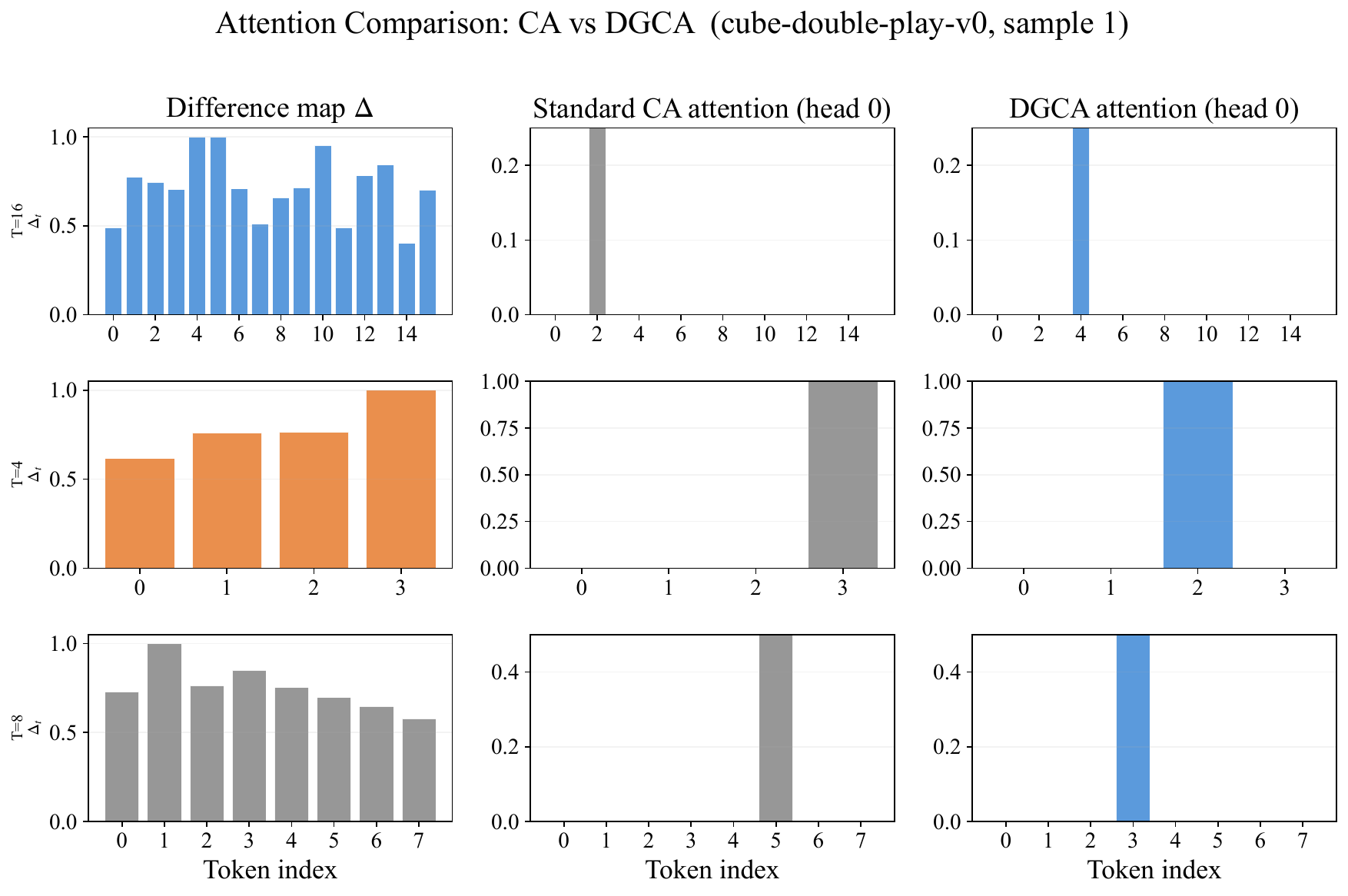}
    \caption{Sample 1}
\end{subfigure}

\vspace{4pt}

\begin{subfigure}[t]{0.48\textwidth}
    \centering
    \includegraphics[width=\textwidth]{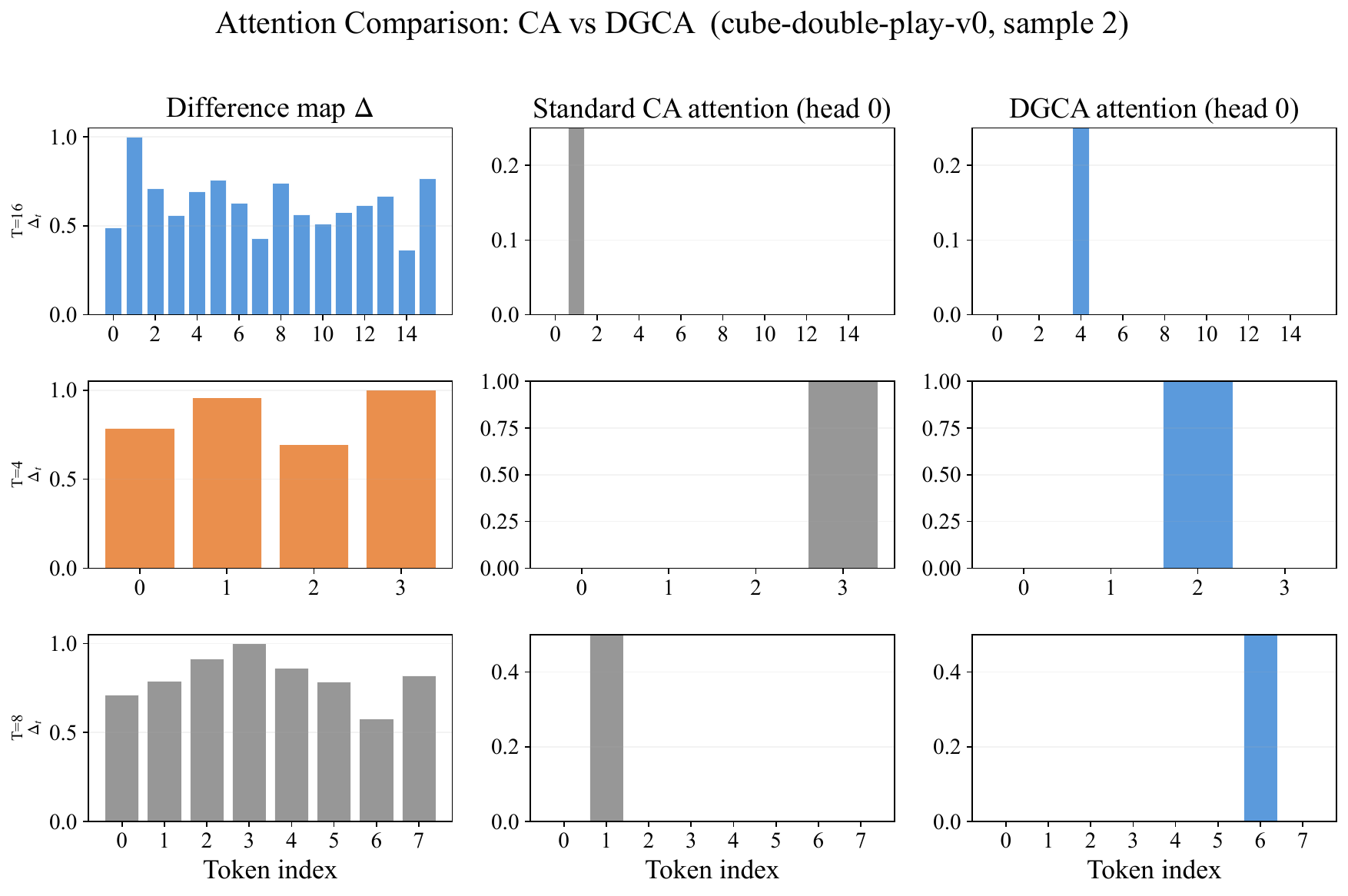}
    \caption{Sample 2}
\end{subfigure}
\hfill
\begin{subfigure}[t]{0.48\textwidth}
    \centering
    \includegraphics[width=\textwidth]{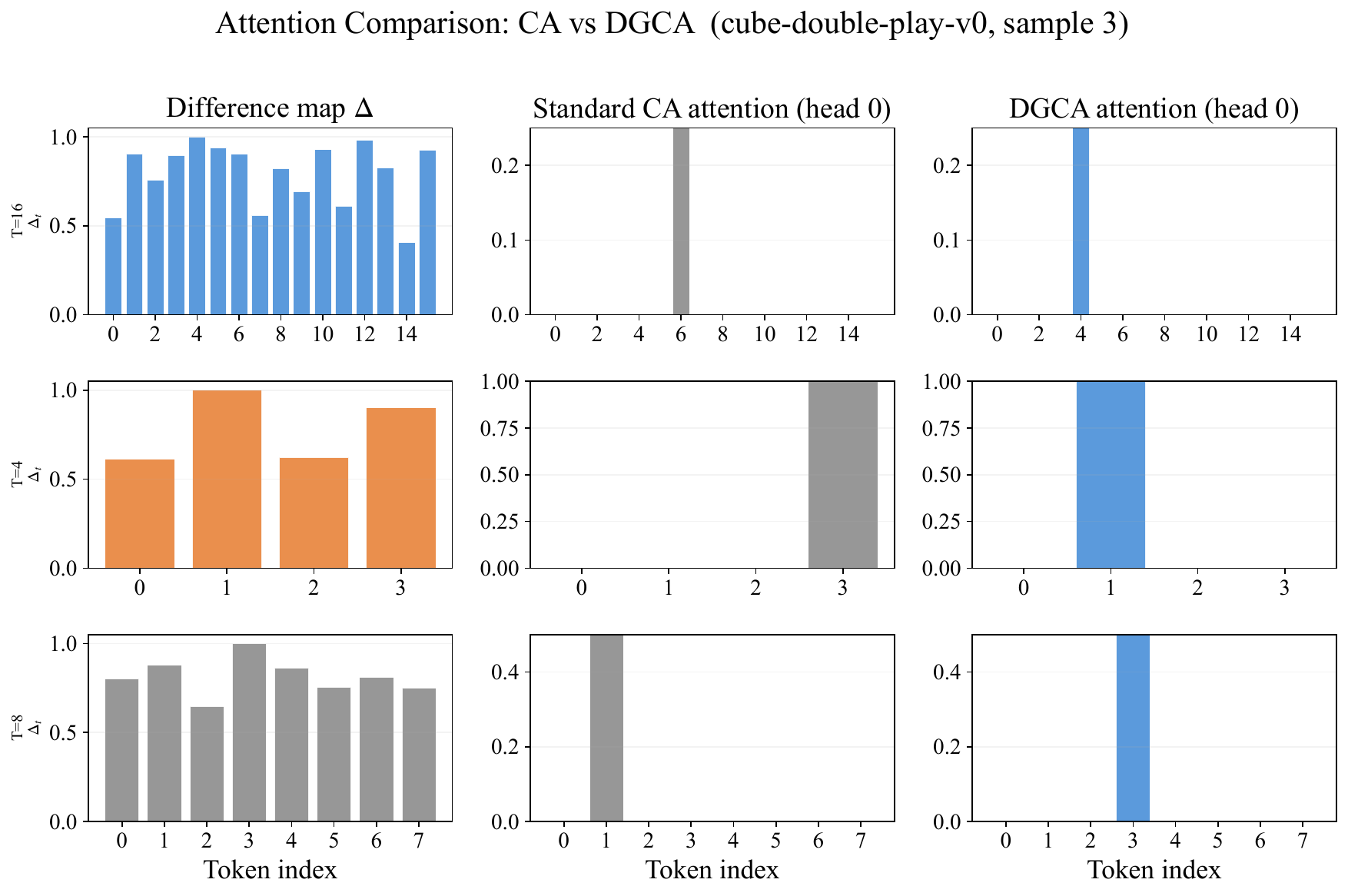}
    \caption{Sample 3}
\end{subfigure}
\caption{\textbf{Attention versus difference map on four Cube-Double samples.} Each panel shows three rows (token counts $T \in \{16, 4, 8\}$), with $\Delta$ in the first column (left bars), standard CA attention in the middle, and \DGCA{} attention on the right. Both CA and \DGCA{} place near-all attention mass on a single token. \DGCA{} chooses a different token than CA on every sample, but neither systematically aligns with the argmax of $\Delta$. On this manipulation task, attention concentration is sharp, but the choice of token is not driven by the difference map alone.}
\label{fig:attention}
\vspace{-0.5cm}
\end{figure*}

\begin{figure}[t]
\centering
\includegraphics[width=0.98\textwidth]{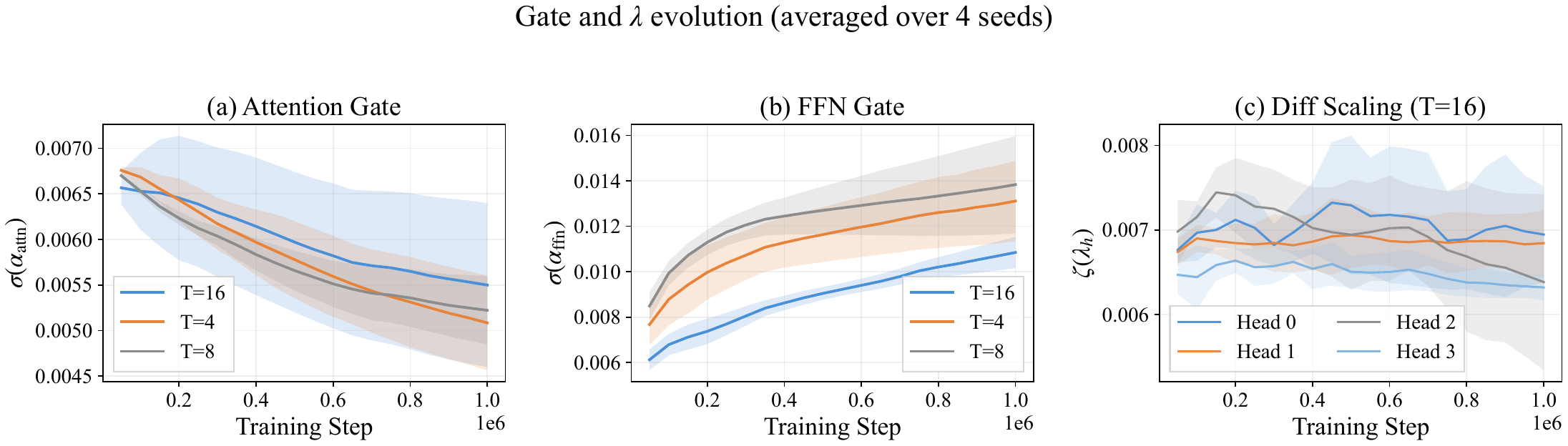}
\caption{\textbf{Gate and difference-scaling evolution on AntMaze-Large.} Both $\sigma(\alpha_\mathrm{attn})$ and $\sigma(\alpha_\mathrm{ffn})$ stay below a small threshold throughout the entire training run, and per-head $\zeta(\lambda_h)$ values remain within a narrow band. The state-conditioned refinement therefore operates as a small but consistent perturbation of $\phi(g)$ rather than as a wholesale replacement.}
\label{fig:gates}
\vspace{-0.5cm}
\end{figure}
\cref{fig:attention} compares the attention distributions of standard cross-attention and \DGCA{} on four Cube-Double samples against the corresponding difference maps. Both are nearly one-hot. \DGCA{} attends to a different token than cross-attention on every sample, so the bias does shift the attended position, yet neither method selects the argmax of $\Delta$. On this task the difference map therefore acts as a soft regularizer that displaces the head from the similarity-maximizing token, rather than as a literal pointer to the misaligned region.

\cref{fig:gates} plots the gate and $\zeta(\lambda)$ trajectories that \cref{sec:attribution} discusses.

\subsection{Why Visual-Puzzle Remains at Zero}
\label{sec:puzzle_analysis}

\begin{table}[t]
\centering
\caption{\textbf{Visual-Puzzle: representation-level methods are uniformly zero}, whereas early fusion alone (\citet{park2026dual}, Table~3) already achieves substantial success. Hierarchical planning (HIQL) is also far above zero.}
\label{tab:puzzle_comparison}
\begin{tabular}{lcc}
\toprule
\textbf{Method} & \textbf{Puzzle-3$\times$3} & \textbf{Puzzle-4$\times$4} \\
\midrule
Late Fusion (Dual) & 0 & 0 \\
Dual + CA & 0 & 0 \\
Dual + \DAGR{} & 0 & 0 \\
Early Fusion (Orig) & 22 & \textcolor{orange}{\textbf{65}} \\
HIQL~\citep{park2023hiql} & \textcolor{orange}{\textbf{73}} & 60 \\
\bottomrule
\end{tabular}
\vspace{-0.5cm}
\end{table}
\cref{tab:puzzle_comparison} supports the encoder-level account in \cref{sec:attribution}. Two remedies would apply. Cross-attention inside the CNN~\citep{huang2025early} preserves the correspondence that pooling destroys, and hierarchical planning sidesteps the requirement for it. Both are complementary to \DAGR{}, since they act upstream of $\phi$.

\subsection{Computational Overhead}
\label{sec:overhead}

MS-\DGCA{} roughly doubles per-step training time on Cube-Double and adds about 55\% on Visual-Cube-Double, with a similar relative increase at inference time. The absolute budget remains practical. Detailed timings are in \cref{app:exp_details}.

\subsection{Per-Task Learned $\zeta(\lambda)$}

\begin{table}[h]
\centering
\caption{Per-task average $\zeta(\lambda)$ (mean across $H = 4$ heads and $L = 3$ scales at the end of training, one seed per task). Initial value $\zeta(\lambda_0) = \zeta(-5) \approx 0.0067$.}
\label{tab:lambda_analysis}
\begin{tabular}{lc}
\toprule
\textbf{Task} & \textbf{Average $\zeta(\lambda)$} \\
\midrule
Navigation (AntMaze-Large) & 0.0066 \\
Single-Object Manipulation (Cube-Single) & 0.0227 \\
Multi-Object Manipulation (Cube-Double) & 0.0066 \\
Scene Arrangement & 0.0592 \\
Visual Navigation (Visual-AntMaze-Large) & 0.0074 \\
Visual Manipulation (Visual-Cube-Double) & 0.0069 \\
\bottomrule
\end{tabular}
\end{table}

The learned $\zeta(\lambda)$ values stay at or barely above initialization on five of six tasks, with the only substantial deviation on Scene. The naive reading that manipulation needs more difference bias than navigation is not supported: AntMaze-Large and Cube-Double share essentially the same value. This is consistent with the component ablation result that removing the difference bias does not eliminate the navigation gains. Practically, the role of $\Delta$ in MS-\DGCA{} is best read as a structured inductive bias on the cross-attention scoring at initialization rather than as a heavily optimized contribution at convergence.

\subsection{Per-Task Fusion Weights}

\begin{table}[h]
\centering
\caption{Learned fusion weights $\mathrm{softmax}(w)$ at the end of training. Fine $= T_1 = 16$, Medium $= T_2 = 8$, Coarse $= T_3 = 4$. Initial weights are uniform at $1/3 \approx 0.333$.}
\label{tab:fusion_weights}
\begin{tabular}{lccc}
\toprule
\textbf{Task} & \textbf{Fine ($T_1$)} & \textbf{Medium ($T_2$)} & \textbf{Coarse ($T_3$)} \\
\midrule
Navigation (AntMaze-Large) & 0.360 & 0.257 & 0.383 \\
Single-Object Manipulation (Cube-Single) & 0.423 & 0.216 & 0.361 \\
Multi-Object Manipulation (Cube-Double) & 0.317 & 0.312 & 0.371 \\
Scene Arrangement & 0.224 & 0.144 & 0.632 \\
Visual Navigation (Visual-AntMaze-Large) & 0.683 & 0.137 & 0.180 \\
Visual Manipulation (Visual-Cube-Double) & 0.374 & 0.328 & 0.299 \\
\bottomrule
\end{tabular}
\end{table}

Fusion weights tell a less clean story than the design intuition in \cref{sec:multiscale} alone would suggest. Scene drifts strongly toward the coarse scale, Visual Navigation toward the fine scale, and the remaining tasks stay close to uniform. We take this as evidence that the optimal analysis granularity is genuinely data-driven and that the role of the multi-scale architecture is to let the model select rather than to enforce a particular bias.

\subsection{Gate and $\lambda$ Evolution on Additional Tasks}

\begin{figure}[h]
\centering
\includegraphics[width=0.98\textwidth]{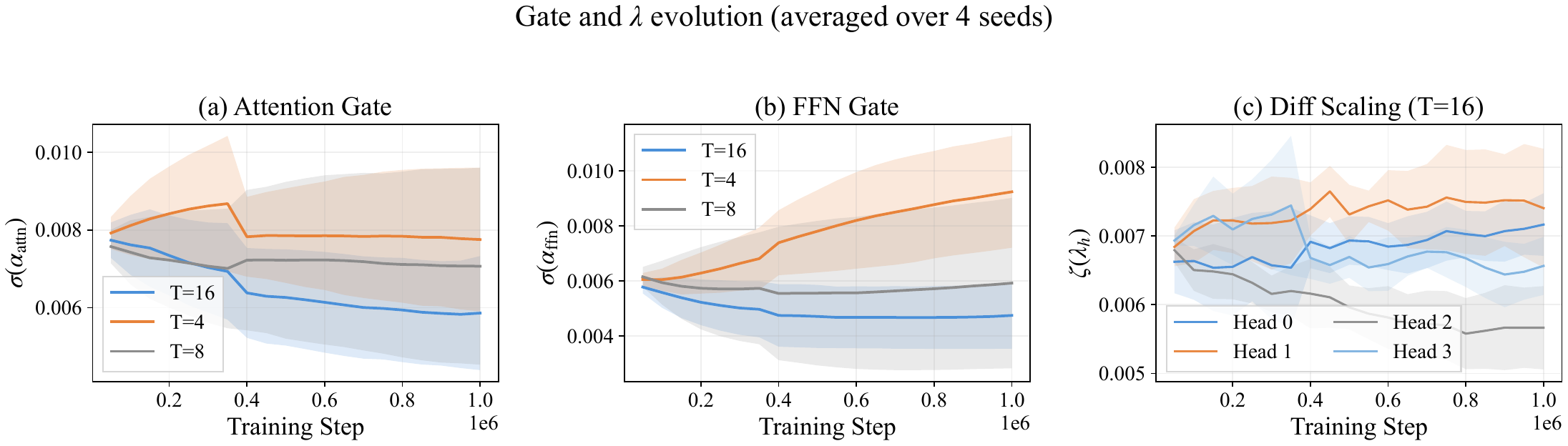}
\caption{Gate and difference-scaling evolution on Cube-Double. The same pattern as on AntMaze-Large (\cref{fig:gates}, main text) holds: all gates and per-head $\zeta(\lambda_h)$ values move only marginally from their initial values throughout training.}
\label{fig:gates_cube}
\end{figure}

\begin{figure}[h]
\centering
\includegraphics[width=0.98\textwidth]{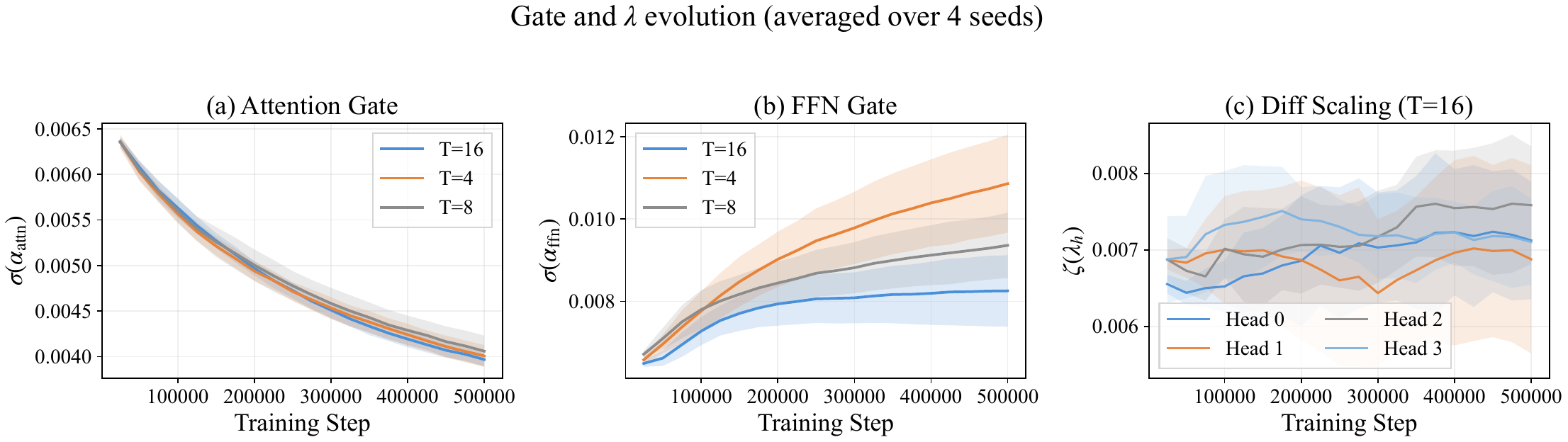}
\caption{Gate and difference-scaling evolution on Visual-Cube-Double. The pattern is qualitatively identical to the state-based case.}
\label{fig:gates_visualcube}
\end{figure}

\subsection{Learning Curves}
\label{sec:curves}

\begin{figure}[t]
\centering
\includegraphics[width=0.98\textwidth]{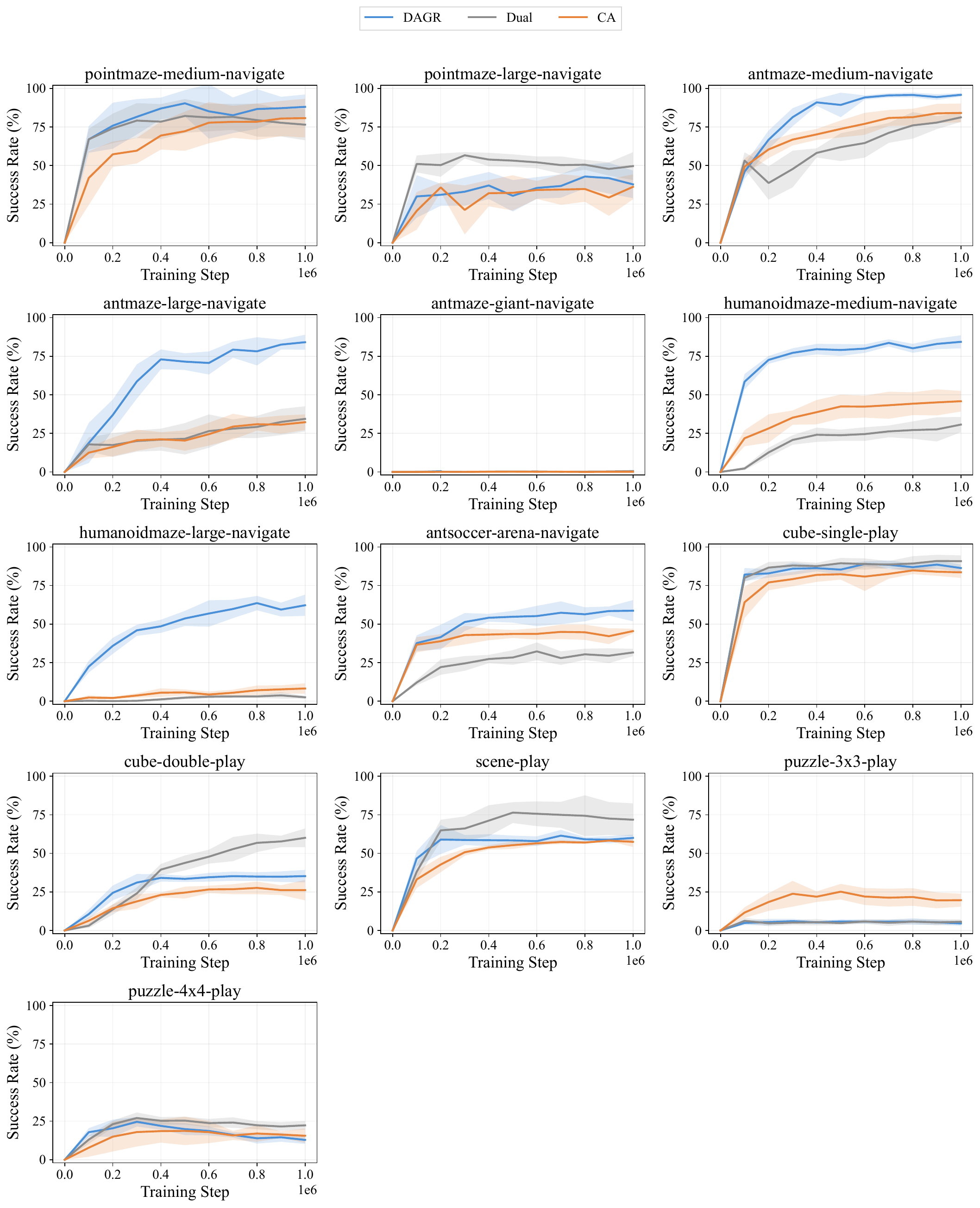}
\caption{\textbf{Learning curves on the thirteen state-based OGBench tasks.} \DAGR{} (red), Dual (gray), and standard CA (blue) trained for $10^6$ steps with four seeds; shaded band is the $95\%$ confidence interval. Navigation tasks (top two rows) show a clean separation between \DAGR{} and the two baselines that opens early in training and persists. Manipulation tasks (bottom-left) show no consistent ordering, with Dual ahead on Cube-Double and Scene, and puzzle tasks (bottom-right) stay near zero for all three methods.}
\label{fig:curves_state}
\end{figure}

\begin{figure}[t]
\centering
\includegraphics[width=0.98\textwidth]{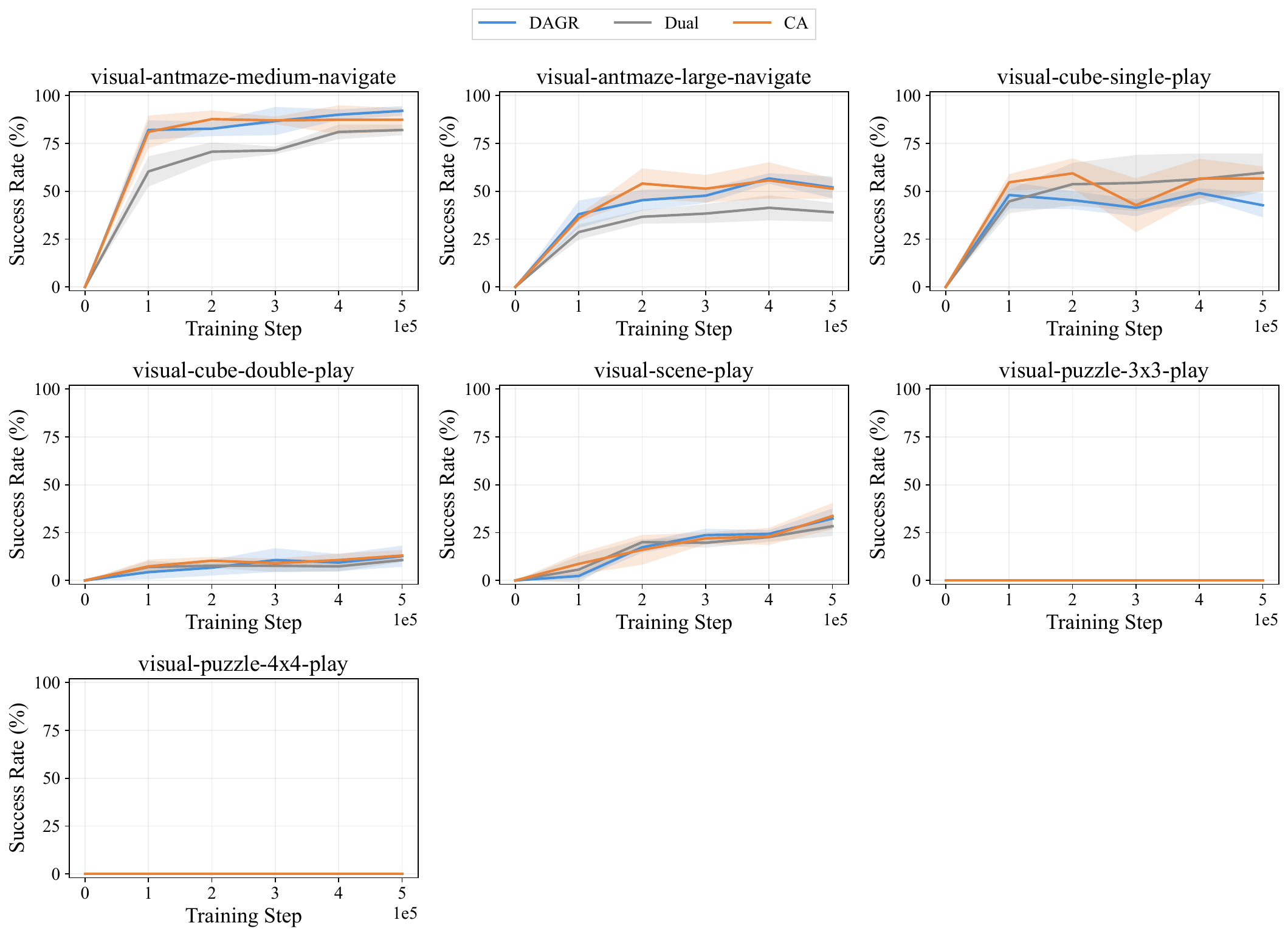}
\caption{\textbf{Learning curves on the seven visual OGBench tasks.} Same color scheme as \cref{fig:curves_state}, trained for $5 \times 10^5$ steps with four seeds. Visual-AntMaze (left two panels) reproduces the navigation pattern observed in the state-based setting, with \DAGR{} separating from both baselines within the first quarter of training. Visual manipulation panels track each other closely. Visual-Puzzle tasks (rightmost two panels) remain at zero for every method.}
\label{fig:curves_visual}
\end{figure}

\cref{fig:curves_state,fig:curves_visual} show the full training trajectories on the state-based and visual OGBench suites. The two views tell consistent stories, and together they sharpen the picture from the aggregate tables.

On the state-based suite (\cref{fig:curves_state}), the navigation panels share a clear shape. \DAGR{} departs from the two baselines early, typically within the first fifth of training, and the gap widens or stabilizes rather than closing. The most pronounced separation occurs on HumanoidMaze-large, AntMaze-large, and AntSoccer-arena, where the Dual baseline plateaus low and \DAGR{} continues to climb. Standard CA tracks Dual closely on most navigation tasks, sometimes with a modest lift. Introducing a state-goal interaction is therefore not sufficient on its own. What separates \DAGR{} from CA is the gated residual, as \cref{tab:ablation_components} shows, and the curves confirm that the separation opens from the very first evaluation rather than emerging late. On PointMaze-medium and PointMaze-large the state space is small enough that the late-fusion bottleneck does not bite, and all three methods converge to similar curves. The manipulation panels (Cube-Single, Cube-Double, Scene) show no consistent ordering among the three methods, with curves overlapping inside the confidence band. This matches the aggregate table and is now shown to be stable across training rather than an artifact of the final checkpoint. The puzzle panels (Puzzle-3x3, Puzzle-4x4) stay flat near zero for all three methods throughout training, with no sign of a late breakthrough. We attribute this outcome to encoder-level rather than representation-level limitations in \cref{sec:puzzle_analysis}.

On the visual suite (\cref{fig:curves_visual}), the navigation behavior of \cref{fig:curves_state} reproduces: on both Visual-AntMaze variants, \DAGR{} pulls away from Dual and CA within the first quarter of training and the separation persists to the end. Visual manipulation curves cluster tightly, with no clear separation in either direction across the three methods, again consistent with the aggregate result. Visual-Puzzle remains at zero throughout training for every method, which is the strongest evidence we have that the bottleneck on these tasks is not something a goal-representation refinement, however structured, can fix.

Two cross-cutting observations emerge from comparing the two figures. First, the gains from \DAGR{} appear early and grow with training rather than emerging only at convergence. This is consistent with the gated-residual design that opens slowly. As the gates lift and $B$ in \cref{thm:approximation_bound} grows, the value approximation gains room to improve when the state-conditioned signal actually helps. Second, where \DAGR{} does not improve over Dual the behavior splits into two regimes. On the navigation tasks with small state spaces the three curves overlap. On Cube-Double and Scene \DAGR{} settles below Dual from early in training and stays there, which rules out a late-stage optimization artifact and points at the architectural cause identified in \cref{sec:conclusion}.

\end{document}